\documentclass{article} % For LaTeX2e
\usepackage{iclr2024_conference,times}

% Optional math commands from https://github.com/goodfeli/dlbook_notation.
%\input{math_commands.tex}

\title{Horizon-Free Regret for \\ Linear Markov Decision Processes}

% Authors must not appear in the submitted version. They should be hidden
% as long as the \iclrfinalcopy macro remains commented out below.
% Non-anonymous submissions will be rejected without review.
\iclrfinalcopy

\iffalse
\author{%
	Zihan Zhang\thanks{Department of Electrical and Computer Engineering, Princeton University; email: \texttt{\{zz5478,jasonlee\}@princeton.edu}.}, 
Jason D.~Lee\footnotemark[1], 
 Yuxin Chen\thanks{Department of Statistics and Data Science, University of Pennsylvania; email: \texttt{yuxinc@wharton.upenn.edu}.}, 
 Simon S.~Du\thanks{Paul G. Allen School of Computer Science and Engineering, University of Washington; email: \texttt{ssdu@cs.washington.edu}.}
}
\fi

\author{Zihan Zhang\thanks{ Department of Electrical and Computer Engineering, Princeton University; email: \texttt{\{zz5478,jasonlee\}@princeton.edu}.}\And Jason D.~Lee\footnotemark[1]\And Yuxin Chen\thanks{Department of Statistics and Data Science, University of Pennsylvania; email: \texttt{yuxinc@wharton.upenn.edu}.}\And 
 Simon S.~Du\thanks{Paul G. Allen School of Computer Science and Engineering, University of Washington; email: \texttt{ssdu@cs.washington.edu}.}
}

% The \author macro works with any number of authors. There are two commands
% used to separate the names and addresses of multiple authors: \And and \AND.
%
% Using \And between authors leaves it to \LaTeX{} to determine where to break
% the lines. Using \AND forces a linebreak at that point. So, if \LaTeX{}
% puts 3 of 4 authors names on the first line, and the last on the second
% line, try using \AND instead of \And before the third author name.

\usepackage[utf8]{inputenc} % allow utf-8 input
\usepackage[T1]{fontenc}    % use 8-bit T1 fonts
\usepackage{hyperref}       % hyperlinks
\usepackage{url}            % simple URL typesetting
\usepackage{booktabs}       % professional-quality tables
\usepackage{amsfonts}       % blackboard math symbols
\usepackage{nicefrac}       % compact symbols for 1/2, etc.
\usepackage{microtype}      % microtypography
\usepackage{xcolor}         % colors
\usepackage{amsmath}
\usepackage{amsthm}

\usepackage{mathtools}
\usepackage{graphicx}
\usepackage{amssymb}
\usepackage{bbm}
\usepackage{xpatch}
\usepackage{mathrsfs}

\usepackage{url}
\usepackage{array}
\usepackage{wrapfig}
%\usepackage{multirow}
%\usepackage{tabularx}
%\renewcommand{\arraystretch}{1.8}
%% macros for commenting
\usepackage[normalem]{ulem} % to use \sout
\usepackage{enumerate}
\usepackage{enumitem}
%\usepackage[usenames,dvipsnames]{xcolor}
%\usepackage{mathtools}
%\mathtoolsset{showonlyrefs}
%\usepackage{booktabs}       % professional-quality tables

\usepackage{footnote}

\newtheorem{lemma}{Lemma}
\newtheorem{theorem}{Theorem}

\newtheorem{assumption}{Assumption}

\newtheorem{example}{Example}

\def\vkh{$V_h^k$}

\def\vkh1{$V_{h+1}^k$}
\def\vkh1'{$V_{h+1}^k(s')$}

\def\ovh{$V_{h}^*(\cdot)$}

\def\ovh1{$V_{h+1}^*$}
\def\ovh1'{$V_{h+1}^*(s')$}

\def\tvkh{$ \tilde{V}_h^k $}

\def\tvkh1{ $\tilde{V}_{h+1}^k$ }
\def\tvkh1'{$\tilde{V}_{h+1}^k(s')$}

\def\pkhs'{$P_{s_h^k,a_h^k,s'} $}

\def\hpksa{$\hat{P}^k_{s,a}$}
\def\hpksa'{$\hat{P}^k_{s,a,s'}$}
\def\hpkh{$\hat{P}^k_{s_h^k,a_h^k}$}
\def\hpkh'{$ \hat{P}^k_{s_h^k,a_h^k,s'} $}

\newcommand{\abs}[1]{\left|#1\right|}

\usepackage{algorithm}% http://ctan.org/pkg/algorithms
\usepackage{algorithmic}
%\usepackage{algpseudocode}% http://ctan.org/pkg/algorithmicx
%\renewcommand{\algorithmicrequire}{\textbf{Input:}}
%\renewcommand{\algorithmicensure}{\textbf{Output:}}
%\newcommand{\UCBE}{{\sc UCB-Empirical}\xspace}
%Monotonic-Variance-Propagation or Monotonic-Value-Propogation

\usepackage{pifont}

\usepackage{hhline}
\usepackage{makecell}

\newtheorem{claim}{Claim}

\usepackage{times}
\usepackage{xcolor}
\usepackage{array}
\usepackage{hyperref}
\usepackage{url}

\usepackage{multirow}

\usepackage{color}
\definecolor{yxc}{RGB}{255,0,0}

\newcommand{\mymid}{\,|\,}

\begin{document}

\maketitle

\begin{abstract}

A recent line of works showed regret bounds in reinforcement learning (RL) can be (nearly) independent of planning horizon, a.k.a.~the horizon-free bounds.
However, these regret bounds only apply to settings where a polynomial dependency on the size of transition model is allowed, such as tabular Markov Decision Process (MDP) and linear mixture MDP.
We give the first horizon-free bound for the popular linear MDP setting where the size of the transition model can be exponentially large or even uncountable. 
In contrast to prior works which explicitly estimate the transition model and compute the inhomogeneous value functions at different time steps, we directly estimate the value functions and confidence sets. 
We obtain the horizon-free bound by: (1) maintaining \emph{multiple} weighted least square estimators for the value functions; and (2) a structural lemma which shows the maximal total variation of the \emph{inhomogeneous} value functions is bounded by a polynomial factor of the feature dimension.

% Horizon-free episodic reinforcement learning (RL) has been widely studied in recent years. 
%In the case of tabular MDP and linear mixture MDP, researchers show that there exists computationally efficient algorithms to achieve horizon-free regret bounds. However, it is still unknown whether episodic linear MDP could be efficiently learned without polynomial dependence on the horizon.

%In this paper, we consider to learn a linear MDP with dimension  $d$, horizon  $H$, confidence parameter $\delta$, and total number of episodes $K$ , and propose a sample-efficient algorithm to achieve regret bound of $O(d^{5.5}\sqrt{K}+d^{6.5})\cdot \mathrm{polylog}(d,K,H,1/\delta)$. This is the first sub-linear regret bound with only \emph{logarithmic} dependence on $H$ for linear MDP.

%In technique, our contribution are two-fold: (1) we show that the maximal total variation of the \emph{inhomogeneous} value functions 
%$\sum_{h=1}^H \|V^*_h-V^*_{h+1}\|_{\infty}$ i
%can be bounded by $\mathrm{poly}(d)$ where $d$ is the dimension of the feature; (2) we maintain multiple weighted least square estimators for the \emph{inhomogeneous} value functions, and manage to bound the maximal total variance with respect to these value functions. 

%Final regret bound $\tilde{O}(d^{15/4}\sqrt{K})$.

\end{abstract}

%\begin{keywords}%
 % Horizon-free reinforcement learning; online learning; linear function approximation
%\end{keywords}

\section{Introduction}

% rl

In reinforcement learning (RL), an agent learns to interact with an unknown environment by observing the current states and taking actions sequentially. The goal of the agent is to maximize the accumulative reward. In RL, the sample complexity describes the number of samples needed to learn a near-optimal policy.

% function approximation, % gap
It has been shown that the sample complexity needs to scale with the state-action space for the tabular Markov Decision Process (MDP)~\citep{domingues2021episodic}.
When the state-action space is large, function approximation is often used to generalize across states and actions. One popular model is linear MDP where the transition model is assumed to be low-rank~\citep{yang2019sample,jin2019provably}.
We denote by $d$ the rank (or feature dimension) and the sample complexity will depend on $d$ instead of the size of the state-action space.
 
However, there remains a gap between our theoretical understanding of tabular MDPs and linear MDPs. For tabular MDPs, a line of works give horizon-free bounds, i.e., the sample complexity can be (nearly) independent of the planning horizon~\citep{wang2020long,zhang2020reinforcement}. More recently, the horizon-free bounds were also obtained in linear mixture MDP where the underlying MDP could be presented by a linear combination of $d$ known MDPs~\citep{ayoub2020model,modi2020sample,jia2020model}.\textbf{}
A natural theoretical question is:
\begin{center}
\textbf{\emph{Can we obtain horizon-free regret bounds for linear MDPs?}}
\end{center}

% why linear MDP is harder: 
One major technical challenge in obtaining horizon-free bounds is that the value function is inhomogeneous, i.e., for an MDP with planning horizon $H$, the optimal value functions for each time step $\{V^*_h\}_{h=1}^H$ can vary across $h=1,\ldots,H$.
For tabular MDPs and linear mixture MDPs, one can resolve this challenge by first estimating the transition kernel and then  computing the value function based on the learned model.
The sample complexity will then scale with the size of the transition kernel, which is homogeneous and does not depend on $H$.
For tabular MDPs and linear mixture MDPs, the dependence on the model size is allowed.\footnote{For linear mixture MDP, the model size scales linearly with the feature dimension.}
Unfortunately, this approach cannot be readily applied to handle linear MDPs because the model size of a linear MDP scales with the size of the state space, which can be exponentially large or even uncountable.

\paragraph{Contributions.}  In this paper, we answer the above question affirmatively by establishing a regret bound of  $\widetilde{O}(\mathrm{poly}(d)\sqrt{K})$ for linear MDPs. Formally we have the result below.
\begin{theorem}\label{thm:main} Choose $\mathtt{Reward-Confidence}$ as VOFUL (see Algorithm~\ref{alg:voful}). For any MDP satisfying the total-bounded reward assumption (Assumption~\ref{assumr}) and  linear MDP assumption (see Assumption~\ref{assuml}), then
with probability $1-\delta$, the regret of Algorithm~\ref{alg:main} is bounded by $\widetilde{O}(d^{5.5}\sqrt{K}+ d^{6.5})$, where logarithmic factors of $(d, K, H,1/\delta)$ are hidden by the $\widetilde{O}(\cdot )$ parameter.
\end{theorem}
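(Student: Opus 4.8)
The plan is to keep the optimism-based regret decomposition but to replace the usual ``estimate the transition kernel, then plan'' step with a direct argument on the value-function confidence sets produced by the $\mathtt{Reward-Confidence}$ (VOFUL) subroutine, and then to extract the horizon-free rate from a law-of-total-variance argument sharpened by the structural lemma on inhomogeneous value functions. Under the linear MDP assumption (Assumption~\ref{assuml}) each $Q_h^*(s,a)=\langle\phi(s,a),w_h^*\rangle$ is linear in the feature map, and the algorithm's running estimate $Q_h^k$ comes from a (weighted) linear regression whose targets are the realized returns-to-go $r_h^k+V_{h+1}^k(s_{h+1}^k)$. First I would set up the ``good event'': by a union bound over episodes $k\le K$, stages $h\le H$, and the finitely many weighted least-squares estimators the algorithm maintains, every VOFUL confidence set simultaneously contains the relevant coefficient vector; on this event the usual inductive argument yields optimism, $Q_h^k(s,a)\ge Q_h^*(s,a)$ for all $(s,a,h,k)$, hence $V_1^k(s_1^k)\ge V_1^*(s_1^k)$. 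The reason \emph{multiple} estimators are needed is that the regression noise $V_{h+1}^k(s_{h+1}^k)-[PV_{h+1}^k](s_h^k,a_h^k)$ is heteroscedastic with conditional variance $\mathrm{Var}[V_{h+1}^k(s_{h+1}^k)\mymid s_h^k,a_h^k]$, and neither this variance nor the scale of $V_{h+1}^k$ is known a priori; maintaining one estimator per scale and feeding the scale-dependent weights into VOFUL, which is tailored to exactly this heteroscedastic regime, is what makes variance-adaptivity possible.

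Next, on the good event $\mathrm{Regret}(K)=\sum_k\big(V_1^*(s_1^k)-V_1^{\pi^k}(s_1^k)\big)\le\sum_k\big(V_1^k(s_1^k)-V_1^{\pi^k}(s_1^k)\big)$, and unrolling this difference along the executed trajectory $\{(s_h^k,a_h^k)\}_{h=1}^H$ rewrites it as $\sum_{k,h}(\text{per-step confidence width at }(s_h^k,a_h^k))$ plus a martingale-difference sum. The martingale term I would control with a Freedman-type inequality whose leading contribution is again $\widetilde{O}\big(\sqrt{\sum_{k,h}\sigma_{k,h}^2}\big)$, with $\sigma_{k,h}^2$ the conditional variance of the next-step value; the per-step width has the (variance-weighted) elliptical form $\sqrt{\beta}\,\big\|\phi(s_h^k,a_h^k)\big\|_{(\Lambda_h^k)^{-1}}$, where $\beta=\mathrm{poly}(d)\cdot\mathrm{polylog}(d,K,H,1/\delta)$ is the VOFUL confidence radius and $\Lambda_h^k$ the weighted Gram matrix. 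Summing with Cauchy--Schwarz, $\sum_{k,h}(\text{width})\lesssim\sqrt{\beta}\cdot\sqrt{\big(\sum_{k,h}\sigma_{k,h}^2\big)\big(\sum_{k,h}\|\phi(s_h^k,a_h^k)\|^2_{(\Lambda_h^k)^{-1}}\big)}$, and the elliptical potential lemma bounds the second factor by $\widetilde{O}(d)$. So everything reduces to controlling the cumulative variance $\sum_{k,h}\sigma_{k,h}^2$.

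This cumulative-variance bound is the crux, and where I expect the real work to be. Per episode, the law of total variance for MDPs gives $\sum_h\mathrm{Var}\big[V_{h+1}^{\pi^k}(s_{h+1})\mymid s_h,a_h\big]\le(\text{range of the total reward})^2=O(1)$ under Assumption~\ref{assumr} — this is the mechanism behind horizon-freeness, since the bound is a constant rather than $O(H)$. The difficulty is that the variances appearing above are those of the \emph{optimistic, inhomogeneous} value functions $V_{h+1}^k$ across all maintained estimators, not of the on-policy $V_{h+1}^{\pi^k}$. Bridging this gap is exactly what the structural lemma buys: it shows the total variation of the inhomogeneous sequence $h\mapsto V_h^*$ (and, after the optimism step, of the $V_h^k$) is only $\mathrm{poly}(d)$, so the $H$ distinct value functions can be replaced, up to a $\mathrm{poly}(d)$ additive error, by a bounded-cardinality family of near-optimal surrogates — which both keeps the number of estimators (hence the union-bound cost) at $\mathrm{poly}(d)$ and lets one transfer the $O(1)$ on-policy variance bound to $\sum_{k,h}\sigma_{k,h}^2\le\mathrm{poly}(d)\cdot K$, with \emph{no} $H$ dependence.

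Finally I would collect the pieces: $\mathrm{Regret}(K)\lesssim\sqrt{\beta\cdot d\cdot\mathrm{poly}(d)\cdot K}+\beta\cdot\mathrm{poly}(d)$, and tracking the powers (the $d^{4.5}$-type radius of VOFUL, the extra $d$ from the potential lemma, and the $\mathrm{poly}(d)$ from the structural lemma and the union over estimators) yields the stated $\widetilde{O}(d^{5.5}\sqrt{K}+d^{6.5})$. Beyond the cumulative-variance bound, the subtle points I anticipate are: (i) the regression bias from targeting $V_{h+1}^k$ rather than $V_{h+1}^*$, which must be absorbed into the confidence width without inflating the variance bound; (ii) making the variance estimates used for the weights self-consistent — a recursive/bootstrapping argument, since the weights depend on second-moment estimates that themselves require their own regressions and confidence sets; and (iii) the accounting that keeps all union-bound and net cardinalities at $\mathrm{poly}(d)\cdot\mathrm{polylog}(\cdot)$ rather than $\mathrm{poly}(H)$, so that no step reintroduces horizon dependence.
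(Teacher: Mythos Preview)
Your high-level skeleton (good event $\Rightarrow$ optimism $\Rightarrow$ regret $\le$ sum of confidence widths $+$ martingale, then Cauchy--Schwarz $+$ elliptical potential $+$ total-variance bound) is the right shape, but there is a genuine gap at the step where you invoke the elliptical potential lemma, and this is precisely the obstacle the paper is built around.

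First a structural correction: in Algorithm~\ref{alg:main} VOFUL is used \emph{only} for the reward parameter $\theta_r$. The transition part is handled by the weighted least-squares routine $\mathtt{HF\text{-}Estimator}$ (Algorithm~\ref{alg:vlambdae}), run once for every $v$ in an $\epsilon$-net $\mathcal{W}_\epsilon$ of candidate value functions. So ``multiple estimators'' means one estimator per $v\in\mathcal{W}_\epsilon$, not one per noise scale, and the per-step width for the transition error is $\alpha\,\|\phi_h^k\|_{(\Lambda^k(\bar V_{h+1}^k))^{-1}}$ where $\Lambda^k(v)$ is the weighted Gram matrix \emph{associated with that particular value function} $v$.

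The gap is in your sentence ``the elliptical potential lemma bounds the second factor by $\widetilde O(d)$''. This would be fine if all steps $h$ shared the same Gram matrix, but here $\Lambda^k(\bar V_{h+1}^k)$ depends on $h$ through $\bar V_{h+1}^k$, and the increment $\Lambda^{k+1}(v)-\Lambda^k(v)=\sum_{h'}\phi_{h'}^k(\phi_{h'}^k)^\top/\sigma_{h'}^k(v)^2$ uses weights computed from the \emph{same} $v$, whereas the widths you need to sum use a \emph{different} $v$ at each step $h$. So the potential-lemma telescoping fails. The paper gives a toy example showing that the naive fix --- replacing all the $\sigma_{h'}^k(v)^2$ by a uniform upper bound $\max_{h}\sigma_{h'}^k(V_{h+1}^*)^2$ --- can make the resulting $\sum_{k,h}\max_{h'}\mathbb{V}(P_{s_h^k,a_h^k},V_{h'+1}^*)$ blow up, so your law-of-total-variance argument does not directly apply either.

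The paper's fix has two pieces you are missing. (i) \emph{Doubling segments}: partition $[H]$ into $O(\log H)$ blocks $\mathcal{H}_i=\{H-H/2^{i-1}+1,\dots,H-H/2^i\}$; within each block use a single uniform Gram matrix $\bar\Lambda^k_{(i)}$ (so the potential lemma applies block-by-block), and pay only for the variance restricted to that block. (ii) \emph{Max--sum exchange} (Lemma~\ref{lemma:tool5}): because $\mathbb{V}(P_{s,a},v)=\phi(s,a)^\top\theta(v^2)-(\phi(s,a)^\top\theta(v))^2$ is linear in the rank-one matrix $[\phi;1][\phi;1]^\top\in\mathbb{R}^{(d+1)\times(d+1)}$, one has $\sum_{k}\sum_{h\in\mathcal{H}_i}\max_{v\in\mathcal{V}_i}\mathbb{V}(P_{s_h^k,a_h^k},v)\le 2(d+1)^2\max_{v\in\mathcal{V}_i}\sum_{k}\sum_{h\in\mathcal{H}_i}\mathbb{V}(P_{s_h^k,a_h^k},v)$. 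The single-$v$ total variance on the right is then bounded (Lemma~\ref{lemma:variance_bound}) using the contraction bound $\|V_h^k-V_{h+1}^k\|_\infty\le 2d/(H-h+1)$ (Lemma~\ref{lemma:tool3}), and the block lengths are chosen so that $|\mathcal{H}_i|\cdot\|V_h^k-V_{h+1}^k\|_\infty=O(d)$ for $h\in\mathcal{H}_i$. Your proposal gestures at a ``bounded-cardinality family of surrogates'' but does not supply either the segmentation or the max--sum exchange, and without them the elliptical-potential step does not go through.
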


%Although the propose algorithm (Algorithm~\ref{alg:main}) is inefficient in computation,
 By virtue of Theorem~\ref{thm:main}, we show that linear MDP has a sample-complexity with only poly-logarithmic dependence on $H$. Although the proposed algorithm (Algorithm~\ref{alg:main}) is inefficient in computation, we believe our method provides intuitions to remove the horizon dependence in the view of statistical efficiency. 
 % which implies that the linear MDP problem is essentially no harder than the contextual linear bandit problem in the view of statistical efficiency. 
 % This is consistent with the intuition that: \emph{the planning horizon does not affect the complexity of the underlying model}. 

In terms of technical innovations, we design a novel method to share the samples to solve different linear bandit problems. Since the optimal value functions are not homogeneous, we need to
 learn $H$ different linear bandit problems using the same dataset. We first show that it suffices to bound the regret for each single bandit problem, and then bound the maximal total variance by bounding the variation of the optimal value function.
%Using the weighted least-square estimator \cite{zhou2022computationally}, we 
See Section~\ref{sec:tec} for more details. Due to space limitation, we postpone the full proof  of Theorem~\ref{thm:main} to Appendix~\ref{app:reg}.

\subsection{Related Works}

\paragraph{Tabular MDPs.} There has been a long list of algorithms proposed for episodic tabular MDPs (e.g., \cite{kearns2002near,brafman2002r,kakade2003sample, agralwal2017optimistic,azar2017minimax,jin2018q,zhang2020almost,wang2020long,jin2020reward,zhang2020reinforcement,li2021breaking,li2021settling,li2023minimax,zhang2022horizon}). For finite-horizon {\em inhomogeneous} MDPs with the immediate reward at each step bounded by $1/H$, \cite{azar2017minimax,zhang2020almost,zanette2019tighter,li2021breaking} achieved asymptotically minimax-optimal regret $\widetilde{\Theta}(\sqrt{SAHK})$ (ignoring lower order terms), where $SA$ is the size of state-action space, $H$ is the planning horizon, and $K$ is the number of episodes.   
Motivated by a conjecture raised by \cite{jiang2018open}, \cite{wang2020long} developed---for {\em time-homogeneous} MDPs with total rewards in any episode bounded above by 1---the first sample complexity upper bound  that exhibits only logarithmic dependence on the horizon $H$, 
which was later on improved by \cite{zhang2020reinforcement} to yield a near-optimal regret bound of $\widetilde{O}(\sqrt{SAK}+S^2A)$. 
Subsequently, \cite{li2021settling,zhang2022horizon} proved that even the poly-logarithmic horizon dependency in the sample complexity can be removed, albeit at the price of suboptimal scaling with $SA$.

\paragraph{Contextual linear bandits.} The linear bandit problem has been extensively studied in past decades \citep{auer2002using,dani2008stochastic,chu2011contextual,abbasi2011improved}. For linear bandits with infinite arms,  the minimax regret bound of $\widetilde{\Theta}(\sqrt{dK})$ is achieved by OFUL \citep{abbasi2011improved}, where $d$ is the feature dimension and $K$ the number of rounds.  
With regards to variance-aware algorithms, \cite{zhang2021variance,kim2022improved} proposed VOFUL (VOFUL+) to achieve a variance-dependent regret bound of $\widetilde{O}\big(\mathrm{poly}(d)\sqrt{\sum_{k=1}^K \sigma^2_k}+\mathrm{poly}(d)\big)$, 
in the absence of the knowledge of $\{\sigma_k\}_{k=1}^K$ (with $\sigma_k$  the conditional variance  of the noise in the $k$-th round). 
By assuming prior knowledge of $\{\sigma_{k}\}_{k=1}^K$, \cite{zhou2021nearly,zhou2022computationally} obtained similar regret bounds with improved dependence on $d$. Another work \citep{faury2020improved} proposed a Bernstein-style confidence set for
the logistic bandit problem, also assuming availability of the noise variances.

\paragraph{RL with linear function approximation.}
It has been an important problem in the RL community to determine the generalization capability of linear function approximation \citep{jiang2017contextual, dann2018oracle,yang2019sample,jin2019provably,wang2019optimism,sun2019model,zanette2020learning,weisz2020exponential,li2021sample,ayoub2020model,zhang2021variance,kim2022improved,zhou2021nearly,zhou2022computationally,he2022nearly}. Several model assumptions have been proposed and exploited to capture the underlying dynamics via linear functions. For example, \cite{jiang2017contextual} investigated low Bellman-rank, which described the algebraic dimension between the decision process and value-function approximator. 
Another setting proposed and studied by 
\cite{jia2020model,ayoub2020model,modi2020sample} is that of linear mixture MDPs, 
which postulates that the underlying dynamics is a linear combination of $d$ known environments. 
Focusing on linear mixture MDPs, 
\cite{zhang2021variance} proposed the first sample-efficient algorithm to achieve horizon-free $\widetilde{O}(\mathrm{poly}(d)\sqrt{K})$ regret, and later on \cite{kim2022improved} obtained better $d$-dependency in the regret bound;  
further, a recent study~\cite{zhou2022computationally} designed a variance- \& uncertainty-aware exploration bonus with weighted least-square regression, achieving near-optimal regret bounds with computation efficiency.
Another recent strand of research 
\cite{yang2019sample,jin2019provably,he2022nearly,agarwal2022vo} studied the setting of linear MDPs, where the transition kernel and reward function are assumed to be linear functions of several known low-dimensional feature vectors.
Take episodic inhomogeneous linear MDPs for example: when the feature dimension is $d$ and the immediate reward in each step is bounded above by $1/H$, 
\citep{jin2019provably} established the regret bound of $\widetilde{O}(\sqrt{d^3H^2K})$, 
whereas the follow-up works \citet{he2022nearly, agarwal2022vo} improved the regret to $\widetilde{O}(d\sqrt{HK})$. 
It remained unclear whether and when horizon-free solutions are plausible in linear MDPs, 
in the hope of accommodating scenarios with exceedingly large $H$.

\section{Preliminaries}

In this section, we present the basics of MDPs and the learning process, and introduce our key assumptions. 
Throughout the paper, $\Delta(X)$ denotes the set of probability distributions over the set $X$.

\paragraph{Episodic MDPs.}  A finite-horizon episodic MDP can be represented by  
a  tuple $(\mathcal{S}, \mathcal{A}, R, P, K, H)$, where $\mathcal{S}$ denotes the state space containing $S$ states, $\mathcal{A}$ is the action space containing $A$ different actions, 
$R: \mathcal{S}\times\mathcal{A}\rightarrow \Delta([0,1])$ indicates the reward distribution,  
$P:  \mathcal{S}\times\mathcal{A}\rightarrow \Delta(\mathcal{S})$ represents the probability transition kernel,  
 $K$ stands for the total number of sample episodes that can be collected, and $H$ is the planning horizon.  
In particular,  $P$ is assumed throughout to be {\em time-homogeneous}, which is necessary to enable nearly horizon-free regret bounds; 
in light of this assumption, we denote by  
$P_{s,a}\coloneqq P(\cdot \mymid s, a)\in \Delta (\mathcal{S})$ the transition probability from state $s$ to state $s'$ while taking  action $a$.   The reward distribution $R$ is also assumed to be time-homogeneous, so that the immediate reweard at a state-action pair $(s,a)$ at any step $h$ is drawn from 
$R(s,a)$ with mean $\mathbb{E}_{r'\sim R(s,a)}[r']=r(s,a)$. 
% Without the loss of generality, we assume a fixed initial state $s_1$. 
%
Moreover, a deterministic and possibly non-stationary policy $\pi = \{\pi_h : \mathcal{S} \to \mathcal{A}\}_{h=1}^H$ describes an action selection strategy, 
with $\pi_h(s)$ specifying the action chosen in state $s$ at step $h$.

At each sample episode, the learner starts from an initial state $s_1$; 
for each step $h = 1,\ldots, H$, the learner observes the current state $s_h$, 
takes action $a_h$ accordingly, receives an immediate reward $r_h\sim R(s_h,a_h)$, and then the environment transits to the next state $s_{h+1}$ in accordance with $P(\cdot \mymid s_h,a_h)$. 
When the actions are selected based on policy $\pi$, we can define the $Q$-function and the value function at step $h$ respectively as follows:  
\begin{align}
	Q_h^{\pi}(s,a) \coloneqq \mathbb{E}_{\pi}\left[ \sum_{h'=h}^H r_{h'}  \,\Big|\, (s_h,a_h)=(s,a)\right] \quad \text{and} \quad
	V^{\pi}_h(s) \coloneqq \mathbb{E}_{\pi}\left[ \sum_{h'=h}^H r_{h'} \,\Big|\, s_h = s\right]\nonumber
\end{align}
for any $(s,a)\in \mathcal{S} \times \mathcal{A}$, 
where $\mathbb{E}_{\pi}[\cdot]$ denotes the expectation following $\pi$, i.e., 
we execute $a_{h'}=\pi_{h'}(s_{h'})$ for all $h<h'\leq H$ (resp.~$h\leq h'\leq H$) in the definition of $Q_h^{\pi}$ (resp.~$V_h^{\pi}$). 
The optimal $Q$-function and value function at step $h$ can then be defined respectively as
\begin{align}
	Q_h^*(s,a)  =\max_{\pi}Q^{\pi}_h(s,a) \quad \text{and} \quad V_{h}^*(s) = \max_{\pi}V^{\pi}_h(s), 
	\quad  \forall (s,a)\in \mathcal{S} \times \mathcal{A}.
	\nonumber
\end{align}
%
%for all $$. 
These functions satisfy the Bellman optimality condition in the sense that $V_h^*(s)=\max_a Q_h^*(s,a)$, $\forall s\in \mathcal{S}$, and $Q_{h}^*(s,a) = r(s,a) +\mathbb{E}_{s'\sim P(\cdot |s,a)}[V_{h+1}^*(s')]$, $\forall (s,a)\in \mathcal{S}\times \mathcal{A}$.

\paragraph{The learning process.}
The learning process entails collection of $K$ sample episodes. 
At each episode $k=1,2,\ldots,K$, a policy $\pi^k$ is selected carefully based on the samples collected in the previous $k-1$ episodes;  
the learner then starts from a given initial state $s_1^k$ and executes $\pi^k$ to collect the $k$-th episode 
$\{(s_{h}^{k},a_{h}^{k}, r_h^{k})\}_{1\leq h\leq H}$, where $s_h^{k},a_{h}^{k}$ and $r_h^{k}$  denote respectively the state, action and immediate reward at step $h$ of this episode. 
The learning performance is measured by the total regret 
\begin{align}
	\mathrm{Regret}(K) \coloneqq \sum_{k=1}^K  \Big( V^*_{1} \big(s_1^k\big) - V^{\pi^k}_1 \big(s_1^k \big) \Big), 
	\label{eq:defn-regret-K}
\end{align}
and our ultimate goal is to design a learning algorithm that minimizes the above regret \eqref{eq:defn-regret-K}.

\paragraph{Key assumptions.} 
We now introduce two key assumptions imposed throughout this paper, 
which play a crucial role in determining the minimal regret. 
The first assumption is imposed upon the rewards, requiring the aggregate reward in any episode to be bounded above by $1$ almost surely.
\begin{assumption}[Bounded total rewards] \label{assumr}
	In any episode, we assume that $\sum_{h=1}^H r_h \le 1$ holds almost surely regardless of the policy in use.
\end{assumption}
Compared to the common assumption where the immediate reward at each step is bounded by $1/H$, Assumption~\ref{assumr} is much weaker in that it allows the rewards to be spiky (e.g., we allow the immediate reward at one step to be on the order of 1 with the remaining ones being small). 
The interested reader is referred to \cite{jiang2018open} for more discussions about the above reward assumption.

The second assumption postulates that the transition kernel and the reward function reside within some known low-dimensional subspace, 
a scenario commonly referred to as a linear MDP. 
\begin{assumption}[Linear MDP \citep{jin2019provably}]\label{assuml} 
    Let $\mathcal{B}$ represent the unit $\ell_2$ ball in $\mathbb{R}^d$, 
and let $\{\phi(s,a)\}_{(s,a)\in \mathcal{S}\times\mathcal{A}}\subset \mathcal{B}$ be a set of known feature vectors {such that $\max_{s,a}\|\phi_{s,a}\|_{2}\leq 1$}. 
    Assume that there exist a reward parameter $\theta_{r}\in \mathbb{R}^d$ and  a transition kernel parameter $\mu\in \mathbb{R}^{S\times d}$ such that
\begin{subequations}
\label{eq:assump-linear-MDP}
\begin{align}
	r(s,a)  &= \big\langle \phi(s,a),  \theta_{r} \big\rangle 
	&&\forall (s,a)\in \mathcal{S}\times \mathcal{A} 
 \\ 
	 P(\cdot\mymid s,a) &= \mu\phi(s,a), &&\forall (s,a)\in \mathcal{S}\times \mathcal{A}
 \\ 
	 \|\theta_{r}\|_{2} &\leq \sqrt{d},
\\ 
	\|\mu^{\top}v\|_2 &\leq \sqrt{d},  &&\forall v\in \mathbb{R}^S \text{ obeying } \|v\|_{\infty}\leq 1.
\end{align}
\end{subequations}
\end{assumption}
In words,  Assumption~\ref{assuml} requires both the reward function and the transition kernel 
to be linear combinations of a set of $d$-dimensional feature vectors, 
which enables effective dimension reduction as long as $d\ll SA$.  
 
In comparison, another line of works \cite{jia2020model, ayoub2020model,modi2020sample} focus on the setting of linear mixture MDP below.
\begin{assumption}[Linear Mixture MDP]\label{assum2} 
Let $\{(r_i,P_i)\}_{i=1}^d$ be a group of known reward-transition pairs. Assume that there exists a kernel parameter $\theta\in \mathbb{R}^d$ such that 
\begin{subequations}
\label{eq:assump-linear-Mixture-MDP}
\begin{align}
	r(s,a)  &= \sum_{i=1}^d \theta_i r_i(s,a) 
	&&\forall (s,a)\in \mathcal{S}\times \mathcal{A} 
 \\ 
	 P(\cdot\mymid s,a) &= \sum_{i=1}^d \theta_i P_i(\cdot \mymid s,a), &&\forall (s,a)\in \mathcal{S}\times \mathcal{A}
 \\ 
	 \|\theta\|_{1} &\leq 1.
\end{align}
\end{subequations}
\end{assumption}
Roughly speaking, Assumption~\ref{assum2} requires that the underlying reward-transition pair is a linear combination of $d$ known reward-transition pairs.
Recent work \cite{zhou2022computationally} achieved a  near-tight horizon-regret bound in this setting with a computational efficient algorithm. However, we emphasize that learning a linear MDP is fundamentally harder than learning a linear mixture MDP. The reason is that the only unknown parameter in a linear mixture MDP problem is the hidden kernel $\theta$, which has at most $d$ dimensions. So it is possible to learn $\theta$ to fully express the transition model. 
While in linear MDP, the dimension of unknown parameter $\mu$ scales linearly in the number of states, where it is impossible to recover the transition model. To address this problem, previous works on linear MDP try to learn the transition kernel in some certain direction, e.g.,  $\mu^{\top} v$ for some certain $v\in \mathbb{R}^S$. This approach faces a fundamental problem in sharing samples among difference layers. We refer to Section~\ref{sec:tec} for more discussion.

%As argued in \cite{jin2019provably}, t, because any tabular MDP could be viewed as a linear MDP with dimension $d\leq S\times \mathcal{A}$. 

 \paragraph{Notation.} 
Let us introduce several notation to be used throughout. 
First, we use $\iota$ to abbreviate $\log(2/\delta)$. 
For any $x\in \mathbb{R}^d$ and $\Lambda\in \mathbb{R}^{d\times d}$, 
we define the weighted norm $\|x\|_{\Lambda} \coloneqq \sqrt{x^{\top}\Lambda x}$. 
 Let $[N]$ denote the set $\{1,2,\ldots,N\}$ for a positive integer $N$. Define $\mathcal{B}(x) \coloneqq \{\theta \in \mathbb{R}^d \mid \| \theta\|_2 \leq x\}$ and let $\mathcal{B} \coloneqq \mathcal{B}(1)$  be the unit ball. For two vectors $u,v$ with the same dimension, we say $u\geq v$ (resp.~$u\leq v$) iff $u$ is elementwise no smaller (resp.~larger) than $v$.
For a random variable $X$, we use $\mathrm{Var}(X)$ to denote its variance. 
 For any probability vector $p\in \Delta(\mathcal{S}) $ and any $v=[v_i]_{1\leq i\leq S}\in \mathbb{R}^{S}$, 
 we denote by $\mathbb{V}(p,v)\coloneqq p^{\top} (v^2) - (p^{\top}v)^2$ the associated variance, 
 where $v^2\coloneqq [v_i^2]_{1\leq i\leq S}$ denotes the entrywise square of $v$.  Let $\phi_h^k$ abbreviate $\phi(s_h^k,a_h^k)$ for any proper $(h,k)$.  
Also, we say $(h',k')\leq (h,k)$ iff $h'+k'H\leq h+kH$.
Let $\mathcal{F}_h^k$ denote the $\sigma$-algebra generated by $\{s_{h'}^{k'},a_{h'}^{k'}\}_{(h',k')\leq (h,k)}$.  We employ $\mathbb{E}[\cdot \mymid \tilde{\mu},\theta]$ to denote the expectation 
when the underlying linear MDP is generated by the transition kernel parameter $\tilde{\mu}$ and the reward parameter $\theta$ (cf.~\eqref{eq:assump-linear-MDP}). 
Moreover, let $\Phi$ denote the set of all possible features. Without loss of generality, we assume $\Phi$ is a convex set.

\begin{algorithm}[!t]
\caption{Main Algorithm}
\begin{algorithmic}[1]\label{alg:main}
\STATE{\textbf{Input:} Number of episodes $K$, horizon $H$, feature dimension $d$, confidence parameter $\delta$ }
\STATE{\textbf{Initialization:} $\lambda \leftarrow 1/H^2$, $\epsilon\leftarrow 1/(KH)^4$, $\alpha \leftarrow 150d\sqrt{\log^2((KH)/\delta)}$ }
%\STATE{$\mathbf{b}_h \leftarrow 0$, $\mathbf{\theta}_h\leftarrow 0$ for all $h\in [H]$}
\FOR{$k = 1,2,\ldots, K$}
%\STATE{$V_{H+1}^k(s)\leftarrow 0$, $\forall s$;}

\STATE{$\mathcal{D}^k \leftarrow \{ s_{h'}^{k'},a_{h'}^{k'},s_{h'+1}^{k'} \}_{h'\in [H], k'\in [k-1]}$;}
%\STATE{$(\theta_h^k , \tilde{\theta}_h^k,\Lambda_h^k) \leftarrow \mathtt{HF-Estimator}(\mathcal{D}^k,V_{h+1}^k)$;\label{line:estimator}}

   \STATE{ \it \textcolor{red}{//}   Construct the confidence region for the transition kernel.}
\FOR{$v\in \mathcal{W}_{\epsilon}$}
\STATE{$ (\hat{\theta}^k(v) ,\tilde{\theta}^k(v), \Lambda^k(v)  )\leftarrow \mathtt{HF-Estimator}(\mathcal{D}^k,v)$\label{line:estimator};}
\STATE{$b^k(v,\phi)\leftarrow \alpha \sqrt{\phi^{\top} (\Lambda^k(v))^{-1}\phi} + 4\epsilon;$}
\ENDFOR
\STATE{$\mathcal{U}^k\leftarrow \left\{ \tilde{\mu}\in \mathcal{U} | | \phi^{\top}\tilde{\mu}^{\top}v  -\phi^{\top}\hat{\theta}(v) |\leq b^k(v,\phi), \forall \phi \in \Phi, v\in \mathcal{W}_{\epsilon}\right\}$  }
%\FOR{$k'=1,2,\ldots,k-1$}
%\STATE{$\tilde{\mu}^k\leftarrow \arg\max_{ \tilde{\mu}\in \mathcal{U}^k}V_1^* (s_1^k,\tilde{\mu}) $;}
%\STATE{$\pi^k\leftarrow $ optimal policy with respect to $\tilde{\mu}^k$;}
%\STATE{Execute $\pi^k$ in the $k$-th episode;}
%\ENDFOR
  \STATE{\it \textcolor{red}{//}  Construct the confidence region for the reward function.}
\STATE{$\Theta^k \leftarrow \mathtt{Reward-Confidence}\left(\{\phi_h^{k'}/\sqrt{d}\}_{k'\in [k-1],h\in [H]} , \{r_h^{k'}/\sqrt{d}\}_{k'\in [k-1],h\in [H]} \right)$}
   \STATE{\it \textcolor{red}{//}  Optimistic planning.}
\STATE{$(\mu^k,\theta^k)\leftarrow \arg\max_{\tilde{\mu}\in \mathcal{U}^k, \theta\in \Theta^k }\max_{\pi} \mathbb{E}_{\pi}[\sum_{h=1}^H r_h |\tilde{\mu},\theta]$;}
\STATE{$\pi^k$ be the optimal policy w.r.t. the reward parameter as $\theta^k$ and transition parameter as $\mu^k$;}
\STATE{Play $\pi^k$ in the $k$-th episode;}
 % \STATE{\{{\it Backward planning}\}}
%\STATE{$Q_{H+1}^k(s,a), V_{H+1}^k(s)\leftarrow 0,\forall s,a$;}

%\FOR{$h =1 ,2,\ldots, H$}
%\STATE{Observe $s_h^k$;}
%\STATE{$Q_h(s_h^k,a)\leftarrow \min\{ \phi(s_h^k,a)^{\top}\theta_h^k + \alpha \sqrt{\phi(s_h^k,a)^{\top}(\Lambda_h^k)^{-1}\phi(s_h^k,a)}+4(KH)^3\epsilon ,1 \}$, $\forall a\in \mathcal{A}$;\label{line:update}}
%\STATE{Take action $a_h^k = \arg\max_{a}Q_h^k(s_h^k,a)$;}
%\STATE{Receive reward $r(s_h^k,a)$, and observe $s_{h+1}^k$;}
\ENDFOR
\end{algorithmic}
\end{algorithm}

\begin{algorithm}[t]
\caption{$\mathtt{HF-Estimator}$}
\begin{algorithmic}[1]\label{alg:vlambdae}
\STATE{\textbf{Input :} A group of samples $\mathcal{D}:=\{ s_i,a_i,s_i'\}_{i=1}^n$, value function $v\in \mathbb{R}^S$;   }
\STATE{\textbf{Initialization:} $\lambda \leftarrow 1/H^2$, $\alpha \leftarrow 150d\sqrt{\log^2((KH)/\delta)}$, $\phi_i \leftarrow \phi(s_i,a_i), 1\leq i \leq n$, $\epsilon\leftarrow 1/(KH)^4$;}
\STATE{$\sigma^2_1 \leftarrow 4$;}
\FOR{$i = 2,3,\ldots,n+1$}
\STATE{$\Lambda_{i-1} \leftarrow \lambda \mathbf{I}+ \sum_{i'=1}^{i-1}\phi_{i'}^{\top}\phi_{i'}/\sigma^2_{i'}$;}
\STATE{$\tilde{b}_{i-1} \leftarrow \sum_{i'=1}^{i-1}\frac{v^2(s'_{i'})}{\sigma^2_{i'}}\phi_{i'}$, $\tilde{\theta}_{i-1}\leftarrow \Lambda_{i-1}^{-1}\tilde{b}_{i-1}$;}
\STATE{$b_{i-1} \leftarrow  \sum_{i'=1}^{i-1}\frac{v(s'_{i'})}{\sigma^2_{i'}}\phi_{i'}$, $\theta_{i-1}\leftarrow \Lambda_{i-1}^{-1}b_{i-1}$;}
\STATE{$\sigma^2_{i}\leftarrow  \phi_i^{\top} \tilde{\theta}_{i-1}- (\phi_i^{\top}\theta_{i-1})^2 + 16\alpha \sqrt{\phi_i^{\top} (\Lambda_{i-1})^{-1}\phi_i} +4\epsilon, 
$;\label{line:sigma}}
%\STATE{$\sigma^2_i\leftarrow \max\{ \sigma_i^2, \frac{1}{H^2} \}$;}
\ENDFOR
\STATE{$\theta\leftarrow \Lambda_{n}^{-1}b_{n}$, $\tilde{\theta}\leftarrow \Lambda_n^{-1}\tilde{b}_n$, $\Lambda\leftarrow \Lambda_{n}$;}
\STATE{\textbf{Return:} $(\theta,\tilde{\theta},\Lambda)$;}
\end{algorithmic}
\end{algorithm}

\section{Technique Overview}\label{sec:tec}

In this section, we first discuss the hardness of horizon-free bounds for linear MDP, and then
introduce the high-level ideas of our approach. 
To simplify presentation, we focus on the regret incurred by learning the unknown transition dynamics.

\subsection{Technical Challenge}
\paragraph{Least-square regression in the linear MDP  problem.} \cite{jin2020provably} proposed the first efficient algorithm (LSVI-UCB) for the linear MDP problem. In this method, for each $h\in [H]$, the learner maintains an estimation of $V_{h+1}$, and constructs optimistic estimators of $Q_h(s,a):= r(s,a)+P_{s,a}^{\top}V_{h+1}$. Since the reward $r$ is assumed to be known, it suffices to estimate $P_{s,a}^{\top}V_{h+1}= (\phi(s,a))^{\top}\mu^{\top}V_{h+1}$. By defining $\theta_{h+1}:= \mu^{\top}V_{h+1}$, we can estimate $(\phi(s,a))^{\top} \theta_{h+1}$ with least-square regression because all state-action pairs share the same kernel $\theta_{h+1}$. This task is basically the same as a linear bandit problem, except for that additional factors are needed due to uniform bound over all possible choices $V_{h+1}$.

To obtain horizon-free regret bound, a common approach is to design estimators for $P_{s,a}^{\top}V_{h+1}$ with smaller confidence intervals. In this way, we can choose a smaller bonus to keep the optimism, and the regret is also reduced since the leading term in the regret is the sum of bonuses. 

Recent work \cite{zhou2022computationally} made progress in this direction by designing a variance-aware estimators for the linear regression problem. Roughly speaking, given a groups of samples $\{\phi_i,v_i\}_{i=1}^n$ such that (\romannumeral1) $v_{i}=\phi_i^{\top}\theta+\epsilon_i$, $\forall i\in [n]$; (\romannumeral2) $\mathbb{E}[\epsilon_i| \{\phi_j\}_{j=1}^i, \{\epsilon_j\}_{j=1}^{i-1}]=0$ and $\mathbb{E}[\epsilon^2_i| \{\phi_j\}_{j=1}^i, \{\epsilon_j\}_{j=1}^{i-1}]=\sigma_i^2$, $\forall i\in [n]$, with the method in \cite{zhou2022computationally}, 
the width of the confidence interval of $\phi^{\top}\theta$ is roughly 
\begin{align}\tilde{O}\left( \mathrm{poly}(d)\sqrt{\phi^{\top}\Lambda^{-1}\phi} \right),\label{eq:cxx1}\end{align}
where $\Lambda= \lambda \mathbf{I}+ \sum_{i=1}^n \frac{\phi_i\phi_i^{\top}}{\sigma_i^2}$ and $\lambda$ is some proper regularization parameter  (See Lemma~\ref{lemma:take1} in Appendix~\ref{app:teclemma}).

\paragraph{Main technical challenge: Variance-aware estimators coupled with inhomogeneous value functions.} While the transition kernel is assumed to be time-homogeneous, the value function and the policy can be \emph{time-inhomogeneous} across different steps. 
 Although the confidence width in \eqref{eq:cxx1} seems nice, it poses additional difficulty to bound the sum of bonuses due to \emph{time-inhomogeneous} value functions.

%Existing works on tabular MDP and linear mixture MDPs attempted to alleaviate this problem by explcitly learning the transition kernel and calculating the value function based on the learned transition model.
%However, this approach cannot be readily used for linear MDPs as the model space can be exceedingly large.

Below we give more technical details to elucidate this technical challenge.
To simplify the problem, we assume that the learner is informed of both the reward function and the optimal value function $\{V_h^*\}_{h\in [H]}$. Note that the arguments below can be extended to accommodate 
unknown $\{V_{h}^*\}_{h\in [H]}$ as well by means of proper exploitation of the linear MDP structure and a discretization method 
(i.e., applying a union bound over all possible optimal value functions over a suitably discretized set). 
%\simon{explain why we want assume this?} \textcolor{blue}{We assume this to simplify the problem. The major difficulty still exists even if we make this assumption.} 

Let $\theta_h^*  = \mu^{\top}V_{h+1}^*$. Then it reduces to learning $H$ contextual bandit problems with hidden parameter as $\{\theta_h^*\}_{h=1}^{H}$. 
To remove the polynomial factors of $H$, it is natural to  share samples over different layers. That is, we need to use all the samples along the trajectory $\{s_{h'},a_{h'},s_{h'+1}\}_{h'=1}^{H}$ to estimate the value of $\phi^{\top}\theta_{h}^*$.

  To solve the $h$-th linear bandit problem, following \eqref{eq:cxx1}, we could get a regret bound of $ \mathrm{Regret}_h(K):=\tilde{O}\left(\sum_{k=1}^K \sqrt{  (\phi_h^k)^{\top}  (\Lambda^k (V_{h+1}^*) )^{-1}\phi_h^k  }\right)$. 
 Here $\Lambda^k(v)= \lambda \mathbf{I} +\sum_{k'=1}^{k-1} \sum_{h'=1}^{H}\frac{\phi_{h'}^{k'}(\phi_{h'}^{k'})^{\top}}{  \left(\sigma_{h'}^{k'} ( v)\right)^2}$ with $\left(\sigma_{h'}^{k'} (v)\right)^2$ as an upper bound for the variance $\mathbb{V}(P_{s_{h'}^{k'},a_{h'}^{k'}},v) $ for $v\in \mathbb{R}^{S}$. Taking sum over $h$, the resulting regret bound is roughly 
\begin{align}
\sum_{k=1}^K \min\left\{  \sum_{h=1}^H \sqrt{  (\phi_h^k)^{\top}  (\Lambda^k(V_{h+1}^*))^{-1}\phi_h^k  }  ,1\right\}.\label{eq:cb0}
\end{align}
We remark that if $V_{h}^* $ is homogeneous in $h$, i.e., there exists $V^*$ such that $V_h^* = V^*$ for any $h\in [H]$,  we could use Cauchy's inequality %and Elliptical potential lemma 
to bound \eqref{eq:cb0} by\footnote{Here we omit a lower order term.}
\begin{align}
\sqrt{\sum_{k=1}^K \min \left\{   \sum_{h=1}^H (\phi_h^k/(\sigma_h^k(V^*)) )^\top (\Lambda^k(V^*))^{-1} (\phi_h^k/\sigma_h^k(V^*))    ,1\right\}  } \cdot \sqrt{\sum_{k=1}^K \sum_{h=1}^H (\sigma_h^k(V^*))^2  }.\label{eq:cb}
\end{align}

Noting that 
\begin{align}
\Lambda^{k+1}(V^*) = \Lambda^k(V^*) + \sum_{h'=1}^H \frac{ \phi_{h'}^{k}(\phi_{h'}^{k})^{\top} }{ \left(\sigma_{h'}^{k'} ( V^*)\right)^2 }, \label{eq:consistent}
\end{align}
we can further use the elliptical potential lemma (Lemma~\ref{lemma:epl}) to bound the first term in \eqref{eq:cb}, and the total variance lemma for  MDPs to bound the second term in \eqref{eq:cb}. As a result, we can easily bound \eqref{eq:cb0} by $\tilde{O}(\mathrm{poly}(d)\sqrt{K})$. 

However, the arguments above cannot work when $V_{h}^*$ depends on $h$. In such cases, the first term in \eqref{eq:cb0} would be
\begin{align}
\sqrt{\sum_{k=1}^K \min \left\{   \sum_{h=1}^H  (\phi_h^k /\sigma_h^k(V_h^*))^{\top} (\Lambda^k(V^*_{h+1}))^{-1} (\phi_h^k/\sigma_h^k(V_h^*)), 1   \right\}}.
\end{align}

To invoke elliptical potential lemma, we need $\Lambda^{k+1}(V_{h+1}^*)-\Lambda^k(V_{h+1}^*) = \sum_{h'=1}^H \frac{\phi_{h'}^k (\phi_{h'}^k)^{\top}}{(\sigma_{h'}^k(V_{h'+1}^*))^2}$,
which does not hold since $\sigma_{h'}^k(V_{h'+1}^*)\neq \sigma_{h'}^k(V_{h+1}^*)$.

In comparison, for tabular MDP, the variance aware bonus has a simple form of $\sqrt{\frac{\mathbb{V}(P_{s,a},V^*_{h+1})}{N}}$, so that one can invoke Cauchy's inequality to bound the sum of bonuses; for linear mixture MDP, because there is only one kernel parameter $\theta$ and one information matrix, it suffices to analyze like \eqref{eq:cb} and \eqref{eq:consistent}.

\subsection{Our Methods} 
In high-level idea, by noting that the main obstacle is the \emph{time-inhomogeneous} value function, we aim to prove that the value function $\{V^*_h\}_{h=1}^H$ could be divided into several groups such that in each group, the value functions are similar to each other measured by the variance.

\paragraph{Technique 1: a uniform upper bound for the variances.} 
We consider using a uniform upper bound $(\bar{\sigma}_{h'}^{k'})^2:=\max_{h\in [H]} (\sigma_{h'}^{k'}(V_{h+1}^*))^2$ to replace $(\sigma_{h'}^{k'}(V^*_{h+1}))^2$ when computing $\Lambda^k(V_{h+1}^*)$. 
That is, by setting $\bar{\Lambda}^k = \lambda \mathbf{I} + \sum_{k'=1}^{k-1}\sum_{h'=1}^H \frac{ \phi_{h'}^{k'} (\phi_{h'}^{k'})^{\top} }{(\bar{\sigma}_{h'}^{k'})^2}\preccurlyeq  \Lambda^k(V_{h+1}^*)$ for any $h\in [H]$, we can bound \eqref{eq:cb0} as below:
\begin{align}
  & \sum_{k=1}^K \min\left\{  \sum_{h=1}^H \sqrt{  (\phi_h^k)^{\top}  (\Lambda^k(V_{h+1}^*))^{-1}\phi_h^k  }  ,1\right\} \nonumber
  \\ & \leq \sum_{k=1}^K \min\left\{  \sum_{h=1}^H \sqrt{  (\phi_h^k)^{\top}  (\bar{\Lambda}^k))^{-1}\phi_h^k  }  ,1\right\} \nonumber
  \\ & \approx \sqrt{\sum_{k=1}^K \min \left\{ \sum_{h=1}^H ( \phi_h^k/\bar{\sigma}_h^k)^{\top} (\bar{\Lambda}^k)^{-1}(\phi_h^k/\bar{\sigma}_h^k)  ,1\right\} }\cdot \sqrt{\sum_{k=1}^K \sum_{h=1}^H  {(\bar{\sigma}_h^k)^2}}.
\end{align}

With the elliptical potential lemma (Lemma~\ref{lemma:epl}), we have that 
$$\sum_{k=1}^K \min \left\{ \sum_{h=1}^H ( \phi_h^k/\bar{\sigma}_h^k)^{\top} (\bar{\Lambda}^k)^{-1}(\phi_h^k/\bar{\sigma}_h^k)  ,1\right\}  = \tilde{O}(\sqrt{d}).$$ So it suffices to deal with $\sum_{k=1}^K \sum_{h=1}^H  {(\bar{\sigma}_h^k)^2}$. For simplicity, we assume that $ {(\sigma_{h}^k(v))^2}$  is exactly $\mathbb{V}(P_{s_h^k,a_h^k},v)$  and consider to bound $\sum_{k=1}^K \sum_{h=1}^H  \max_{h'\in [H]}\mathbb{V}(P_{s_h^k,a_h^k}, V_{h'+1}^*)$. 

Noting that $\mathbb{V}(P_{s,a},v)$ could be written as $\phi(s,a)^{\top} (\theta(v^2)) - (\phi(s,a)^{\top}\theta(v))^2$, which is a linear function of the matrix $\left[\begin{array}{cc}
  \phi(s,a)\phi^{\top}(s,a)   &  \phi(s,a)\\
   \phi^{\top}(s,a) & 1
\end{array}\right] $, we can bound $\sum_{k=1}^K\sum_{h=1}^H \max_{h'\in [H]}\mathbb{V}(P_{s_h^k,a_h^k},V_{h'+1}^*)$ by $2(d+1)^2\max_{h'\in [H]}\sum_{k=1}^K \sum_{h=1}^H \mathbb{V}(P_{s_h^k,a_h^k},V_{h'+1}^*)$ with a useful technical lemma (See Lemma~\ref{lemma:tool5}.)
%\simon{we can move this lemma to appendix but mention we have a useful technical lemma.}

As a result, it suffices to bound $\sum_{k=1}^K \sum_{h=1}^H \mathbb{V}(P_{s_h^k,a_h^k},V_{h'+1}^*)$ for each $h'\in [H]$.  However, because $V_{h'+1}^*$ can vary significantly when $h'$ is closed to $H$, $\sum_{k=1}^K \sum_{h=1}^H \mathbb{V}(P_{s_h^k,a_h^k},V_{h'+1}^*)$ might be large in the worst case. We consider the toy example below.

\begin{example} Fix some $\epsilon>0$.
Let $\mathcal{S}:=\{s_1,s_2,s_3 , z\}$, $\mathcal{A} = \{a_1,a_2\}$. Let $P_{s_1,a_1}=P_{s_2,a_1}  = [\frac{1}{2}-\epsilon, \frac{1}{2}-\epsilon,\epsilon,0 ]^{\top}$, $r(s_1,a_1)=r(s_2,a_1)=0$, $P_{s_1,a_2} =P_{s_2,a_2} = [0,0,0,1]^{\top}$, $r(s_1,a_2)=\frac{1}{2}$, $r(s_2,a_2)=0$, $P_{s_3,a_1}=P_{s_3,a_2} = [0,0,0,1]^{\top}$, $r(s_3,a_1)=r(s_3,a_2)=1$, $P_{z,a_1}=P_{z,a_2}=[0,0,0,1]^{\top}$, and $r(z,a_1)=r(z,a_2)=0$. 
\end{example}
In this toy example, we have two frequent states $\{s_1,s_2\}$, one transient state $\{s_3\}$ with reward $1$ and one ending state $z$ with no reward. 
The transition dynamics at $\{s_1,s_2\}$ is the same, but one can get reward $\frac{1}{2}$ in one step by taking action $a_2$ at $s_1$.
Suppose $H>> \frac{1}{\epsilon}$ and $h\leq \frac{H}{2}$, then the optimal action for $\{s_1,s_2\}$  at the $h$-th step should be $a_1$, and $V_{h}^*(s_1)\approx V_{h}^*(s_2)\approx 1$. On the other hand, it is easy to observe $V_{H}^*(s_1)=\frac{1}{2}$ and $V_{H}^{*}(s_2)=0$. Let the initial state be $s_1$. Following the optimal policy, we have $\mathbb{E}\left[\sum_{h=1}^{\frac{H}{2}}\mathbb{V}(P_{s_h^k,a_h^k},V_{H}^*) \right]= \Omega(\frac{1}{\epsilon})>>1$ when choosing $\epsilon$ small enough.

%To address this problem, we consider to divide $[H]$ into disjoint segments as below.

\paragraph{Technique 2: bounding the total variation.}
%\simon{we should highlight more on $\|V_h^*-V_{h+1}^*\|_{\infty}$ bound. Maybe call this technique 2, and doubling segments the technique 3.} \textcolor{blue}{Wit}
Direct computation shows that for $1\leq h_1< h_2 \leq [H]$,
\begin{align}
\sum_{k=1}^K \sum_{h=h_1}^{h_2} \mathbb{V}(P_{s_h^k,a_h^k},V_{h'+1}^*) = \tilde{O}(K + K(h_2-h_1+1) \|V^*_{h'}-V_{h'+1}^*\|_{\infty}).\label{eq:vvbound}
\end{align} 
%which might depends on $H$ when $\|V_{h'}^* - V_{h'+1}^*\|_{\infty}=\Theta(1)$. 

Let $ {l_h} = \|V_{h}^*-V_{h+1}^*\|_{\infty}$. It is easy to observe that $l_h \leq l_{h+1}$ for $1\leq h \leq H-1$ since the Bellman operator  $\Gamma$ is a contraction , i.e.,  $\|\Gamma (v_1-v_2)\|_{\infty}\leq \|v_1-v_2\|_{\infty}$ for any $v_1,v_2\in \mathbb{R}^S$. So we can obtain $l_h \leq \frac{\sum_{h'=1}^{H-1} l_{h'}}{H-h+1}$. For tabular MDP, it is easy to bound $\sum_{h=1}^H l_h \leq S$ since $\|V_h^*-V_{h+1}^*\|_{\infty}\leq \sum_{s}(V_h(s)-V_{h+1}(s))$.  As a generalization to linear MDP, by Lemma~\ref{lemma:tool5} we have that
\begin{align}
\sum_{h=1}^{H-1} l^*_h \leq \sum_{h=1}^{H-1} \max_{\phi \in \Phi} \phi^{\top}\mu^{\top} (V_{h+1}-V_{h+2}) \leq \max_{\phi\in\Phi} 2d\phi^{\top}\sum_{h=1}^{H-1}\mu^{\top} (V_{h+1}-V_{h+2}) \leq 2d.
\end{align}
As a result, $l^*_h \leq \frac{2d}{H-h+1}$.

\paragraph{Technique 3: doubling segments.}
By choosing $h_1 = \frac{H}{2}+1$ and $h_2 = H$ in \eqref{eq:vvbound}, for $h'\in [h_1,h_2]$,
\begin{align}
\sum_{k=1}^K \sum_{h=h_1}^{h_2} \mathbb{V}(P_{s_h^k,a_h^k},V_{h'+1}^*) = \tilde{O}(K + K(h_2-h_1+1) \|V^*_{h'}-V_{h'+1}^*\|_{\infty}) =\tilde{O}(Kd). \nonumber
\end{align}
This inspires us to divide $[H]$ several segments $ [H]= \cup_{i}\mathcal{H}_i$ with $\mathcal{H}_{i} = \{h| H - \frac{H}{2^{i-1}}+1\leq h\leq   H - \frac{H}{2^i}\}$ and $\mathcal{H}_{\log_2(H)+1}=\{H\}$\footnote{We assume $\log_2(H)$ is an integer without loss of generality.}. Consequently, for any $i$ and $h' \in \mathcal{H}_i$, using \eqref{eq:vvbound} and the fact that $l_{h'}^* \leq \frac{2d}{H-h'+1} \leq \frac{2^{i+1}d}{H}$,
 $\sum_{k=1}^K \sum_{h\in \mathcal{H}_i}\mathbb{V}(P_{s_h^k,a_h^k},V_{h'+1}^*)  = \tilde{O}(Kd).$

Note that we only bound $\sum_{k=1}^K \sum_{i=1}^{\log_2(H)+1}\max_{h'\in \mathcal{H}_i}\sum_{h\in \mathcal{H}_i}\mathbb{V}(P_{s_h^k,a_h^k},V_{h'+1}^*)$, which does not imply any bound for $\max_{h'\in [H]}\sum_{k=1}^K \sum_{h=1}^H \mathbb{V}(P_{s_h^k,a_h^k},V_{h'+1}^*)$.   {Recall that our initial target is  to bound $\sum_{k=1}^K \min\left\{  \sum_{h=1}^H \sqrt{  (\phi_h^k)^{\top}  (\Lambda^k(V_{h+1}^*))^{-1}\phi_h^k  }  ,1\right\}$. A natural idea is to group $h\in \mathcal{H}_i$ for each $i$ to avoid the term  $\max_{h'\in [H]}\sum_{k=1}^K \sum_{h=1}^H \mathbb{V}(P_{s_h^k,a_h^k},V_{h'+1}^*)$.} In other words, we  turn to bound $\sum_{k=1}^K \min\left\{ \sum_{h\in \mathcal{H}_i}\sqrt{  (\phi_h^k)^{\top}(\Lambda^k(V_{h+1}^*))^{-1}\phi_h^k } ,1\right\}$ for each $i$ separately. More precisely, for fixed $i$, we let $(\bar{\sigma}_{h'}^{k'})^2 = \max_{h\in \mathcal{H}_i}(\sigma_{h'}^{k'}(V_{h+1}^*))^2 $, and $\bar{\Lambda}^k = \lambda \mathbf{I}+\sum_{k'=1}^{k-1}\sum_{h'\in \mathcal{H}_i}\frac{ \phi_{h'}^{k'}(\phi_{h'}^{k'})^{\top} }{(\bar{\sigma}_{h'}^{k'})^2}$. With the arguments above, we have that
\begin{align}
& \sum_{k=1}^K \min\left\{ \sum_{h\in \mathcal{H}_i}\sqrt{  (\phi_h^k)^{\top}(\Lambda^k(V_{h+1}^*))^{-1}\phi_h^k },1 \right\}  \nonumber
\\ & \leq \sqrt{\sum_{k=1}^K \min\left\{  \sum_{h\in \mathcal{H}_i} (\phi_{h}^k/\bar{\sigma}_{h}^k) (\bar{\Lambda}^k)^{-1} (\phi_h^k/\bar{\sigma}_h^k)  ,1 \right\} }\cdot \sqrt{\sum_{k=1}^K \sum_{h\in \mathcal{H}_i} {(\bar{\sigma}_h^k)^2}} + \tilde{O}(d)\nonumber
\\ & =\tilde{O}(\sqrt{Kd^4}).
\end{align}

%Obviously this is possible since  the value function in the last several layers could change significantly.

%For each bandit problem, we can use the whole dataset $\mathcal{D} : = $ to obtain a 

\section{Algorithm}

In this section, we introduce  Algorithm~\ref{alg:main}. The algorithm is based on model-elimination. 
At each episode $k=1,2,\ldots,K$, we maintain $\mathcal{U}^k$ as confidence region of $\mu$ and $\Theta^k$ as confidence region for $\theta_r$. Then we select the optimistic transition model and reward function from $\mathcal{U}^k\times \Theta^k$ and then execute the corresponding optimal policy. The key step is how to construct $\mathcal{U}^k$.
Inspired by recent work \cite{zhou2022computationally}, we consider the weighted least square regression to estimate the value function and corresponding variance, which is presented in Algorithm~\ref{alg:vlambdae}.  
We also borrow VOFUL in \cite{zhang2021variance} to construct the confidence region for $\theta_{r}$.

Recall that  $\mathcal{B}(2\sqrt{d})= \{\theta \in \mathbb{R}^d | \|\theta\|_2 \leq 2\sqrt{d}\}$. For fixed $\epsilon>0$, there exists an $\epsilon$-net $\mathcal{B}_{\epsilon}(2\sqrt{d})$ w.r.t. $L_{\infty}$ for $\mathcal{B}(2\sqrt{d})$ such that $|\mathcal{B}_{\epsilon}(2\sqrt{d})|\leq O((4\sqrt{d}/\epsilon)^d)$.  By Assumption~\ref{assuml}, for any $v\in \mathbb{R}^S$  such that $\|v\|_{\infty}\leq 1$, it holds that $\|\mu^{\top}v\|_{2}\leq \sqrt{d}$. Therefore, for any MDP such that Assumption~\ref{assumr} and \ref{assuml} holds, its optimal value function is in the set
$$\mathcal{W}:=\{v\in \mathbb{R}^S | \exists \theta\in \mathcal{B}(2\sqrt{d}), v(s) = \max\{\min\{\max_{a}\phi^{\top}(s,a)\theta ,1\},0\} ,\forall s\in \mathcal{S} \}.$$

Define 
${\mathcal{W}_{\epsilon}}=\left\{ v\in \mathbb{R}^S| \exists\theta \in \mathcal{B}_{\epsilon}(2\sqrt{d}), v(s) = \max\{\min\{\max_{a}\phi^{\top}(s,a)\theta ,1\},0\} ,\forall s\in \mathcal{S}\right\}.$
For  fixed $\theta \in \mathcal{B}(2\sqrt{d})$ and $s\in \mathcal{S}$, the function $\max\left\{\min\left\{ \max_{a}\phi(s,a)^{\top}\theta   ,1\right\},0\right\}$ is $O(1)$-Lipschtiz continuous w.r.t $L_{\infty}$ norm. As a result, $\mathcal{W}_{\epsilon}$ is an $\epsilon$-net w.r.t. $L_{\infty}$ norm of $\mathcal{W}$. Besides, the size of $\mathcal{W}_{\epsilon}$ is bounded by $|\mathcal{W}_{\epsilon}|=O( (4\sqrt{d}/\epsilon)^{d})$.

\textbf{Confidence region for the transition kernel.} Fix a group of sequential samples $\{\phi_i\}_{i=1}^n$ and a value function $v\in \mathcal{W}_{\epsilon}$. Fix $\phi\in \Phi$ and let $\theta(v) = \mu^{\top}v$. We aim to construct a confidence interval from $\phi^{\top}\mu^{\top}v$, and then eliminate all the transition kernels $\tilde{\mu}$ which fails to satisfy the confidence interval for some $v$ and $\phi$. To obtain variance-aware confidence interval, we need to compute the variance to feed the weight least-square estimator in \cite{zhou2022computationally}. For this purpose, for the $i$-th variance $\mathbb{V}(\mu \phi_i, v)$, we construct $\sigma_i^2$ such that $\sigma_i^2\geq \mathbb{V}(\mu \phi_i, v) $ and the error $\sigma_i^2 - \mathbb{V}(\mu \phi_i, v)$ is well controlled.
 To compute $\sigma_i^2$, we need to  estimate
$\phi_i^{\top}\theta(v^2)$ and $\phi_i^{\top}\theta(v)$ using the first $i-1$-th samples, which requires the knowledge of $\mathbb{V}(\mu\phi_{i'},v^2)$ for $i'\leq i-1 $. To address this problem, \cite{zhou2022computationally} recursively estimated the $2^{m}$-th order momentum for $m=1,2,\ldots,\log_2(H)$. In comparison, by the fact that $\mathbb{V}(\mu\phi_{i'},v^2)\leq 4\mathbb{V}(\mu\phi_{i'},v) $ (see Lemma~\ref{lemma:add1}), we can use $4\sigma_{i'}^2$ as an upper bound for $\mathbb{V}(\mu\phi_{i'},v^2)$.  
%The final output of Algorithm~\ref{alg:vlambdae} is 

%\simon{need to give a short description of HF-estimator. Why this is called HF-estimator?} In Line~\ref{line:estimator} of Algorithm~\ref{alg:main}, we invoke a sub-algorithm to estimate the hidden parameter $\theta = \mu^{\top}v$ and the weighted information matrix $\Lambda$  given a group of samples $\{s_i,a_i,s_i'\}_{i=1}^n$ and a value function $v\in \mathbb{R}^S$ as input. The precise algorithm is presented in Algorithm~\ref{alg:vlambdae}.  \citet{zhou2022computationally} proposed a similar estimator called $\mathtt{HOME}$, which estimated the high-order momentum in a recursive way. \simon{need to discuss what is the difference.}

\textbf{Confidence region for the reward parameter.}
To estimate the reward parameter $\theta_r$, we invoke VOFUL in \cite{zhang2021variance}. We remark that the randomness in reward is independent of the randomness in transition dynamics, so that learning the transition dynamic does not help to estimate the variance of reward.  More precisely, the variance of $R(s,a)$ could be a non-linear function in $\phi(s,a)$, while the variance of $V_{h}^*(s')$ with $s'\sim P(\cdot|s,a)$ must be a linear function in $\phi(s,a)$. In Appendix~\ref{app:voful}, 
we present 
VOFUL and summarize some useful properties  to bound the error due to uncertainty of reward parameter.

% \STATE{$\Lambda^{(m)}\leftarrow \lambda \mathbf{I}+  \sum_{i=1}^n \phi_i\phi_i^{\top}/(\sigma_i^{(m)})^2 $;}

%\FOR{$i = 1,2,\ldots,n$}\STATE{$\mathcal{D}_i \leftarrow \{s_{i'},a_{i'},s'_{i'}\}_{i'=1}^i$;}\ENDFOR

%\section{Conclusion}
%\input{conclusion.tex}

\section*{Acknowledgement}Y.~Chen is supported in part by the Alfred P.~Sloan Research Fellowship, the Google Research Scholar Award, the AFOSR grants FA9550-19-1-0030 and FA9550-22-1-0198, the ONR grant N00014-22-1-2354,  and the NSF grants CCF-2221009 and CCF-1907661.  JDL acknowledges support of the ARO under MURI Award W911NF-11-1-0304,  the Sloan Research Fellowship, NSF CCF 2002272, NSF IIS 2107304,  NSF CIF 2212262, ONR Young Investigator Award, and NSF CAREER Award 2144994. SSD acknowledges the support of NSF IIS 2110170, NSFDMS 2134106, NSF CCF 2212261, NSF IIS 2143493, NSF CCF 2019844, NSF IIS 2229881.

\bibliography{ref}
\bibliographystyle{iclr2024_conference}

\newpage
\appendix
%\appendixpage
\section{Technical Lemmas}\label{app:teclemma}
 \begin{lemma}[Theorem 4.3 in \cite{zhou2022computationally}]\label{lemma:take1}
Let $\{\mathcal{F}_i\}_{i=1}^{\infty}$ be a filtration, and $\{\psi_i,\zeta_i\}_{i\geq 1}$ be a stochastic process such that $\psi_i \in \mathbb{R}^{d}$ is $\mathcal{F}_i$-measurable and $\epsilon_i\in \mathbb{R}$ is $\mathcal{F}_{i+1}$ measurable. Let $L,\sigma, R, \lambda, c>0$, $\theta^{*}\in \mathbb{R}^d$. For $i\geq 1$, let $y_i=\left\langle \psi_i ,\theta^* \right \rangle+\zeta_i$ and suppose that 
\begin{align}
 \mathbb{E}[\zeta_i|\mathcal{F}_i]=0, \quad \mathbb{E}[\zeta_i^2|\mathcal{F}_i]\leq \sigma^2, |\zeta_i|\leq R, \|\psi_i\|_2 \leq L .\label{eq:4.4}
\end{align}
For $i\geq 1$, let $\Lambda_i= \lambda \mathbf{I} + \sum_{i'=1}^{i}\psi_{i'}\psi_{i'}^{\top}$ and  $b_i = \sum_{i'=1}^i y_{i'}\psi_{i'}$, $\theta_i = \Lambda_i^{-1}b_i$, and 
\begin{align}
\kappa_i   &  = 12\sqrt{\sigma^2 d\log(1+iL^2/(d\lambda))\log(32(\log(R/c)+1)i^2/\delta)} \nonumber\\ & + 24\log(32(\log(R/c)+1)k^2/\delta) \max_{1\leq i '\leq i}\{ |\zeta_{i'}| \min\{1, \|\psi_{i'}\|_{\Lambda_{i'-1}^{-1}}\} \}+ 6\log(32(\log(R/c)+1)k^2/\delta)c.\nonumber 
\end{align}
Then, for any $0<\delta<1$, with probability $1-\delta$.
\begin{align}
\forall i\geq 1,\quad  \left\|\sum_{i'=1}^i \psi_{i'}\zeta_{i'}\right\|_{\Lambda_{i}^{-1}}\leq \kappa_i, \quad \left\|\theta_i - \theta^*\right\|_{\Lambda_i}\leq \kappa_i + \sqrt{\lambda}\|\theta^*\|_2.\nonumber 
\end{align}
\end{lemma}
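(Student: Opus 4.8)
The plan is to split the claim into an elementary algebraic reduction and a self-normalized concentration step, and to spend all effort on the latter. Set $S_i:=\sum_{i'=1}^i \zeta_{i'}\psi_{i'}$. Using $y_{i'}=\langle\psi_{i'},\theta^*\rangle+\zeta_{i'}$ and $\sum_{i'=1}^i\psi_{i'}\psi_{i'}^{\top}=\Lambda_i-\lambda\mathbf{I}$, one computes $b_i=(\Lambda_i-\lambda\mathbf{I})\theta^*+S_i$, hence $\theta_i-\theta^*=\Lambda_i^{-1}(S_i-\lambda\theta^*)$ and
\[
\|\theta_i-\theta^*\|_{\Lambda_i}=\|S_i-\lambda\theta^*\|_{\Lambda_i^{-1}}\le\|S_i\|_{\Lambda_i^{-1}}+\lambda\|\theta^*\|_{\Lambda_i^{-1}}\le\|S_i\|_{\Lambda_i^{-1}}+\sqrt{\lambda}\,\|\theta^*\|_2,
\]
where the last step uses $\Lambda_i^{-1}\preccurlyeq\lambda^{-1}\mathbf{I}$. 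Thus the second inequality follows from the first, and it suffices to establish $\|S_i\|_{\Lambda_i^{-1}}\le\kappa_i$ for all $i\ge1$ simultaneously with probability $1-\delta$.

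The engine is a variance-aware (Bernstein/Freedman-type) tail bound for the vector martingale $S_i$ measured in the data-dependent norm $\|\cdot\|_{\Lambda_i^{-1}}$. The guiding heuristic is that a Freedman bound should read $\|S_i\|_{\Lambda_i^{-1}}\lesssim\sqrt{W_i\cdot\mathrm{conf}}+b_i\cdot\mathrm{conf}$, where $W_i$ is the accumulated conditional variance and $b_i$ the largest single increment, both measured after self-normalization. I would identify $W_i=\sum_{i'\le i}\mathbb{E}[\zeta_{i'}^2\mymid\mathcal{F}_{i'}]\min\{1,\|\psi_{i'}\|_{\Lambda_{i'-1}^{-1}}^2\}$ and $b_i=\max_{i'\le i}|\zeta_{i'}|\min\{1,\|\psi_{i'}\|_{\Lambda_{i'-1}^{-1}}\}$. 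The elliptical potential lemma (Lemma~\ref{lemma:epl}) then gives $W_i\le\sigma^2\sum_{i'\le i}\min\{1,\|\psi_{i'}\|_{\Lambda_{i'-1}^{-1}}^2\}\lesssim\sigma^2 d\log(1+iL^2/(d\lambda))$, which is exactly the quantity under the square root in the first summand of $\kappa_i$; and $b_i$ is precisely the quantity multiplying the logarithmic confidence factor in the second summand.

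The technical heart, and the step I expect to be the main obstacle, is twofold. First, the normalization $\Lambda_i^{-1}$ is built from the very features driving $S_i$, so a fixed-direction Freedman inequality cannot be applied directly; one must self-normalize, either by telescoping the quadratic form $\|S_i\|_{\Lambda_i^{-1}}^2$ through Sherman--Morrison (turning the cross-terms $2\zeta_{i'}\psi_{i'}^{\top}\Lambda_{i'}^{-1}S_{i'-1}$ into a genuine martingale-difference sequence, since $\mathbb{E}[\zeta_{i'}\mymid\mathcal{F}_{i'}]=0$ while $\psi_{i'},\Lambda_{i'},S_{i'-1}$ are $\mathcal{F}_{i'}$-measurable) or by a covering/mixture device over directions. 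Second, bounded variance-$\sigma^2$ noise is not subgaussian with proxy $\sigma$: the variance term is only trustworthy up to deviations of order $b_i$, and the a priori increment range is the pessimistic $R$ rather than the realized $b_i$. I would resolve this by a dyadic peeling over a geometric grid of $O(\log(R/c))$ candidate values for $b_i$, from $R$ down to the floor $c$: union-bounding over this grid (source of the $\log(R/c)+1$ factor) lets the range in each application be the local scale, the realized maximum increment replacing $R$ deterministically, and the coarsest scale leaving an additive residual proportional to $c$. Balancing the truncation level so that at each scale the variance term dominates the range contribution, and ensuring the overflow collapses into the single maximum increment instead of a sum, is the delicate calibration.

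Finally I would assemble the contributions. Applying the variance-aware martingale tail at each scale with confidence $\delta/(32(\log(R/c)+1)i^2)$ and union-bounding over the $O(\log(R/c))$ scales and over $i\ge1$ (weights $\propto 1/i^2$ producing the $i^2$ inside the logarithm) yields a bound of the form $12\sqrt{\sigma^2 d\log(1+iL^2/(d\lambda))\log(32(\log(R/c)+1)i^2/\delta)}+24\log(\cdots)\,b_i+6\log(\cdots)\,c$, i.e.\ exactly $\kappa_i$. Combined with the reduction of the first paragraph this gives both advertised inequalities uniformly in $i$ with probability $1-\delta$. Since the statement coincides with Theorem~4.3 of \citet{zhou2022computationally}, one may alternatively invoke that result verbatim; the outline above is how I would reconstruct its proof.
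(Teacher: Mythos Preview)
The paper does not prove this lemma at all: it is stated in Appendix~\ref{app:teclemma} as a technical lemma quoted verbatim from \cite{zhou2022computationally} (Theorem~4.3 there), with no accompanying argument. Your final sentence already identifies the paper's ``proof'': invoke that result directly. Everything preceding it --- the algebraic reduction of the second inequality to the first, the Freedman-type heuristic with elliptical-potential control of the predictable variation, and the dyadic peeling over increment scales to handle the gap between the a~priori range $R$ and the realized maximum increment --- is a reasonable sketch of how the cited theorem is actually established in \cite{zhou2022computationally}, but it goes beyond what the present paper does or needs.
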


\begin{lemma}[ \cite{chen2021implicit}]\label{lemma:add1}
Let $X$ be a random variable taking value in $[-C,C]$ for some $C\geq 0$. Let $\mathrm{var}(Y)$ denote the variance of a random variable $Y$. It then holds that $\mathrm{var}(X^2)\leq 4C^2\mathrm{var}(X)$.
\end{lemma}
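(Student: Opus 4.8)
The plan is to prove the variance inequality via a symmetrization argument, exploiting the elementary identity that writes the variance of any random variable as half the expected squared difference between two independent copies. Concretely, I would let $X'$ be an independent copy of $X$ (same distribution, defined on an enlarged probability space), and recall that for any random variable $Z$ with an i.i.d.\ copy $Z'$ one has $\mathrm{var}(Z) = \tfrac{1}{2}\,\mathbb{E}\big[(Z - Z')^2\big]$. This identity is what converts a statement about second central moments into a pointwise inequality that can be controlled by the range bound $|X|\le C$.

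The key step is to apply this identity with $Z = X^2$ and then factor the difference of squares: writing $X^2 - (X')^2 = (X - X')(X + X')$ gives
\begin{align}
\mathrm{var}(X^2) = \tfrac{1}{2}\,\mathbb{E}\big[(X^2 - (X')^2)^2\big] = \tfrac{1}{2}\,\mathbb{E}\big[(X - X')^2 (X + X')^2\big]. \nonumber
\end{align}
Since $X$ and $X'$ both take values in $[-C,C]$, we have $|X + X'| \le 2C$ almost surely, hence $(X+X')^2 \le 4C^2$ pointwise. Substituting this bound inside the expectation yields
\begin{align}
\mathrm{var}(X^2) \le \tfrac{1}{2}\cdot 4C^2\,\mathbb{E}\big[(X - X')^2\big] = 2C^2\,\mathbb{E}\big[(X - X')^2\big]. \nonumber
\end{align}

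To finish, I would re-apply the same symmetrization identity in the reverse direction, now to $X$ itself: $\mathbb{E}\big[(X - X')^2\big] = 2\,\mathrm{var}(X)$. Combining this with the previous display gives $\mathrm{var}(X^2) \le 2C^2 \cdot 2\,\mathrm{var}(X) = 4C^2\,\mathrm{var}(X)$, which is exactly the claim; the case $C=0$ forces $X\equiv 0$ and is trivial. There is really no serious obstacle here — the only nontrivial ingredient is recognizing that the symmetrization identity both produces the factorable difference of squares and, applied again, recovers $\mathrm{var}(X)$, so that the crude pointwise bound $(X+X')^2\le 4C^2$ suffices without any delicate estimation. I would also remark that the constant $4$ is tight, attained (up to limits) by a two-point distribution on $\{-C, C\}$, which confirms that no sharper range-only bound is available.
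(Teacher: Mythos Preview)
Your symmetrization argument is correct and complete: the identity $\mathrm{var}(Z)=\tfrac12\mathbb{E}[(Z-Z')^2]$, the factorization $X^2-(X')^2=(X-X')(X+X')$, and the pointwise bound $(X+X')^2\le 4C^2$ combine exactly as you describe. The paper itself does not prove this lemma; it only cites \cite{chen2021implicit}, so there is no in-paper proof to compare against.

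One small slip in your closing remark: the two-point distribution on $\{-C,C\}$ does \emph{not} attain the constant $4$, since then $X^2\equiv C^2$ and $\mathrm{var}(X^2)=0$. Tightness is instead approached by distributions concentrated near a single endpoint, e.g.\ $\Pr(X=C)=1-\varepsilon$, $\Pr(X=C-\delta)=\varepsilon$, for which $\mathrm{var}(X^2)/\mathrm{var}(X)=(2C-\delta)^2\to 4C^2$ as $\delta\to 0$. This does not affect the proof, only the side comment.
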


\begin{lemma}\label{lemma:tool2}[Lemma 10 in \cite{zhang2020almost}] \label{lemma:freeman_zhang}
Let $(M_{n})_{n\geq 0}$ be a  martingale  such that $M_0 = 0$  and $|M_{n}-M_{n-1}|\leq c$ for some $c>0$ and any $n\geq 1$. Let $\mathrm{Var}_{n}=\sum_{k=1}^{n}\mathbb{E}[(M_{k}-M_{k-1})^{2}|\mathcal{F}_{k-1}]$ for $n\geq 0$,
where $\mathcal{F}_{k}=\sigma(M_{1},M_{2},...,M_{k})$. Then for any positive integer $n$, and any $\epsilon,p>0$, we have that
\begin{equation}\label{self-bernstein}
\mathbb{P}\left[|M_{n}|\geq   2\sqrt{\mathrm{Var}_{n}\log(\frac{1}{p})}+2\sqrt{\epsilon\log(\frac{1}{p} )} +2c\log(\frac{1}{p}) \right] \leq \left(\frac{2nc^2}{\epsilon}+2\right)p.
\end{equation}
\end{lemma}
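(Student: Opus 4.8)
The plan is to prove this variance-adaptive Bernstein inequality by a \emph{peeling} (stratification) argument over the random predictable variation $\mathrm{Var}_n$, combined with the classical Freedman inequality applied slab-by-slab with a \emph{deterministic} variance cap. The starting point is the observation that $\mathrm{Var}_n = \sum_{k=1}^n \mathbb{E}[(M_k-M_{k-1})^2 \mid \mathcal{F}_{k-1}] \le n c^2$, since each conditional second moment is at most $c^2$ by the increment bound; hence the random quantity $\mathrm{Var}_n$ always lies in $[0, nc^2]$, which is what makes a finite stratification possible.

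First I would record the one-sided, deterministic-variance form of Freedman's inequality: for any $\sigma^2 > 0$ and $t > 0$,
$$\mathbb{P}\!\left[M_n \ge t,\ \mathrm{Var}_n \le \sigma^2\right] \le \exp\!\left(-\frac{t^2}{2\sigma^2 + \tfrac{2}{3}ct}\right).$$
The event $\{\mathrm{Var}_n \le \sigma^2\}$ is handled through the usual stopping-time device: one stops the martingale at the last index for which the predictable variation stays below $\sigma^2$, producing a martingale whose quadratic variation is deterministically capped, to which the classical exponential supermartingale bound applies. I would then set $a := \log(1/p)$ and, for the threshold $t = 2\sqrt{\sigma^2 a} + 2ca$, verify by a short quadratic computation that $t^2 \ge a\,(2\sigma^2 + \tfrac{2}{3}ct)$, so that the right-hand side above is at most $p$; the two-sided version (applying the same bound to $-M_n$) costs only a factor of $2$.

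Next comes the peeling. Partition $[0, nc^2]$ into the $N := \lceil nc^2/\epsilon\rceil$ slabs $I_\ell = [(\ell-1)\epsilon,\ \ell\epsilon)$. On the slab event $\{\mathrm{Var}_n \in I_\ell\}$ we have both $\mathrm{Var}_n < \ell\epsilon$ (so the deterministic cap $\sigma^2 = \ell\epsilon$ is legitimate) and, by subadditivity of the square root, $\sqrt{\mathrm{Var}_n} \ge \sqrt{(\ell-1)\epsilon} \ge \sqrt{\ell\epsilon} - \sqrt{\epsilon}$. Substituting this lower bound into the deviation threshold $2\sqrt{\mathrm{Var}_n\log(1/p)} + 2\sqrt{\epsilon\log(1/p)} + 2c\log(1/p)$ shows that, on the slab, the stated event is contained in $\{|M_n| \ge 2\sqrt{\ell\epsilon\,a} + 2ca\}$ — the extra additive term $2\sqrt{\epsilon\log(1/p)}$ in the lemma is precisely the slack that absorbs the $\sqrt{\epsilon}$ rounding error incurred by replacing $\sqrt{\mathrm{Var}_n}$ with the slab endpoint $\sqrt{\ell\epsilon}$. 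Applying the two-sided Freedman bound from the previous step with $\sigma^2 = \ell\epsilon$ gives probability at most $2p$ per slab, and a union bound over the $N \le nc^2/\epsilon + 1$ slabs yields $(2nc^2/\epsilon + 2)p$, as claimed.

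The step I expect to be most delicate is making the deterministic-variance Freedman bound genuinely applicable to the \emph{intersection} event $\{M_n \ge t,\ \mathrm{Var}_n \le \sigma^2\}$: because $\mathrm{Var}_n$ is random, one cannot simply invoke a fixed-variance Bernstein bound, and the stopping-time reduction must be set up carefully so that the stopped martingale's predictable quadratic variation is truly bounded by $\sigma^2$ (up to the harmless last-increment contribution, which the slab width and the $\sqrt{\epsilon}$ slack comfortably absorb). The remaining ingredients — the quadratic verification for the threshold $t$, the square-root subadditivity estimate, and the slab count — are all elementary, and the tuning of $\epsilon$ transparently trades the additive $2\sqrt{\epsilon\log(1/p)}$ term in the deviation against the multiplicative $2nc^2/\epsilon$ inflation of the failure probability.
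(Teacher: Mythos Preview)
The paper does not prove this lemma; it is imported verbatim as ``Lemma 10 in \cite{zhang2020almost}'' and used as a black box, so there is no in-paper proof to compare against. Your peeling-plus-Freedman argument is correct and is exactly the standard route to such self-normalized Bernstein bounds: the deterministic cap $\mathrm{Var}_n\le nc^2$, the slab decomposition of width $\epsilon$, the square-root subadditivity $\sqrt{(\ell-1)\epsilon}\ge\sqrt{\ell\epsilon}-\sqrt{\epsilon}$ that is absorbed by the $2\sqrt{\epsilon\log(1/p)}$ term, and the per-slab application of Freedman with $\sigma^2=\ell\epsilon$ all check out, and the slab count $\lceil nc^2/\epsilon\rceil\le nc^2/\epsilon+1$ gives the stated $(2nc^2/\epsilon+2)p$.
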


\begin{lemma}\label{lemma:dual} 
Let $\Phi$ be a bounded convex closed subset of $\mathbb{R}^d$. Let $\Phi^*= \{\psi\in \mathbb{R}^d | \phi^{\top}\psi\geq 0, \forall \phi \in \Phi \}$. Then there exists $\bar{\phi}\in \Phi$ satisfying that $\bar{\phi}^{\top}\psi\geq \frac{1}{2d}\max_{\phi\in \Phi}\phi^{\top}\psi$ for any $\psi \in \Phi^*$.
\end{lemma}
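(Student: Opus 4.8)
The idea is to take $\bar\phi$ to be the centroid (center of mass) of $\Phi$ with respect to the uniform measure on its affine hull $\mathrm{aff}(\Phi)$, and to invoke the classical fact that the centroid of a convex body sits ``deep'' inside it. Since $\Phi$ is compact and convex, $\bar\phi$ is well defined and lies in the relative interior of $\Phi$, so in particular $\bar\phi\in\Phi$, as the statement requires. Set $m:=\dim\mathrm{aff}(\Phi)\le d$. If $m=0$ then $\Phi=\{\bar\phi\}$ and the claim is immediate, since for $\psi\in\Phi^*$ we have $\max_{\phi\in\Phi}\phi^{\top}\psi=\bar\phi^{\top}\psi\ge 0$; hence assume $m\ge 1$.

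The key input is Minkowski's centroid inequality: for a convex body of (affine) dimension $m$ with centroid $\bar\phi$, one has $\bar\phi+\frac1m(\bar\phi-\Phi)\subseteq\Phi$; equivalently, for every $\phi\in\Phi$ the point $\phi':=\bar\phi-\frac1m(\phi-\bar\phi)$ still belongs to $\Phi$. (This reduces, by slicing $\Phi$ with the lines through $\bar\phi$ and integrating cross-sectional volumes, to the one-dimensional statement that along any such line the two distances from $\bar\phi$ to $\partial\Phi$ differ by a factor at most $m$; I would either cite this or include the short reduction, the extremal case being a simplex.) Granting it, fix an arbitrary $\psi\in\Phi^*$ and an arbitrary $\phi\in\Phi$. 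Because $\phi'\in\Phi$, the definition of $\Phi^*$ gives
\[
0\ \le\ (\phi')^{\top}\psi\ =\ \bar\phi^{\top}\psi-\tfrac1m\,\phi^{\top}\psi+\tfrac1m\,\bar\phi^{\top}\psi,
\]
i.e. $\phi^{\top}\psi\le(m+1)\,\bar\phi^{\top}\psi$. Taking the maximum over $\phi\in\Phi$ and using $m+1\le d+1\le 2d$ (valid for $d\ge1$) yields $\max_{\phi\in\Phi}\phi^{\top}\psi\le(m+1)\,\bar\phi^{\top}\psi\le 2d\,\bar\phi^{\top}\psi$, which is exactly the desired bound (and incidentally shows $\bar\phi^{\top}\psi\ge0$).

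I do not expect a real obstacle: once the centroid is chosen, the rest is a two-line manipulation, and the only substantive ingredient is the classical Minkowski inequality. The one point requiring a little care is the possibility that $\Phi$ is lower dimensional, which is precisely why we use the centroid---it always lies in $\Phi$---rather than, say, the center of the maximum-volume inscribed ellipsoid; note also that the constant we actually obtain, $\frac1{m+1}\ge\frac1{d+1}$, is slightly sharper than the stated $\frac1{2d}$. For completeness, an alternative proof (when $\Phi$ is full dimensional) takes $\bar\phi$ to be the center of the John ellipsoid $E=\bar\phi+A\,\mathcal{B}$ of $\Phi$: from $E\subseteq\Phi$ one gets $\bar\phi^{\top}\psi\ge\|A\psi\|_2$ for $\psi\in\Phi^*$, while John's theorem $\Phi\subseteq\bar\phi+d\,A\,\mathcal{B}$ gives $\max_{\phi\in\Phi}\phi^{\top}\psi\le\bar\phi^{\top}\psi+d\|A\psi\|_2\le(d+1)\bar\phi^{\top}\psi$, and the same conclusion follows.
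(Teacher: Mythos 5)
Your proof is correct and follows essentially the same route as the paper: both take $\bar{\phi}$ to be the centroid of $\Phi$ and exploit the fact that the centroid cannot lie too close to the boundary in any direction, which yields $\max_{\phi\in\Phi}\phi^{\top}\psi\le (d+1)\,\bar{\phi}^{\top}\psi$. The only difference is that the paper re-derives this fact by hand via a slicing/density argument (which is precisely the standard proof of the Minkowski chord-ratio theorem you cite), whereas you invoke the classical result directly and, as a bonus, treat the lower-dimensional case that the paper's full-dimensional centroid formula glosses over.
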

\begin{proof}  {Without loss of generality, we assume that $\|\psi\|_2 = 1$.}  
Let $u$ be the standard measure of $\mathbb{R}^d$. 
Define  {$\bar{\phi} = \frac{\int_{\Phi}\phi du(\phi)}{\int_{\Phi}d u(\phi)} $} be the geometry center of $\Phi$. We will show that 
$\bar{\phi}^{\top}\psi\geq \frac{1}{2d}\max_{\phi\in \Phi}\phi^{\top}\psi$ for any $\psi \in \Phi^*$. Fix $\psi \in \Phi^*$ and define $l = \max_{\phi\in \Phi}\phi^{\top}\psi$. 
 For $0\leq x\leq l$, we define $f(x) =\lim_{\epsilon\to 0} \frac{\int_{\phi\in \Phi, \phi^{\top}\psi \in [x,x+\epsilon)}du(\phi) }{\epsilon \cdot \int_{\phi}du(\phi) }$. Because $\Phi$ is a convex bounded set, $f(x)$ is well defined and continuous.  {By definition, it holds that $\int_{0}^{l}f(x)dx = 1$.}
 
 We claim that 
 \begin{align}
 \frac{f(l-y)}{y^{d-1}}\leq \frac{f(l-x)}{x^{d-1}} \label{eq:lrr}
 \end{align}
 for $0<x\leq y\leq l$.  

 {
\paragraph{Proof of \eqref{eq:lrr}.} 
Let $\mathcal{V}(w):= \{\phi\in \Phi: \phi^{\top}\psi=w  \}$ for $0\leq w\leq l$. $\mathcal{V}(l)$ is non-empty by definition. 
Fix $\tilde{\phi}\in \mathcal{V}(l)$. By convexity of $\Phi$, for any $\phi\in \mathcal{V}(l-y)$, $\frac{y-x}{y} \tilde{\phi}+ \frac{x}{y}\phi \in \mathcal{V}(l-x)$ for any $0<x\leq y \leq l$. As a result, $$\frac{y-x}{y}\tilde{\phi}+ \frac{x}{y}\mathcal{V}(l-y): = \left\{\frac{y-x}{y} \tilde{\phi}+ \frac{x}{y}\phi\mid\phi \in \mathcal{V}(l-y)\right\}\subset \mathcal{V}(l-x).$$
Let $\tilde{u}$ be the standard measure of $\mathbb{R}^{d-1}$. It then holds that
\begin{align}
\tilde{u}(\mathcal{V}(l-x))\geq \tilde{u}(\mathcal{V}(l-y))\cdot \left( \frac{x}{y}\right)^{d-1}
\end{align} 
for $0<x\leq y\leq l$. 
By definition of $f$, we have that
\begin{align*}
&\lim_{\epsilon\to 0} \frac{\epsilon \min_{w'\in [w,w+\epsilon)}\tilde{u}(\mathcal{V}(w'))}{\epsilon\int_{\Phi}du(\phi)}\\
\leq & f(w): =\lim_{\epsilon\to 0} \frac{\|\psi\|_2\int_{\phi\in \Phi, \phi^{\top}\psi\in [w,w+\epsilon) }du(\phi)}{\epsilon \int_{\Phi}du(\phi)} \\ = &\lim_{\epsilon\to 0} \frac{\epsilon\max_{w'\in [w,w+\epsilon)}\tilde{u}(\mathcal{V}(w')) }{\epsilon\int_{\Phi}du(\phi)}.
\end{align*}
In then follows that
\begin{align}
\lim_{\epsilon\to 0} \frac{ \min_{w'\in [w,w+\epsilon)}\tilde{u}(\mathcal{V}(w'))}{\int_{\Phi}du(\phi)} \leq f(w)\leq \lim_{\epsilon\to 0} \frac{ \max_{w'\in [w,w+\epsilon)}\tilde{u}(\mathcal{V}(w'))}{\int_{\Phi}du(\phi)}.\nonumber
\end{align}
As a result, we have that 
\begin{align}
 \frac{f(l-y)}{y^{d-1}} & \leq \lim_{\epsilon\to 0}\max_{\upsilon\in [0,\epsilon)]} \frac{\tilde{u}(\mathcal{V}(l-y+\upsilon))}{y^{d-1} \int_{\Phi}du(\phi) }\nonumber
\\ &  \leq \lim_{\epsilon\to 0}\max_{\upsilon\in [0,\epsilon)} \min_{\upsilon'\in [0,\epsilon) }\frac{\tilde{u}(\mathcal{V}(l-x+\upsilon'))}{x^{d-1} \int_{\Phi}du(\phi)} \cdot \left(\frac{(y-\upsilon)x}{y(x-\upsilon')} \right)^{d-1}\nonumber
\\ & \leq  \frac{f(l-x)}{x^{d-1}}. \nonumber
\end{align}
The proof of \eqref{eq:lrr} is finished.
}

 Let $z = \bar{\phi}^{\top}\psi$ and $f(z)=b$,
 we then have that $f(x)\geq \frac{b\cdot (l-x)^{d-1}}{ (l-z)^{d-1}}$ for any $x\in [z,l]$, and $f(x)\leq \frac{b(l-x)^{d-1}}{(l-z)^{d-1}}$ for any $x\in [z]$. 
 
 %\begin{align}
%\int_{0}^{l/2d} f(x)dx\leq \int_{l/2d}^l f(x)dx.
% \end{align}
 {
By definition, we have that
\begin{align}
z &= \frac{\int_{\Phi}\phi^{\top}\psi du(\phi)}{\int_{\phi}du(\phi)} \nonumber
\\ & = \lim_{\epsilon\to 0}\sum_{i=0}^{\left\lceil l/\epsilon\right\rceil}\frac{\int_{\phi\in \Phi, \phi^{\top}\psi \in [i\epsilon,(i+1)\epsilon) } \phi^{\top}\psi du(\phi) }{\int_{\Phi}du(\phi)}\nonumber
\\ &  = \lim_{\epsilon\to 0}\sum_{i=0}^{\left\lceil l/\epsilon\right\rceil}\frac{\int_{\phi\in \Phi, \phi^{\top}\psi \in [i\epsilon,(i+1)\epsilon) } i\epsilon du(\phi) }{\int_{\Phi}du(\phi)}\nonumber
\\ & = \lim_{\epsilon\to 0}\sum_{i=0}^{\left\lceil l/\epsilon\right\rceil}f(i\epsilon)i\epsilon^2\nonumber
\\ & = \int_{0}^l f(x)xdx. \nonumber
\end{align}
}

As a result, we have that
\begin{align}
\int_{ 0\leq x \leq z}f(x)\cdot (z-x) d(x)= \int_{ z\leq x\leq l} f(x)(x-z)dx, \nonumber
\end{align}
which implies
\begin{align}
 & \int_{z\leq x\leq l} \frac{b(l-x)^{d-1}(x-z)}{(l-z)^{d-1}}\leq \int_{ z\leq x\leq l} f(x)(x-z)dx  =\int_{ 0\leq x \leq z}f(x)\cdot (z-x) d(x) \leq \int_{0\leq x\leq z} \frac{b(l-x)^{d-1} (z-x)}{(l-z)^{d-1}}. \nonumber
\end{align}
Therefore, $\int_{z\leq x\leq l}(l-x)^{d-1}(x-z)dx\leq \int_{0\leq x\leq z} (l-x)^{d-1}(z-x)dx$, which means 
\begin{align}
\frac{(l-z)^{d+1}}{d(d+1)}\leq \frac{l^{d+1}}{d+1}- \frac{(l-z)l^{d}}{d} + \frac{(l-z)^{d+1}}{d(d+1)}. 
\end{align}

Then it holds that $z\geq \frac{l}{d+1}\geq \frac{l}{2d}$. The proof is completed.
\end{proof}

\begin{lemma}\label{lemma:tool5} Let $l$ be a positive integer. 
Let $\Phi = \{\phi_i\}_{i=1}^n$ and $\Psi = \{\psi_j\}_{j=1}^m$ be two group of vectors in $\mathbb{R}^l$ satisfying that $\phi_i^{\top}\psi_j\geq 0$ for any $1\leq i \leq n$ and $1\leq j \leq m$. It then holds that
\begin{align}
\sum_{i=1}^n \max_j \phi_i^{\top}\psi_j\leq 2l\max_{j}\sum_{i=1}^n \phi_i^{\top}\psi_j .\nonumber
\end{align}
\end{lemma}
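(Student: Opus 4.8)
The plan is to derive Lemma~\ref{lemma:tool5} directly from the duality estimate of Lemma~\ref{lemma:dual}, applied not to $\Phi$ but to the \emph{polytope generated by $\Psi$}. Set $D := \mathrm{conv}(\{\psi_j\}_{j=1}^m)$, which is a bounded, convex, closed subset of $\mathbb{R}^l$, and let $D^{*} := \{x \in \mathbb{R}^l : x^{\top}y \ge 0 \text{ for all } y \in D\}$ be its dual cone. The hypothesis $\phi_i^{\top}\psi_j \ge 0$ for all $i,j$ says exactly that each $\phi_i$ has nonnegative inner product with every generator of $D$, hence with every point of $D$; that is, $\phi_i \in D^{*}$ for every $i$. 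So Lemma~\ref{lemma:dual} is applicable with the convex set taken to be $D$.

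The steps I would carry out are then purely mechanical. Apply Lemma~\ref{lemma:dual} in $\mathbb{R}^l$ to $D$: it produces a point $\bar\psi \in D$ with $\bar\psi^{\top}x \ge \frac{1}{2l}\max_{y \in D} y^{\top}x$ for every $x \in D^{*}$. Taking $x = \phi_i \in D^{*}$ and using that the maximum of a linear functional over a polytope is attained at a vertex (a vertex of $D$ being one of the $\psi_j$), we get for each $i$ that
\[
\bar\psi^{\top}\phi_i \;\ge\; \frac{1}{2l}\,\max_{1\le j\le m}\psi_j^{\top}\phi_i \;=\; \frac{1}{2l}\,\max_{j}\phi_i^{\top}\psi_j .
\]
Summing over $i$ and using linearity gives $\sum_{i=1}^n \max_{j}\phi_i^{\top}\psi_j \le 2l\,\bar\psi^{\top}\big(\sum_{i=1}^n \phi_i\big)$. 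Finally, since $\bar\psi \in D$ we may write $\bar\psi = \sum_{j=1}^m \lambda_j \psi_j$ with $\lambda_j \ge 0$ and $\sum_j \lambda_j = 1$, so that $\bar\psi^{\top}\big(\sum_i \phi_i\big) = \sum_j \lambda_j\,\psi_j^{\top}\big(\sum_i\phi_i\big) \le \max_j \psi_j^{\top}\big(\sum_i\phi_i\big) = \max_j \sum_i \phi_i^{\top}\psi_j$, because a convex average never exceeds the maximum. Chaining these two bounds yields $\sum_i \max_j \phi_i^{\top}\psi_j \le 2l\max_j \sum_i \phi_i^{\top}\psi_j$, as claimed.

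The only genuine decision in this argument is recognizing that Lemma~\ref{lemma:dual} should be fed $\mathrm{conv}(\Psi)$ rather than $\mathrm{conv}(\Phi)$ — once that is seen, the rest is bookkeeping, and all of the substantive difficulty (including the source of the factor $2l$) is already absorbed into Lemma~\ref{lemma:dual}, via the level-set scaling bound \eqref{eq:lrr} and the centroid comparison. The one technical wrinkle to note is that $\mathrm{conv}(\Psi)$ may be lower-dimensional (or contain the origin), so the centroid construction inside Lemma~\ref{lemma:dual} should, if necessary, be carried out within the affine hull of $\Psi$; this merely replaces $l$ by $\dim\mathrm{aff}(\Psi)\le l$ and therefore does not weaken the stated constant.
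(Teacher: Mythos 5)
Your proof is correct and is essentially identical to the paper's own argument: both apply Lemma~\ref{lemma:dual} to $\mathrm{conv}(\Psi)$ (with the $\phi_i$ lying in its dual cone) to obtain a single point $\psi^*\in\mathrm{conv}(\Psi)$ dominating each $\max_j\phi_i^{\top}\psi_j$ up to the factor $2l$, then sum over $i$ and bound the convex combination by the maximum over $j$. Your added remark about carrying out the centroid construction in the affine hull when $\mathrm{conv}(\Psi)$ is lower-dimensional is a sensible refinement the paper leaves implicit.
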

\begin{proof}
By Lemma~\ref{lemma:dual}, there exists $\psi^*\in \mathrm{Conv}(\Psi)$ such that $\phi_i^{\top}\psi^*\geq \frac{1}{2l}\max_j \phi_i^{\top}\psi_j$ for any $1\leq i \leq n$. As a result, we have that
\begin{align}
 {
\sum_{i=1}^n \max_j \phi_i^{\top}\psi_j\leq 2l\sum_{i=1}^n \phi_i^{\top} \psi^*\leq 2 l\max_{j}\sum_{i=1}^n \phi_i^{\top}\psi_j.\nonumber}
\end{align}
The proof is completed.
\end{proof}

\begin{lemma}\label{lemma:epl}
Let $\{\phi_i\}_{i= 1}^n$ be a group of vectors in $\mathbb{R}^d$ such that $\|\phi_i\|_2 \leq L$. Fix $\lambda>0$ and let $\Lambda_i  =\lambda\mathbf{I}+\sum_{i'=1}^{i}\phi_i\phi_i^{\top}$. For any sequence $0= i_1<i_2<\ldots<i_k= n$, 
\begin{align}
\sum_{j=1}^k  \min\left\{ \sum_{i=i_j+1}^{i_{j+1}} \phi_i^{\top} \Lambda_{i_j}^{-1}\phi_i,1 \right\} \leq 6d\log(nL/\lambda).
\end{align}
\end{lemma}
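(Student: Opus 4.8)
The plan is to reduce the statement to the standard elliptical potential lemma by showing that within each block $j$, replacing the "frozen" matrix $\Lambda_{i_j}^{-1}$ by the running matrices $\Lambda_{i-1}^{-1}$ costs only a constant factor, and then summing telescopically. First I would record the two classical facts: (i) for any $i$, $\phi_i^\top\Lambda_{i}^{-1}\phi_i\le 1$ trivially when $\|\phi_i\|_2\le L$ and we take $\lambda$ not too small — but more robustly, $\min\{\phi_i^\top\Lambda_{i-1}^{-1}\phi_i,1\}\le 2\log\det(\Lambda_i)-2\log\det(\Lambda_{i-1})$ via $\log(1+x)\ge x/2$ for $x\in[0,1]$ together with the matrix-determinant identity $\det(\Lambda_i)=\det(\Lambda_{i-1})(1+\phi_i^\top\Lambda_{i-1}^{-1}\phi_i)$; and (ii) the monotonicity $\Lambda_{i_j}\preccurlyeq \Lambda_{i-1}$ for every $i\ge i_j+1$, hence $\Lambda_{i-1}^{-1}\preccurlyeq \Lambda_{i_j}^{-1}$, so the frozen quadratic form dominates the running one.

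The key step is the within-block comparison. Fix a block $j$ and abbreviate $x_i:=\phi_i^\top\Lambda_{i-1}^{-1}\phi_i$ and $y_i:=\phi_i^\top\Lambda_{i_j}^{-1}\phi_i$ for $i_j<i\le i_{j+1}$; by (ii) we have $x_i\le y_i$, but we need an inequality in the other direction up to a constant. The standard trick: since $\Lambda_{i-1}=\Lambda_{i_j}+\sum_{i'=i_j+1}^{i-1}\phi_{i'}\phi_{i'}^\top$, and we are inside the regime where the accumulated "information" in the block is controlled, one shows $\det(\Lambda_{i_{j+1}})/\det(\Lambda_{i_j})\le 2$ would give $y_i\le \text{const}$; but that is exactly the quantity we are trying to bound, so instead I would argue directly on the sum. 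Concretely, I would bound
\begin{align}
\min\Big\{\sum_{i=i_j+1}^{i_{j+1}}\phi_i^\top\Lambda_{i_j}^{-1}\phi_i,\,1\Big\}
\le 2\sum_{i=i_j+1}^{i_{j+1}}\min\Big\{\phi_i^\top\Lambda_{i-1}^{-1}\phi_i,\,1\Big\}
\le 4\big(\log\det(\Lambda_{i_{j+1}})-\log\det(\Lambda_{i_j})\big),\nonumber
\end{align}
where the first inequality is the real content: if the left side equals the sum, then the block sum is $\le 1$, and on the set where each $\phi_{i'}^\top\Lambda_{i'-1}^{-1}\phi_{i'}$ is small the matrices $\Lambda_{i_j}$ and $\Lambda_{i-1}$ are within a factor $2$ of each other (by a determinant/operator-norm argument using $\sum_{i'} \phi_{i'}^\top\Lambda_{i'-1}^{-1}\phi_{i'}\le 1$ and $\Lambda_{i-1}\succcurlyeq\Lambda_{i_j}$, so $\Lambda_{i-1}\preccurlyeq 2\Lambda_{i_j}$ in the block), giving $y_i\le 2x_i$; if instead the left side equals $1$, one uses that a single additional application of $\log(1+\cdot)$ still yields the determinant bound. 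Summing over $j=1,\dots,k-1$ telescopes the determinant terms to $4(\log\det(\Lambda_n)-\log\det(\lambda\mathbf{I}))\le 4d\log(1+nL^2/(d\lambda))\le 6d\log(nL/\lambda)$ after absorbing constants.

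The main obstacle I expect is making the within-block comparison $y_i\le 2x_i$ (or the aggregated version) fully rigorous in the boundary case where the block sum is close to $1$ rather than genuinely small: there the naive "$\Lambda_{i-1}\preccurlyeq 2\Lambda_{i_j}$" argument is tight and one must be careful that the constant does not blow up. The clean way around it, which I would adopt, is to first truncate — split each block at the first index where the partial sum $\sum \phi_{i'}^\top\Lambda_{i'-1}^{-1}\phi_{i'}$ exceeds $1$, treat the at-most-one "overflow" term separately using $\min\{\cdot,1\}\le 1\le 2\log 2$, and apply the factor-$2$ comparison only on the genuinely-small prefix. Everything else is the standard $\log\det$ telescoping and the bound $\log\det(\Lambda_n)\le d\log(\lambda + nL^2/d)$, which are routine.
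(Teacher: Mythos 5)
Your proposal is correct and follows essentially the same route as the paper: both arguments split the blocks into those where the frozen matrix $\Lambda_{i_j}$ is within a constant factor of the running matrices (handled by the factor-$2$ comparison plus the standard $\log\det$ telescoping) and the remaining ``saturated'' blocks, each of which forces $\det(\Lambda_{i_{j+1}})/\det(\Lambda_{i_j})\geq 2$ and can therefore be charged to the same telescoping sum --- the paper simply counts these bad blocks globally via $2^{|\mathcal{J}|}\leq \prod_{j\in\mathcal{J}}\det(\Lambda_{i_{j+1}})/\det(\Lambda_{i_j})$ and defines them by the operator condition $\Lambda_{i_{j+1}}\not\preccurlyeq 2\Lambda_{i_j}$ rather than by your per-block truncation. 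One small repair to your write-up: controlling the \emph{running} partial sums by $1$ only yields $\Lambda_{i-1}\preccurlyeq e\,\Lambda_{i_j}$ (via $\det(\Lambda_{i-1})/\det(\Lambda_{i_j})=\prod(1+x_{i'})\leq e$ and the generalized-eigenvalue bound), whereas the factor $2$ you quote comes from bounding the \emph{frozen} sum $\sum_{i'}\phi_{i'}^{\top}\Lambda_{i_j}^{-1}\phi_{i'}\leq 1$ and a trace argument; either variant is fine since the constant is absorbed.
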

\begin{proof}
Let $\mathcal{J}\subset [k-1]$ be the indices $j$ such that $\Lambda_{i_{j+1}} \preccurlyeq 2\Lambda_{i_j}$ does not hold. Then we have
\begin{align}
2^{|\mathcal{J}|}\leq \Pi_{j\in \mathcal{J}}\frac{\mathrm{det}(\Lambda_{i_{j+1}})}{\mathrm{det}(\Lambda_{i_j})} \leq (nL^2)^d,
\end{align}
which implies $|\mathcal{J}|\leq 2d\log_2(nL/\lambda)$.

Continue the computation, 
\begin{align}
\sum_{j=1}^k  \min\left\{ \sum_{i=i_j+1}^{i_{j+1}} \phi_i^{\top} \Lambda_{i_j}^{-1}\phi_i,1 \right\} &  \leq |\mathcal{J}|+\sum_{j\notin \mathcal{J}}\sum_{i=i_j+1}^{i_{j+1}} \phi_i^{\top} \Lambda_{i_j}^{-1}\phi_i \nonumber
\\ & \leq 2d\log_2(nL/\lambda) + 2\sum_{i=1}^n \phi_i^{\top}\Lambda_{i+1}^{-1}\phi_i \nonumber
\\ & \leq 6d\log_2(nL/\lambda).
\end{align}
The proof is completed.
\end{proof}

\section{Regret Analysis (Proof of Theorem~\ref{thm:main})}\label{app:reg}

%\begin{lemma}\label{lemma:conti}
%Let the dataset $\mathcal{D}:= \{ (s_i,a_i,s_i')\}_{i=1}^n$ be fixed.
%\end{lemma}

In this section, we present regret analysis for Algorithm~\ref{alg:main}, i.e., the proof of Theorem~\ref{thm:main}.

%The left of this section is devoted to the proof of Theorem~\ref{thm:main}.

%We need the key lemma below to prove Theorem~\ref{thm:main}. And the rest of this section is devoted to the proof of Lemma~\ref{lemma:key}.
%\begin{condition}\label{cond1} 
%For any proper $h,k$, it holds that
%\begin{align}
%\max_{\phi \in \mathrm{Conv}(\Phi)}\sqrt{\phi^{\top} \left( \lambda\mathbf{I} + \sum_{i = 1}^n\phi(s_i,a_i)(\phi(s_i,a_I ))^{\top}\right)^{-1}\phi}\leq \frac{1}{64d\alpha}.
%\end{align}
%\end{condition}

%\begin{lemma}\label{lemma:key}
%Assume Condition~\ref{cond1} holds for dataset $\mathcal{D}$. By taking $\mathcal{D}$ as the initial data set, the regret of Algorithm~\ref{alg:main} in $k$ episodes is bounded by  $\tilde{O}(\sqrt{d^{4.5}\sqrt{K}}+d^9)$ with probability $1-\delta$.
%\end{lemma}

%Given Lemma~\ref{lemma:key}, we prove Theorem~\ref{thm:main} as below.
%\begin{proof}[Proof of Theorem~\ref{thm:main}]
%Note that there are at most $j_{\mathrm{max}}+1 = O(d\log(KH))$ outer epochs. Let $1\leq j \leq j_{\mathrm{max}}$ be fixed. For the $j$-th outer epoch for $1\leq$, since we set $r^{(j)}(s,a) = \mathbb{I}\left[ 
 %  ..\geq ..  \right]$ and re-direct all state-action pairs with non-zero reward to a dumb state, then Condition~\ref{cond1} holds for the modified MDP

%\end{proof}

\subsection{The Successful Event}

%\simon{give some intuitions about the $\epsilon$-net in the linear MDP context.}

We first introduce the successful event $\mathcal{G}$. 
Fix $k\in [K]$. For any $v\in \mathcal{W}_{\epsilon}$, let $(\hat{\theta}^k(v) , \tilde{\theta}^k(v), \Lambda^k(v) )$ be the output of Algorithm~\ref{alg:vlambdae} with input as $\{s_{h}^{k'},a_{h}^{k'},s_{h+1}^{k'} \}_{k'\in [k-1],h\in [H]}$ and $v$. Recall that $\theta(v)=\mu^{\top}v$. Let $\kappa    = 13\sqrt{6d^2\log^2(KH/\delta)} +72\log(KH/\delta)$. Define  $\mathcal{G}^k (v)$ to be the event where
\begin{align}
 \| \theta(v) -\hat{\theta}(v) \|_{\Lambda^k(v)} \leq \kappa,  \quad \|\theta(v^2) - \tilde{\theta}(v)\|_{\Lambda^k(v)} \leq 4\kappa. 
\end{align}

With Lemma~\ref{lemma:tool1}, we have that $\mathrm{Pr}(\mathcal{G}^k (v))\geq 1-10KH\delta/|\mathcal{W}_{\epsilon}|$.

Define $\mathcal{G} ^k = \cap_{v\in \mathcal{W}_{\epsilon}}\mathcal{G} ^k(v)$ and $\mathcal{G}_{\mu}  = \cap_k \mathcal{G} ^k$. Then we have that $\mathrm{Pr}(\mathcal{G}_{\mu} ) \geq 1-10K^2H\delta$. On the other hand, we define $\mathcal{G}_{r} = \{ \theta_r \in \Theta^k ,\forall 1\leq k \leq K \}$. By Lemma~\ref{assum1}, we have that $\mathrm{Pr}(\mathcal{G}_r)\geq 1- 10KH\delta$. Let $\mathcal{G}= \mathcal{G}_{\mu}\cap\mathcal{G}_r$. It then holds that $\mathrm{Pr}(\mathcal{G})\geq 1-20KH\delta$. 
In the rest of this section, we continue the proof conditioned on  $\mathcal{G}$.

\subsection{Regret Decomposition}

We start with showing that the maintained value function and $Q$-function are nearly optimistic.

\begin{lemma}\label{lemma:opt}
Conditioned on $\mathcal{G}$, it holds that $\theta_{r}\in \Theta^k$ and $\mu\in \mathcal{U}^k$ for any $k$.
\end{lemma}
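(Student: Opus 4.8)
The plan is to prove Lemma~\ref{lemma:opt} in two independent parts: the reward part $\theta_r\in\Theta^k$, and the transition part $\mu\in\mathcal{U}^k$. The reward part is essentially immediate: conditioned on $\mathcal{G}$ we have $\mathcal{G}_r=\{\theta_r\in\Theta^k,\ \forall k\in[K]\}$ by definition, so there is nothing further to do once we record this. Thus the real content is showing $\mu\in\mathcal{U}^k$ under $\mathcal{G}_\mu$.

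For the transition part, recall $\mathcal{U}^k=\{\tilde\mu\in\mathcal{U}\mid |\phi^\top\tilde\mu^\top v-\phi^\top\hat\theta^k(v)|\le b^k(v,\phi),\ \forall\phi\in\Phi,\ v\in\mathcal{W}_\epsilon\}$, with $b^k(v,\phi)=\alpha\sqrt{\phi^\top(\Lambda^k(v))^{-1}\phi}+4\epsilon$. So I must verify, for every $v\in\mathcal{W}_\epsilon$ and every $\phi\in\Phi$, that $|\phi^\top\theta(v)-\phi^\top\hat\theta^k(v)|\le b^k(v,\phi)$, where $\theta(v)=\mu^\top v$ (this is the only place the true $\mu$ enters, and it is trivially in $\mathcal{U}$). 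By Cauchy–Schwarz in the $\Lambda^k(v)$-norm, $|\phi^\top(\theta(v)-\hat\theta^k(v))|\le\|\phi\|_{(\Lambda^k(v))^{-1}}\|\theta(v)-\hat\theta^k(v)\|_{\Lambda^k(v)}$. On $\mathcal{G}^k(v)$ the second factor is at most $\kappa$. So it remains to check $\kappa\le\alpha$ (and then the $+4\epsilon$ slack is not even needed for this term — though it will be needed to absorb discretization error from replacing a general $v\in\mathcal{W}$ by its net point, which may be where the $4\epsilon$ comes in, so I should keep it). Since $\kappa=13\sqrt{6d^2\log^2(KH/\delta)}+72\log(KH/\delta)=13\sqrt6\,d\log(KH/\delta)+72\log(KH/\delta)$ and $\alpha=150d\sqrt{\log^2((KH)/\delta)}=150d\log(KH/\delta)$, a crude bound $13\sqrt6<32$ and $72\log(KH/\delta)\le 72 d\log(KH/\delta)$ gives $\kappa\le(32+72)d\log(KH/\delta)=104\,d\log(KH/\delta)\le\alpha$. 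Hence $|\phi^\top\theta(v)-\phi^\top\hat\theta^k(v)|\le\kappa\|\phi\|_{(\Lambda^k(v))^{-1}}\le\alpha\|\phi\|_{(\Lambda^k(v))^{-1}}\le b^k(v,\phi)$, so $\mu\in\mathcal{U}^k$.

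Putting the two parts together: on $\mathcal{G}=\mathcal{G}_\mu\cap\mathcal{G}_r$, for every $k\in[K]$ we have $\theta_r\in\Theta^k$ and $\mu\in\mathcal{U}^k$, which is the claim. The main (mild) obstacle is bookkeeping: I must make sure the event $\mathcal{G}^k(v)$ is invoked for the correct estimator — the one in Line~\ref{line:estimator} of Algorithm~\ref{alg:main} run on $\mathcal{D}^k$, which matches the definition of $(\hat\theta^k(v),\tilde\theta^k(v),\Lambda^k(v))$ used to define both $b^k$ and $\mathcal{G}^k(v)$ — and that the norm used in $b^k$ is exactly the one appearing in the high-probability bound of $\mathcal{G}^k(v)$. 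I should also note explicitly that the $\tilde\theta$ / second-moment bound in $\mathcal{G}^k(v)$ is not needed here (it is used elsewhere to control the variance proxies $\sigma_i^2$), so only the first inequality $\|\theta(v)-\hat\theta^k(v)\|_{\Lambda^k(v)}\le\kappa$ is used. Everything else is a one-line constant comparison $\kappa\le\alpha$ and Cauchy–Schwarz.
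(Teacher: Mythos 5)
Your proof is correct and follows essentially the same route as the paper's: the reward part is immediate from the definition of $\mathcal{G}_r$, and the transition part is Cauchy--Schwarz in the $\Lambda^k(v)$-weighted norm combined with the bound $\|\theta(v)-\hat\theta^k(v)\|_{\Lambda^k(v)}\le\kappa\le\alpha$ from the event $\mathcal{G}^k(v)$. Your explicit verification of $\kappa\le\alpha$ is a small bonus the paper relegates to Lemma~\ref{lemma:tool1}; otherwise the arguments coincide.
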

\begin{proof}
Recall that
\begin{align}
\mathcal{U}^k = \{ \tilde{\mu}\in \mathcal{U} |  |\phi^{\top}\tilde{\mu}^{\top}v -\phi^{\top}\hat{\theta}^k(v) | \leq b^k(v,\phi), \forall \phi\in \Phi(\epsilon), v\in \mathcal{W}_{\epsilon}    \} \nonumber
\end{align}
and $b^k(v,\phi) = \alpha \|\phi\|_{(\Lambda^k(v))^{-1}} +4\epsilon$. By the definition of $\mathcal{G}$, we have that for any $\phi\in \Phi$ and $v\in \mathcal{W}_{\epsilon}$,
\begin{align}
|\phi^{\top}\mu^\top v - \phi^{\top}\hat{\theta}^k(v)|\leq  \|\phi\|_{(\Lambda^k(v))^{-1}}\cdot \| \theta(v)-\hat{\theta}^k(v) \|_{\Lambda^k(v)} \leq \kappa\|\phi\|_{(\Lambda^k(v))^{-1}} \leq b^k(v,\phi). \nonumber
\end{align}
As a result, $\mu\in \mathcal{U}^k$. On the other hand,  $\theta_{r}\in \Theta^k$ by the definition of $\mathcal{G}$.
\end{proof}

For any proper $s,h,k$, we define $V_h^k(s): =\mathbb{E}_{\pi^k}[  \sum_{h'=h}^H r_{h'}|s_h = s, \tilde{\mu}^k,\theta^k]$ to be the value function w.r.t. the model $(\mu^k,\theta^k)$.

By definition of regret,  we have that
\begin{align}
&   \mathrm{Regret(K)} \nonumber
\\ & = \sum_{k=1}^K \min\left\{ \left(V^*_1(s_1^k) - V^{\pi^k}_1(s_1^k) \right)  ,1 \right\}\nonumber
\\ &  \leq  \sum_{k=1}^K \min\left\{ \left(V^k_1(s_1^k) - V^{\pi^k}_1(s_1^k) \right),1 \right\} \nonumber
\\  &\leq  \sum_{k=1}^K \underbrace{\min\left\{ \sum_{h=1}^H \left( (\phi_h^k)^{\top}\theta^k - (\phi_h^k)^{\top}\theta_r +  (\phi_{h}^k)^{\top}(\mu^k)^{\top}V_{h+1}^k-          (\phi_h^k)^{\top}\mu^{\top}V_{h+1}^k\right) ,1\right\} }_{T_1(k)} \nonumber
\\ & \qquad \qquad +\sum_{k=1}^K \underbrace{  \min \left\{\sum_{h=1}^H \left((\phi_h^k)^{\top}\mu^{\top}V_{h+1}^k - V_{h+1}^k(s_{h+1}^k) \right) ,1\right\} }_{T_2(k)}  \nonumber\\ & \qquad \qquad +\sum_{k=1}^K\underbrace{\left(\sum_{h=1}^H r_h^k - V_{1}^{\pi^k}(s_1^k) \right) }_{T_3(k)}\label{eq:decom}
\end{align}
Here the first inequality holds by the optimality of $(\tilde{\mu}^k,\theta^k)$. The right hand side of \eqref{eq:decom} consists of three terms, where $\sum_k T_1(k)$ is  the error due to inaccurate transition and reward model, $\sum_{k}T_2(k)$ is the martingale difference due to state transition, and $\sum_{k}T_3(k)$ is the difference between the expected accumulative reward and  the empirical accumulative reward. We have the lemma below to bound these terms.

\begin{lemma}\label{lemma:bdt1}
Conditioned on $\mathcal{G}$, with probability $1-10KH\delta$, it holds that
\begin{align}
\sum_{k=1}^K T_1(k)\leq \tilde{O}(d^{5.5}\sqrt{K}+d^{6.5}).\nonumber
\end{align}
\end{lemma}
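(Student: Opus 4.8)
# Proof Proposal for Lemma~\ref{lemma:bdt1}

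The plan is to bound $\sum_{k=1}^K T_1(k)$ by splitting the per-step error into a reward part and a transition part. The reward part $(\phi_h^k)^\top\theta^k - (\phi_h^k)^\top\theta_r$ is controlled by the VOFUL confidence guarantee (via $\mathcal{G}_r$): summing over $h$ and $k$, since $\theta^k,\theta_r\in\Theta^k$, the aggregate reward error telescopes into a variance-weighted sum whose total variance is $\widetilde{O}(1)$ by Assumption~\ref{assumr}, yielding a $\widetilde{O}(\mathrm{poly}(d)\sqrt{K})$ contribution. The transition part is $\sum_{h=1}^H (\phi_h^k)^\top\big((\mu^k)^\top - \mu^\top\big)V_{h+1}^k$. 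Since both $\mu^k$ and $\mu$ lie in $\mathcal{U}^k$ (Lemma~\ref{lemma:opt} for $\mu$, and $\mu^k$ by construction of the planning step), the triangle inequality through the confidence set bounds $\big|(\phi_h^k)^\top((\mu^k)^\top-\mu^\top)v\big| \le 2 b^k(v,\phi_h^k) = 2\alpha\|\phi_h^k\|_{(\Lambda^k(v))^{-1}} + 8\epsilon$ for any $v\in\mathcal{W}_\epsilon$. One must first discretize: replace $V_{h+1}^k$ by its nearest element $\bar V_{h+1}^k\in\mathcal{W}_\epsilon$, paying an $O(\epsilon)$ error per step (and using that $\mathcal{W}_\epsilon$ is an $\epsilon$-net of $\mathcal{W}\ni V_{h+1}^k$, which holds since $\mu^k\in\mathcal{U}\subseteq$ the set with $\|(\mu^k)^\top v\|_2\le 2\sqrt d$). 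This reduces the transition error to $\widetilde{O}\big(\sum_{k}\min\{\sum_h \alpha\|\phi_h^k\|_{(\Lambda^k(\bar V_{h+1}^k))^{-1}},1\} + KH\epsilon\big)$, and $KH\epsilon = \widetilde{O}(1)$ by the choice $\epsilon = 1/(KH)^4$.

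The core of the argument is then exactly the machinery sketched in Section~\ref{sec:tec}. I would carry it out in the following order. First, partition $[H]$ into the doubling segments $\mathcal{H}_i = \{h : H - H/2^{i-1}+1 \le h \le H - H/2^i\}$ for $i=1,\ldots,\log_2 H$, plus $\mathcal{H}_{\log_2 H + 1} = \{H\}$, so that $O(\log H)$ segments suffice. Second, for each fixed $i$, introduce the uniform variance proxy $(\bar\sigma_{h'}^{k'})^2 = \max_{h\in\mathcal{H}_i}(\sigma_{h'}^{k'}(V_{h+1}^*))^2$ and the corresponding matrix $\bar\Lambda^k = \lambda\mathbf{I} + \sum_{k'<k}\sum_{h'\in\mathcal{H}_i}\phi_{h'}^{k'}(\phi_{h'}^{k'})^\top/(\bar\sigma_{h'}^{k'})^2$, which satisfies $\bar\Lambda^k \preccurlyeq \Lambda^k(V_{h+1}^*)$ for every $h\in\mathcal{H}_i$ (this is where we need $\sigma_{h'}^{k'}(v)^2$ from Algorithm~\ref{alg:vlambdae} to genuinely upper-bound the variance, guaranteed under $\mathcal{G}$). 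Third, apply Cauchy–Schwarz to split $\sum_k\min\{\sum_{h\in\mathcal{H}_i}\|\phi_h^k\|_{(\bar\Lambda^k)^{-1}},1\}$ into $\sqrt{\sum_k\min\{\sum_{h\in\mathcal{H}_i}(\phi_h^k/\bar\sigma_h^k)^\top(\bar\Lambda^k)^{-1}(\phi_h^k/\bar\sigma_h^k),1\}}\cdot\sqrt{\sum_k\sum_{h\in\mathcal{H}_i}(\bar\sigma_h^k)^2}$; bound the first factor by $\widetilde{O}(\sqrt d)$ via the elliptical potential lemma (Lemma~\ref{lemma:epl}, whose segmented form is exactly what is needed since $\bar\Lambda^{k+1}-\bar\Lambda^k$ is the sum over $h'\in\mathcal{H}_i$). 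Fourth, bound the total-variance factor: use Lemma~\ref{lemma:tool5} to pull the $\max_{h\in\mathcal{H}_i}$ outside the sum at the cost of a $2(d+1)^2$ factor, reducing to $\sum_k\sum_{h\in\mathcal{H}_i}\mathbb{V}(P_{s_h^k,a_h^k},V_{h'+1}^*)$ for a single $h'\in\mathcal{H}_i$; then invoke \eqref{eq:vvbound} together with the total-variation bound $l_{h'}^* = \|V_{h'}^*-V_{h'+1}^*\|_\infty \le 2d/(H-h'+1) \le 2^{i+1}d/H$ (itself a consequence of Lemma~\ref{lemma:tool5} and Bellman contraction) to get $\widetilde{O}(Kd)$ per segment. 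Multiplying the factors gives $\widetilde{O}(\sqrt{d\cdot d^2\cdot Kd}) = \widetilde{O}(\sqrt{Kd^4})$ per segment before the $\alpha = \widetilde{O}(d)$ bonus scaling; accounting for $\alpha$, the $O(\log H)$ segments, the $2(d+1)^2$ factor from Lemma~\ref{lemma:tool5}, and the extra $\sqrt d$ from replacing $\Lambda^k(V_{h+1}^k)$ (the model's value) by $\Lambda^k(V_{h+1}^*)$ (requiring a further union bound / optimism comparison), the powers of $d$ accumulate to $d^{5.5}$, with the lower-order $d^{6.5}$ coming from the $\lambda\mathbf{I}$ regularization and net-covering terms.

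There are two places where care is needed. One subtlety: the analysis in Section~\ref{sec:tec} is written for the true optimal value functions $V_{h+1}^*$, whereas $T_1(k)$ involves the \emph{algorithm's} value functions $V_{h+1}^k$; bridging this requires showing (under $\mathcal{G}$, via Lemma~\ref{lemma:opt} and optimism) that $V_{h+1}^k$ and $V_{h+1}^*$ are close in the relevant variance-weighted sense, or more simply that the confidence widths $b^k(v,\cdot)$ evaluated at $v=\bar V_{h+1}^k$ can be compared to those at $v$ near $V_{h+1}^*$ — this is handled by the fact that $\mathcal{W}_\epsilon$ contains nets of \emph{all} value functions reachable by models in $\mathcal{U}$, so the uniform concentration over $\mathcal{W}_\epsilon$ already covers $\bar V_{h+1}^k$. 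The genuinely hard part — and the conceptual heart of the paper — is Step four: controlling $\sum_k\sum_{h\in\mathcal{H}_i}\mathbb{V}(P_{s_h^k,a_h^k},V_{h'+1}^*)$ uniformly over $h'\in\mathcal{H}_i$, because the naive bound $\max_{h'}\sum_k\sum_h \mathbb{V}(\cdot,V_{h'+1}^*)$ is genuinely large (Example~1). The resolution — Lemma~\ref{lemma:tool5} to exchange max and sum, \eqref{eq:vvbound} to convert variance into total variation, and the doubling segmentation to keep $(h_2-h_1)\cdot l_{h'}^* = \widetilde{O}(d)$ in every segment — is what makes the whole bound $\mathrm{poly}(d)\sqrt K$ rather than $\mathrm{poly}(d,H)\sqrt K$, and verifying each of these pieces fits together (in particular that the segmented elliptical potential lemma applies with the segment-dependent $\bar\Lambda^k$) is where the real work lies.
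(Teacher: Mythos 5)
Your proposal is correct and follows essentially the same route as the paper: the same reward/transition decomposition, VOFUL for the reward part, discretization over $\mathcal{W}_\epsilon$, doubling segments $\mathcal{H}_i$ with a segment-wise uniform variance proxy, the segmented elliptical potential lemma plus Cauchy--Schwarz, Lemma~\ref{lemma:tool5} to exchange max and sum, and the total-variation bound. The one subtlety you flag ($V^*_{h+1}$ versus the algorithm's $V^k_{h+1}$) is resolved in the paper not by bridging the two but by running the entire variance argument directly on the discretized model value functions $\bar V^k_{h+1}\in\mathcal{W}_\epsilon$ and proving the contraction bound $\|V_h^k-V_{h+1}^k\|_\infty\le 2d/(H-h+1)$ for $V^k$ itself (Lemma~\ref{lemma:tool3}), which is exactly the second of the two fixes you suggest.
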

%\simon{add some interpretations of each term.}

\begin{lemma}\label{lemma:bdt23}
Conditioned on $\mathcal{G}$, with probability $1-10KH\delta$, it holds that
\begin{align}
\sum_{k=1}^K (T_2(k)+T_3(k))\leq 8\sqrt{K\iota} +21\iota.
\end{align}
\end{lemma}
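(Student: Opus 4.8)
\textbf{Proof proposal for Lemma~\ref{lemma:bdt23}.}

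The plan is to treat $\sum_k T_2(k)$ and $\sum_k T_3(k)$ as two martingale-type sums and control each by a Freedman-style inequality (Lemma~\ref{lemma:tool2}), exploiting crucially the total-reward bound (Assumption~\ref{assumr}) and the analogous bound for the model-based value functions to keep the cumulative conditional variances $O(1)$ rather than $O(H)$. First I would dispense with the $\min\{\cdot,1\}$ truncation in $T_2(k)$: since $\mu\in\mathcal U^k$ on $\mathcal G$, the quantity $\sum_{h=1}^H\big((\phi_h^k)^\top\mu^\top V_{h+1}^k - V_{h+1}^k(s_{h+1}^k)\big)$ is, up to the lower-order $\epsilon$-discretization error already absorbed elsewhere, a sum of bounded martingale differences along the trajectory whose telescoped magnitude is controlled by the boundedness of the model value functions $V_h^k$, which themselves satisfy $0\le V_1^k(s_1^k)\le 1+o(1)$ because $(\mu^k,\theta^k)\in\mathcal U^k\times\Theta^k$ and the optimistic model still (nearly) respects the total-reward constraint; so I can replace the truncated $T_2(k)$ by the untruncated version at the cost of a negligible additive term.

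Next, stack the two sums into a single martingale indexed by $(k,h)$ with respect to the filtration $\{\mathcal F_h^k\}$. For $T_2$, the per-step increment is $(\phi_h^k)^\top\mu^\top V_{h+1}^k - V_{h+1}^k(s_{h+1}^k) = \mathbb E_{s'\sim P_{s_h^k,a_h^k}}[V_{h+1}^k(s')] - V_{h+1}^k(s_{h+1}^k)$, a genuine martingale difference bounded by $1$ in absolute value (values lie in $[0,1]$), and whose conditional variance is exactly $\mathbb V(P_{s_h^k,a_h^k},V_{h+1}^k)$. The key calculation is the ``law of total variance for MDPs'': along a trajectory generated by $\pi^k$ under the true kernel, $\sum_{h=1}^H \mathbb V(P_{s_h^k,a_h^k},V_{h+1}^k)$ has conditional expectation $O\big((\sum_h r_h^k)^2 + \text{bias terms}\big) = O(1)$ because total rewards are bounded by $1$; summing over $k$ gives cumulative variance $O(K)$, not $O(KH)$. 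For $T_3(k) = \sum_h r_h^k - V_1^{\pi^k}(s_1^k)$, this is the centered sum of realized rewards minus their expectation under the \emph{true} policy and kernel, again a martingale in $(k,h)$ with increments bounded by the per-step reward, cumulative conditional variance $\le \sum_k \mathbb E[(\sum_h r_h^k)^2\mid\mathcal F_1^k] = O(K)$ by Assumption~\ref{assumr}, and range $\le 1$ per episode. Applying Lemma~\ref{lemma:tool2} with $\mathrm{Var}_n = O(K)$, $c = O(1)$, and $\epsilon$ chosen as a constant multiple of $K$ yields, with the stated failure probability, a bound of the form $2\sqrt{O(K)\log(1/p)} + O(\log(1/p))$; tracking constants and setting $p$ appropriately in terms of $\delta$ and $KH$ produces the claimed $8\sqrt{K\iota} + 21\iota$.

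The main obstacle I anticipate is the variance bookkeeping for $T_2$: one must show $\sum_{k,h}\mathbb V(P_{s_h^k,a_h^k},V_{h+1}^k) = \tilde O(K)$ even though $V_h^k$ is the value function of the \emph{optimistic} model $(\mu^k,\theta^k)$ rather than the true model, so the clean ``total variance equals squared return'' identity does not apply verbatim — one gets cross terms involving $(\phi_h^k)^\top\mu^\top V_{h+1}^k - (\phi_h^k)^\top(\mu^k)^\top V_{h+1}^k$, i.e. exactly the $T_1$-type error, which is already controlled in Lemma~\ref{lemma:bdt1} and is lower order. The cleanest route is to bound $\sum_{k,h}\mathbb V(P_{s_h^k,a_h^k},V_{h+1}^k)$ by $2\sum_{k,h}\mathbb V(P_{s_h^k,a_h^k},V_{h+1}^{\pi^k}) + 2\sum_{k,h}\big((\phi_h^k)^\top\mu^\top(V_{h+1}^k - V_{h+1}^{\pi^k})\big)^2$ and handle the first piece by the exact law of total variance (giving $O(K)$ via Assumption~\ref{assumr}) and the second piece by the regret decomposition already established, which is $o(\sqrt K)$ per the target bound of Lemma~\ref{lemma:bdt1}. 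Everything else is a routine application of Lemma~\ref{lemma:tool2} with careful constant tracking, so I would not belabor those computations.
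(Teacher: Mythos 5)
Your overall skeleton is right and matches the paper for the easy parts: dropping the $\min\{\cdot,1\}$ in $T_2$ is harmless since $\min\{x,1\}\le x$, the $T_3$ term is handled exactly as you say (a martingale over episodes with increments in $[0,1]$ by Assumption~\ref{assumr}, giving $2\sqrt{2K\iota}+2\iota$), and both terms are ultimately controlled by Lemma~\ref{lemma:tool2} once the cumulative conditional variance is shown to be $\tilde O(K)$. The crux, as you correctly identify, is bounding $\sum_{k,h}\mathbb{V}(P_{s_h^k,a_h^k},V_{h+1}^k)$ where $V^k$ is the \emph{optimistic} model's value function, and this is where your plan breaks.

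Your proposed decomposition through $V^{\pi^k}$ does not work, for two reasons. First, the inequality $\mathbb{V}(P,u)\le 2\mathbb{V}(P,v)+2\big(P(u-v)\big)^2$ is false as stated: the correct cross term is the second moment $P(u-v)^2$, not the squared mean $(P(u-v))^2$ (Jensen goes the wrong way). Second, and more fundamentally, even the corrected cross term $\sum_{k,h}P_{s_h^k,a_h^k}(V_{h+1}^k-V_{h+1}^{\pi^k})^2\le \sum_{k,h}P_{s_h^k,a_h^k}\lvert V_{h+1}^k-V_{h+1}^{\pi^k}\rvert$ is \emph{not} a $T_1$-type or regret-type quantity: it is the per-step value gap summed over all $H$ steps of every episode. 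Unrolling $V_h^k-V_h^{\pi^k}$ as a sum of model errors over steps $h'\ge h$ and then summing over $h$ weights the error at step $h'$ by a factor of order $h'$, so this quantity generically scales like $H\cdot\sum_k T_1(k)=\tilde O(H\,\mathrm{poly}(d)\sqrt{K})$, which is not $\tilde O(K)$ in the horizon-free regime and would reintroduce polynomial $H$ dependence. The paper avoids this entirely by never comparing to $V^{\pi^k}$: it telescopes $(V_{h}^k)^2$ directly along the realized trajectory, writing $\sum_h\big(P_{s_h^k,a_h^k}(V_{h+1}^k)^2-(P_{s_h^k,a_h^k}V_{h+1}^k)^2\big)\le \sum_h\big((V_h^k)^2(s_h^k)-(P_{s_h^k,a_h^k}V_{h+1}^k)^2\big)+\sum_h\big(P_{s_h^k,a_h^k}-\mathbf{1}_{s_{h+1}^k}\big)(V_{h+1}^k)^2$, using the optimistic model's own Bellman recursion so that the first sum telescopes to $O(1)$ per episode, and the second is a martingale whose quadratic variation is again $\sum_{k,h}\mathbb{V}(P_{s_h^k,a_h^k},V_{h+1}^k)$. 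This yields a self-bounding inequality $\Sigma\le 2K+O(\sqrt{\Sigma\iota})+O(\iota)$ which is then solved for $\Sigma$. You would need to replace your cross-term step with this (or an equivalent) telescoping argument to close the proof.
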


The proofs of Lemma~\ref{lemma:bdt1} and Lemma~\ref{lemma:bdt23} are presented in Appendix~\ref{sec:pft1} and Appendix~\ref{sec:pft23} respectively.

\subsection{Putting All Pieces Together}

By \eqref{eq:decom}, Lemma~\ref{lemma:bdt1} and \ref{lemma:bdt23}, we conclude that, with probability $1-50K^2H^2\delta$, it holds that $\mathrm{Regret}(K) =  \tilde{O}(d^{5.5}\sqrt{ K}+d^{6.5})$.  The proof is completed by replacing $\delta$ with $\frac{\delta}{50K^2H^2}$.

%It is worth noting that the regret due to the unknown transition dynamic is bounded by $\tilde{O}(d^2\sqrt{K})$, while the regret due to unknown reward is much larger

\subsection{Missing Proofs}
\subsubsection{Bound of Term $T_1(k)$ (Proof of Lemma~\ref{lemma:bdt1})}\label{sec:pft1}

Direct computation gives that
\begin{align}Í
\sum_{k=1}^K T_1(k) &  = \sum_{k=1}^K \min\left\{ \sum_{h=1}^H \left( (\phi_h^k)^{\top}\theta^k - (\phi_h^k)^{\top}\theta_r +  (\phi_{h}^k)^{\top}(\tilde{\mu}^k)^{\top}V_{h+1}^k-          (\phi_h^k)^{\top}\mu^{\top}V_{h+1}^k\right) ,1\right\} \nonumber
\\ & \leq \sum_{k=1}^K \min\left\{\sum_{h=1}^H ( (\phi_h^k)^{\top} \theta^k - (\phi_h^k)^{\top}\theta_r  ),1\right\} + \sum_{k=1}^K \min\left\{\sum_{h=1}^H  ((\phi_{h}^k)^{\top}(\tilde{\mu}^k)^{\top}V_{h+1}^k-          (\phi_h^k)^{\top}\mu^{\top}V_{h+1}^k),1 \right\}.\label{eq:mark1}
\end{align}

By Lemma~\ref{assum1}, with probability $1-\delta$,
\begin{align}
 \sum_{k=1}^K \min\left\{\sum_{h=1}^H ( (\phi_h^k)^{\top} \theta^k - (\phi_h^k)^{\top}\theta_r  ),1\right\} &  = \tilde{O}\left(d^{6}\sqrt{\sum_{k=1}^K \sum_{h=1}^H \mathrm{Var}(R(s_h^k,a_h^k)/\sqrt{d})} + d^{6.5}\right)  \nonumber
 \\ &  = \tilde{O}(d^{5.5}\sqrt{K}+d^{6.5}).\label{eq:boundr}
\end{align}
Here  we use the fact that $\sum_{h=1}^H \mathrm{Var}(R(s_h^k,a_h^k))\leq \sum_{h=1}^H \bar{R}_h^k \leq 1$ with $\bar{R}_h^k$ as the maximal possible value of $R(s_h^k,a_h^k)$.

As for the second term in \eqref{eq:mark1}, noting that $V_{h+1}^k, \in \mathcal{W}$ for any proper $k,h$, letting $\bar{V}_{h+1}^k\in {\mathcal{W}_{\epsilon}}$ be such that $\|\bar{V}_{h+1}^k -V_{h+1}^k\|_{\infty}\leq \epsilon$, we have that
\begin{align}
&\sum_{k=1}^K \min\left\{\sum_{h=1}^H  ((\phi_{h}^k)^{\top}(\tilde{\mu}^k)^{\top}V_{h+1}^k-          (\phi_h^k)^{\top}\mu^{\top}V_{h+1}^k),1 \right\} \nonumber
\\ & \leq \sum_{k=1}^K \min\left\{  \sum_{h=1}^H  ((\phi_{h}^k)^{\top}(\tilde{\mu}^k)^{\top}\bar{V}_{h+1}^k-          (\phi_h^k)^{\top}\mu^{\top}\bar{V}_{h+1}^k)  ,1\right\} +2KH\epsilon\nonumber
\\ & \leq \sum_{k=1}^K \min\left\{  \sum_{h=1}^H  b^k(\bar{V}_{h+1}^k,\phi_h^k)  ,1\right\} +2KH\epsilon\nonumber
\\ & \leq \alpha  \sum_{k=1}^K \min\left\{  \sum_{h=1}^H  \sqrt{ (\phi_h^k)^{\top} (\Lambda^k(\bar{V}_{h+1}^k))^{-1}\phi_h^k  } ,1\right\}   + 6KH\epsilon.\label{eq:exp2}
\end{align}

Define $\beta_h^k =\sqrt{ (\phi_h^k)^{\top} (\Lambda^k(\bar{V}_{h+1}^k))^{-1}\phi_h^k  } $.

For $v\in \mathbb{R}^S$, let $( \hat{\theta}(v),\tilde{\theta}(v) ,\Lambda^k(v)$ be the output of Algorithm~\ref{alg:vlambdae} with input as $\{s_{h'}^{k'},a_{h'}^{k'},s_{h'+1}^k,  \}_{k'\in [k-1], h' \in [H]}$ and $v$. Define 
\begin{align}
(\sigma_h^k(v))^2 = (\phi_h^k)^{\top}\tilde{\theta}(v) - \left( (\phi_h^k)^{\top}\hat{\theta}(v)\right)^2  + 16\alpha \sqrt{(\phi_h^k)^{\top}(\Lambda^k(v))^{-1}} \phi_h^k  + 4 \epsilon.\label{def:gen_variance}
\end{align}
In words, $(\sigma_h^k(v))^2$ is the estimator for the variance $\mathbb{V}(P_{s_h^k,a_h^k},v)$. By the definition of $\mathcal{G}$ and the fact that $\bar{V}_{h'}^{k'} ,(\bar{V}_{h'}^{k'} )^2\in {\mathcal{W}_{\epsilon}}$,  we have that $\sigma_h^k(\bar{V}_{h'}^{k'})\geq  {\mathbb{V}(P_{s_h^k,a_h^k},\bar{V}_{h'}^{k'})}$.
%We then can rewrite $\Lambda_h^k $ as $\Lambda_h^k =\Gamma_h^k(V_{h+1}^k) = \lambda \mathbf{I} + \sum_{k'=1}^{k-1}\sum_{h=1}^H\frac{\phi_{h'}^{k'} (\phi_{h'}^{k'})^{\top}}{(\sigma_{h'}^{k'}(v^k_{h+1}))^2}$.
Let $i_{\mathrm{max}}=\log_2(H)+1$.
Recall $\mathcal{H}_{i} = \{h| H - \frac{H}{2^{i-1}}+1\leq h\leq   H - \frac{H}{2^i}\}$ for $i=1,2,\ldots, i_{\mathrm{max}}-1$and $\mathcal{H}_{i_{\mathrm{max}}}=\{H\}$.  Then we let
 $\mathcal{V}_{i} = \{\bar{V}_{h+1}^k| 1\leq k\leq K, 
h \in \mathcal{H}_i\}$ for $i=1,2,\ldots, \log_2(H)$ and $\mathcal{V}_{i_{\mathrm{max}}} = \{ \bar{V}_{H+1}^k | 1\leq k \leq K\}$. 

Fix $i$. For $h \in \mathcal{H}_i$,  we define $\bar{\Lambda_{(i)}^k}$ as
\begin{align}
\bar{\Lambda}^k_{(i)} = \lambda \mathbf{I} + \sum_{k'=1}^{k-1}\sum_{h\in \mathcal{H}_i} \frac{\phi_{h'}^{k'} (\phi_{h'}^{k'})^{\top}}{\max_{v\in \mathcal{V}_{i}}(\sigma_{h'}^{k'}(v))^2}.
\end{align}
By definition, it holds that $ {\bar{\Lambda}^k_{(i)}}\preccurlyeq \Lambda^k(\bar{V}_{h+1}^k
)$ for any $k$ and $h'\in \mathcal{H}_i$. It then holds that
\begin{align}
 & \sum_{k=1}^K\sum_{i=1}^{i_{\mathrm{max}}} \min \left\{ \sum_{h \in \mathcal{H}_i}\beta_h^k ,1 \right\} \nonumber
 \\ &   = \sum_{k=1}^K\sum_{i=1}^{i_{\mathrm{max}}}  \min\left\{ \sum_{h\in \mathcal{H}_i} \sqrt{ (\phi_h^k)^{\top} (\Lambda^k(\bar{V}_{h+1}^k))^{-1}\phi_h^k  } , 1\right\} .\label{eq:eee3}
%\label{eq:betabound1}
\end{align}

Noting that $ {\bar{\Lambda}^k_{(i)} + \sum_{h\in \mathcal{H}_i}\frac{\phi_{h}^{k} (\phi_{h}^{k})^{\top}}{\max_{v\in \mathcal{V}_i}(\sigma_{h}^{k}(v))^2}  =\bar{\Lambda}^{k+1}_{(i)}  } $, by Lemma~\ref{lemma:epl}, we have that
\begin{align}
& \sum_{k=1}^K \min\left\{ \sum_{h\in \mathcal{H}_i}\frac{(\phi_h^k)^{\top} ( {\bar{\Lambda}^k_{(i)})^{-1}}\phi_h^k}{\max_{v\in \mathcal{V}_i} (\sigma_{h}^k(v))^2 }  ,1\right\}  \leq 16d\log(KH),\label{eq:betabound1}
\\ & \sum_{k=1}^K I^k_i \leq 16d\log(KH).\label{eq:betabound2}
\end{align}
where $I^k_i =\mathbb{I}\left[ \sum_{h\in \mathcal{H}_i}\frac{(\phi_h^k)^{\top} ( {\bar{\Lambda}^k_{(i)}})^{-1}\phi_h^k}{\max_{v\in \mathcal{V}_i} (\sigma_{h}^k(v))^2 }> 1 \right] $. 
%and $\check{I}^k_i =\mathbb{I}\left[\sum_{h\in \mathcal{H}_i} \sqrt{(\phi_h^k)^{\top} (\bar{\Lambda}^k)^{-1}\phi_h^k}\geq 1\right] $. 

By Lemma~\ref{lemma:epl}, we have
\begin{align}
\sum_{k=1}^K (1-I^k_i) \sum_{h\in \mathcal{H}_i}\sqrt{(\phi_h^k)^{\top} ( {\bar{\Lambda}^k_{(i)}})^{-1}\phi_h^k} &  \leq \sqrt{\sum_{k=1}^K \sum_{h\in \mathcal{H}_i} \frac{(\phi_h^k)^{\top} ( {\bar{\Lambda}^k_{(i)}})^{-1}\phi_h^k}{\max_{v\in \mathcal{V}_i} (\sigma_{h}^k(v))^2 }  } \cdot \sqrt{\sum_{k=1}^K (1-I_i^k)\sum_{h\in \mathcal{H}_i} \max_{v\in \mathcal{V}_i}(\sigma_h^k(v))^2  } \nonumber
\\ & \leq 16d\log(KH)\cdot \sqrt{\sum_{k=1}^K (1-I^k_i)\sum_{h\in \mathcal{H}_i} \max_{v\in \mathcal{V}_i}(\sigma_h^k(v))^2 }.\label{eq:ff1}
\end{align}
 By the successful event $\mathcal{G}$, and noting that $v\in {\mathcal{W}_{\epsilon}}$, we have that 
\begin{align}
(\sigma_h^k(v))^2 &  \leq \mathbb{V}(P_{s_h^k,a_h^k},v) + (\phi_h^k)^{\top} (\tilde{\theta}(v)-\theta(v^2))  - 2 |(\phi_h^k)^{\top}(\hat{\theta}(v) - \theta(v) )| + 16\alpha \sqrt{(\phi_h^k)^{\top} (\Lambda^k(v))^{-1}\phi_h^k } + 4\epsilon \nonumber
\\ & \leq  \mathbb{V}(P_{s_h^k,a_h^k},v) +(6\kappa +16\alpha)\sqrt{(\phi_h^k)^{\top} (\Lambda^k(v))^{-1}\phi_h^k } + \frac{1}{KH}
\\ & \leq  \mathbb{V}(P_{s_h^k,a_h^k},v)+(6\kappa +16\alpha)\sqrt{(\phi_h^k)^{\top} (\bar{\Lambda}^k)^{-1}\phi_h^k } + \frac{1}{KH}\nonumber
\\ & \leq  \mathbb{V}(P_{s_h^k,a_h^k},v)+32\alpha \sqrt{(\phi_h^k)^{\top} (\bar{\Lambda}^k)^{-1}\phi_h^k } + \frac{1}{KH}.\nonumber 
\end{align}

Therefore,
\begin{align}\sum_{k=1}^K  (1-I_i^k)\sum_{h\in \mathcal{H}_i} \max_{v\in \mathcal{V}_i} (\sigma_h^k(v))^2 \leq \sum_{k=1}^K \sum_{h\in \mathcal{H}_i}\max_{v\in \mathcal{V}_i}\mathbb{V}(P_{s_h^k,a_h^k},v)+32\alpha \sum_{k=1}^K (1-I_i^k)\sum_{h\in \mathcal{H}_i}\sqrt{ (\phi_h^k)^{\top} ( {\bar{\Lambda}^k_{(i)}})^{-1}\phi_h^k }+1.\label{eq:ff2}
\end{align}

By \eqref{eq:ff1} and \eqref{eq:ff2}, we obtain that
\begin{align}
 & \sum_{k=1}^K (1-I^k_i) \sum_{h\in \mathcal{H}_i}\sqrt{(\phi_h^k)^{\top} ( {\bar{\Lambda}^k_{(i)}})^{-1}\phi_h^k} \nonumber
 \\ & \leq  16d\log(KH)\cdot \sqrt{ \sum_{k=1}^K \sum_{h\in \mathcal{H}_i}\max_{v\in \mathcal{V}_i}\mathbb{V}(P_{s_h^k,a_h^k},v)+32\alpha \sum_{k=1}^K (1-I_i^k)\sum_{h\in \mathcal{H}_i}\sqrt{ (\phi_h^k)^{\top} ( {\bar{\Lambda}^k_{(i)}})^{-1}\phi_h^k }+1  },
\end{align}
which implies that 
\begin{align}
\sum_{k=1}^K (1-I^k_i) \sum_{h\in \mathcal{H}_i}\sqrt{(\phi_h^k)^{\top} ( {\bar{\Lambda}^k_{(i)}})^{-1}\phi_h^k} \leq 32\log(KH) \sqrt{\sum_{k=1}^K \sum_{h\in \mathcal{H}_i }\max_{v\in \mathcal{V}_i}\mathbb{V}(P_{s_h^k,a_h^k},v)+1} +25000\log^2(KH)\alpha.\label{eq:ff3} 
\end{align}

Using \eqref{eq:eee3}, \eqref{eq:betabound1}, \eqref{eq:betabound2} and \eqref{eq:ff3}, we learn that
\begin{align}
 & \sum_{k=1}^K \sum_{i=1}^{i_{\mathrm{max}}}\min\left\{\sum_{h\in \mathcal{H}_i} \beta_h^k,1 \right\} \nonumber
 \\& = \sum_{k=1}^K\sum_{i=1}^{i_{\mathrm{max}}}  \min\left\{ \sum_{h\in \mathcal{H}_i} \sqrt{ (\phi_h^k)^{\top} (\Lambda^k(\bar{V}_{h+1}^k))^{-1}\phi_h^k  } , 1\right\} \nonumber
 \\ & \leq \sum_{i=1}^{i_{\mathrm{max}}}\sum_{k=1}^K (1-I^k_i) \sum_{h\in \mathcal{H}_i}\sqrt{(\phi_h^k)^{\top} ( {\bar{\Lambda}^k_{(i)}})^{-1}\phi_h^k} +\sum_{i=1}^{i_{\mathrm{max}}} \sum_{k=1}^K I_i^k 
 \\ &\leq i_{\mathrm{max}}\gamma+\sum_{i=1}^{i_{\mathrm{max}}}\left(   2\gamma\sqrt{\sum_{k=1}^K\sum_{h\in \mathcal{H}_i}\max_{v\in \mathcal{V}_i}\mathbb{V}(P_{s_h^k,a_h^k},v) +1} +100\gamma^2\alpha+ {16KH\epsilon}\right)\label{eq:ff4}
\end{align}
with $\gamma = 16d\log(KH)$.

%Combining \eqref{eq:betabound1} and \eqref{eq:betabound2}, we obtain that
%\begin{align}
%\sum_{k=1}^K \sum_{i=1}^{i_{\mathrm{max}}} \min \left\{ \sum_{h\in \mathcal{H}_i}\beta_h^k ,1 \right\} \leq  i_{\mathrm{max}}\max_i\sqrt{\gamma}\sqrt{\sum_{k=1}^K\mathbb{I}\left[ \sum_{h\in \mathcal{H}_i}\frac{(\phi_h^k)^{\top} (\bar{\Lambda}^k)^{-1}\phi_h^k}{\max_{v\in \mathcal{V}_i} (\sigma_{h}^k(v))^2 }\leq  1 \right]\sum_{h\in \mathcal{H}_i}\max_{v\in \mathcal{V}_i} (\sigma_{h}^k(v))^2} + \gamma,\label{eq:betabound3}
%\end{align}

Below we fix $i$ and bound the total variance term $\sum_{k=1}^K\sum_{h\in \mathcal{H}_i}\max_{v\in \mathcal{V}_i}\mathbb{V}(P_{s_h^k,a_h^k},v)$. Let $v\in \mathcal{V}_i$ be fixed.
%Recall that the definition of $$.

%Define $I^k = {I}\left[ \sum_{h\in \mathcal{H}_i}\frac{(\phi_h^k)^{\top} (\bar{\Lambda}^k)^{-1}\phi_h^k}{\max_{v\in \mathcal{V}_i} (\sigma_{h}^k(v))^2 }\leq  1 \right].$
%Fixing $v\in \mathcal{V}_i$,  and taking sum over $h\in \mathcal{H}_i$, 
% $k\in [K]$,  we obtain that 
%\begin{align}
%\sum_{k=1}^K I^k\sum_{h\in \mathcal{H}_i} \sigma_h^k(v))^2  & \leq \sum_{k=1}^K I^k \sum_{h\in \mathcal{H}_i} \mathbb{V}(P_{s_h^k,a_h^k},v) + 32\alpha\sum_{k=1}^K I^k \sum_{h\in \mathcal{H}_i}\sqrt{  (\phi_h^k)^{\top}(\bar{\Lambda}^k)^{-1}(\phi_h^k) } + 1.\end{align}

Using Lemma~\ref{lemma:variance_bound}, we have that: for any $v\in \mathcal{V}_i$
\begin{align}
\sum_{k=1}^K \sum_{h\in \mathcal{H}_i} \mathbb{V}(P_{s_h^k,a_h^k},v)\leq 36Kd^2\log(1/\epsilon)\iota, 
\end{align}
%which further implies that: with probability $1-10K^2H^2\delta$, for any $v\in {\mathcal{W}_{\epsilon}}$, 
%\begin{align}
%\sum_{k=1}^K I^k\sum_{h\in \mathcal{H}_i} (\sigma_h^k(v))^2\leq K\gamma_1,\label{eq:boundsigmah}
%\end{align}
Let $\gamma_1 =36d^2\log(1/\epsilon)\iota$. 
%By the successful event $\mathcal{G}$, we have that
%\begin{align}
%\sum_{k=1}^K I^k \sum_{h\in \mathcal{H}_i}\mathbb{V}(P_{s_h^k,a_h^k}, v) \leq \sum_{k=1}^K I^k\sum_{h\in \mathcal{H}_i} (\sigma_h^k(v))^2\leq K\gamma_1.\nonumber
%\end{align}
Noting that $$\mathbb{V}(P_{s_h^k,a_h^k},v) = (\phi_h^k)^{\top} \mu^{\top}v^2- \left((\phi_h^k)^{\top} \mu^{\top}v)\right)^2 = (\phi_h^k)^{\top} \theta(v^2) - ((\phi_h^k)^{\top}\theta(v))^2,$$ we have that for any $v\in \mathcal{V}_i$,
\begin{align}
\sum_{k=1}^K  \sum_{h\in \mathcal{H}_i} \left((\phi_h^k)^{\top} \theta(v^2) - ((\phi_h^k)^{\top}\theta(v))^2  \right) =\sum_{k=1}^K \sum_{h\in \mathcal{H}_i} \mathbb{V}(P_{s_h^k,a_h^k},v) \leq K\gamma_1.\label{eq:2822}
\end{align}

 By regarding 
$\left[\begin{array}{cc}
  \phi_h^k(\phi_h^k)^{\top}   &  \phi_h^k\\
   (\phi_h^k)^{\top} & 1
\end{array}\right]_{k\in [K],h \in 
\mathcal{H}_i}$ and $\left[\begin{array}{cc}
    {-}\theta(v)\theta(v)^{\top} &  \frac{1}{2}\theta(v^2)\\
   \frac{1}{2}\theta(v^2)^{\top} & 0
\end{array}\right]_{v\in \mathcal{V}_i}$ as two groups of vectors with dimension $(d+1)^2$ and applying Lemma~\ref{lemma:tool5},  
we obtain that
 {
\begin{align}
&\sum_{k=1}^K\sum_{h\in \mathcal{H}_i}\max_{v\in \mathcal{V}_i}\mathbb{V}(P_{s_h^k,a_h^k},v)\nonumber
\\ & =\sum_{k=1}^K  \sum_{h\in \mathcal{H}_i }\max_{v\in \mathcal{V}_i}   \left((\phi_h^k)^{\top} \theta(v^2) - ((\phi_h^k)^{\top}\theta(v))^2  \right) \nonumber
\\ & \leq 2(d+1)^2 \max_{v\in \mathcal{V}_{i}}\sum_{k=1}^K\sum_{h\in \mathcal{H}_i} \left((\phi_h^k)^{\top} \theta(v^2) - ((\phi_h^k)^{\top}\theta(v))^2  \right)\nonumber
\\ & \leq 2(d+1)^{2}K\gamma_1 .\label{eq:2823}
\end{align}
}
By \eqref{eq:ff4} and \eqref{eq:2823}, we obtain that 
%Using Lemma~\ref{lemma:tool1} and \eqref{eq:2823}, we further have that
%\begin{align}
% &\sum_{k=1}^K\sum_{h\in \mathcal{H}_i}\max_{v\in \mathcal{V}_i}\mathbb{V}(P_{s_h^k,a_h^k},v) \sum_{k=1}^K  \sum_{h\in \mathcal{H}_i}\max_{v\in \mathcal{V}_i}(\sigma_h^k(v))^2 \nonumber
% \\ & \leq \sum_{k=1}^K I^k \sum_{h\in \mathcal{H}_i}\left((\phi_h^k)^{\top} \theta(v^2) - ((\phi_h^k)^{\top}\theta(v))^2  \right) + 32\alpha \sum_{k=1}^K I^k \sum_{h\in \mathcal{H}_i}\sqrt{(\phi_h^k)^{\top} (\bar{\Lambda}^k)^{-1}\phi_h^k } + 2 \nonumber
% \\ & \leq 2(d+1)^2K\gamma_1 + 32\alpha \sum_{k=1}^K \min\left\{ \sum_{h\in \mathcal{H}_i}\beta_h^k,1 \right\} + 2.\label{eq:2824}
%\end{align}

%Combining \eqref{eq:betabound3} and \eqref{eq:2824}, we learn that 
%\begin{align}
%\sum_{k=1}^K \min\left\{ \sum_{h\in \mathcal{H}_i}\beta_h^k,1 \right\}\leq \sqrt{\gamma}\sqrt{ 2(d+1)^2K\gamma_1 + 32\alpha \sum_{k=1}^K \min\left\{ \sum_{h\in \mathcal{H}_i}\beta_h^k,1 \right\} + 2} +\gamma, 
%\end{align}
%which implies that
\begin{align}
 {\sum_{k=1}^K \min\left\{ \sum_{h\in \mathcal{H}_i}\beta_h^k,1 \right\} = \tilde{O}(\gamma \sqrt{d^2 \gamma_1 K})=\tilde{O}(d^3\sqrt{K})}.\label{eq:lll3}
\end{align}

Taking sum over $i$, putting \eqref{eq:mark1}, \eqref{eq:boundr}, \eqref{eq:exp2} and \eqref{eq:lll3} together, and noting that $\epsilon =\frac{1}{K^4H^4}$, with probability $1-10KH\delta$, it holds that
\begin{align}
 \sum_{k=1}^K T_1(k) \leq \tilde{O}(d^{5.5}\sqrt{K}+d^4\sqrt{ K}+d^{6.5})  = \tilde{O}(d^{5.5}\sqrt{K}+d^{6.5}) .\label{eq:boundt2}
\end{align}

\subsubsection{Bound of Term $T_2(k)+T_3(k)$ (Proof of Lemma~\ref{lemma:bdt23})}\label{sec:pft23}

With a slight abuse of notation, here we use $pv$ as shorthand of $p^{\top}v$ for $p\in \Delta^{S}$ and $v\in \mathbb{R}^S$.  

Recall that 
\begin{align}
T_2(k):&  =   \min \left\{\sum_{h=1}^H \left((\phi_h^k)^{\top}\mu^{\top}V_{h+1}^k - V_{h+1}^k(s_{h+1}^k) \right) ,1\right\}  \nonumber
\\ & = \min \left\{ \sum_{h=1}^H \left((P_{s_h^k,a_h^k}V_{h+1}^k - V_{h+1}^k(s_{h+1}^k) \right) ,1   \right\}.
\end{align}
Using Lemma~\ref{lemma:freeman_zhang}, with probability $1-4Kh\delta$, it holds that
\begin{align}
\sum_{k=1}^K T_2(k) & \leq 2\sqrt{2}\sqrt{ \sum_{k=1}^K \sum_{h=1}^H \mathbb{V}(P_{s_h^k,a_h^k},V_{h+1}^k)\iota } + 3\iota \nonumber
\\ & \leq 2\sqrt{2}\sqrt{ \sum_{k=1}^K \sum_{h=1}^H \mathbb{V}(P_{s_h^k,a_h^k},V_{h+1}^k)\iota } + 3\iota. \nonumber
\end{align}

Using Lemma~\ref{lemma:freeman_zhang} and Lemma~\ref{lemma:add1}, with probability $1-4KH\delta$,
\begin{align}
\sum_{k=1}^K \sum_{h=1}^H \mathbb{V}(P_{s_h^k,a_h^k},V_{h+1}^k) &  = \sum_{k=1}^K \sum_{h=1}^H ( P_{s_h^k,a_h^k} (V_{h+1}^k)^2  - ( P_{s_h^k,a_h^k} V_{h+1}^k)^2  ) \nonumber
\\ & \leq \sum_{k=1}^K \sum_{h=1}^H ( (V_h^k)^2(s_h^k)- (P_{s_h^k,a_h^k}V_{h+1}^k)^2   ) + \sum_{k=1}^K \sum_{h=1}^H (P_{s_h^k,a_h^k}-\textbf{1}_{s_{h+1}^k} (V_{h+1}^k)^2  \nonumber
\\ & \leq 2K + 4\sqrt{2}\sqrt{ \sum_{k=1}^K \sum_{h=1}^H \mathbb{V}(P_{s_h^k,a_h^k},V_{h+1}^k)\iota } + 3\iota.
\end{align}
Solving the equation above, we have that $\sum_{k=1}^K T_2(k)\leq  \sqrt{16K + 240\iota}+3\iota$.

Noting that $\mathbb{E}[\sum_{h=1}^H r_h^k]=V_1^{\pi^k}(s_1^k)$ and $0\leq \sum_{h=1}^H r_h^k \leq 1$ for any $k\in [K]$, with probability $1-2\delta$, it holds that   $\sum_{k=1}^K T_3(k)\leq 2\sqrt{2K\iota}+2\iota$. The proof is completed.

\subsubsection{Additional Lemmas}

\begin{lemma}\label{lemma:tool1}
Fix $v\in \mathbb{R}^{\mathcal{S}}$ such that $\|v\|_{\infty}\leq  1$. Define $\theta(v)=\mu_{P}^{\top}v$ and $\theta(v^2)=\mu_{P}^{\top}v^2$. Fix $k\in [K]$. Let $(\theta,\tilde{\theta},\Lambda)$ be the output of Algorithm~\ref{alg:vlambdae} with input as $\{s_{h'}^{k'},a_{h'}^{k'},s_{h'+1}^{k'})\}_{h'\in[H],k'\in [k-1]}$ and $v$.
Recall
 \begin{align}
\kappa   &  = 13\sqrt{6d^2\log^2(KH/\delta)} +72\log(KH/\delta)\leq \alpha\nonumber
\end{align}
For any $0<\delta<1$, with probability $1-10KH\delta/|\mathcal{W}_{\epsilon}|$, it holds that
\begin{align}
 \|\theta(v)-\theta\|_{\Lambda}\leq \kappa \mathrm{\quad and \quad } \|\theta(v^2) - \tilde{\theta}\|_{\Lambda}\leq 4\kappa.\label{eq:xxx21}
\end{align}
\end{lemma}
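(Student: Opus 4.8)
Fix $v$ with $\|v\|_\infty\le1$ and relabel the samples $\{(s_{h'}^{k'},a_{h'}^{k'},s_{h'+1}^{k'})\}_{k'\in[k-1],h'\in[H]}$ as $(s_i,a_i,s_i')_{i=1}^{n}$, $n=(k-1)H$, ordered so that $(s_i,a_i)$ is $\mathcal{F}_i$-measurable and $s_i'$ is $\mathcal{F}_{i+1}$-measurable; write $\phi_i:=\phi(s_i,a_i)$. The idea is to recognize the updates defining $\theta_{i-1}$ and $\tilde\theta_{i-1}$ in Algorithm~\ref{alg:vlambdae} as two \emph{weighted} least-squares problems of exactly the form handled by Lemma~\ref{lemma:take1}. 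Set
\[
\zeta_i:=v(s_i')-P_{s_i,a_i}^\top v,\qquad \tilde\zeta_i:=v^2(s_i')-P_{s_i,a_i}^\top v^2 .
\]
These are $\{\mathcal{F}_i\}$-martingale differences with $|\zeta_i|\le2$, $|\tilde\zeta_i|\le1$, $\mathbb{E}[\zeta_i^2\mymid\mathcal{F}_i]=\mathbb{V}(P_{s_i,a_i},v)$, and --- by Lemma~\ref{lemma:add1} with $C=1$ --- $\mathbb{E}[\tilde\zeta_i^2\mymid\mathcal{F}_i]=\mathbb{V}(P_{s_i,a_i},v^2)\le4\,\mathbb{V}(P_{s_i,a_i},v)$. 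Since $P_{s_i,a_i}^\top v=\phi_i^\top\theta(v)$ and $P_{s_i,a_i}^\top v^2=\phi_i^\top\theta(v^2)$ are both in $[-1,1]$, dividing the regressions for $\theta(v)$ and $\theta(v^2)$ by $\sigma_i$ (so that $\psi_i:=\phi_i/\sigma_i$ is $\mathcal{F}_i$-measurable, as are $\Lambda_{i-1},\theta_{i-1},\tilde\theta_{i-1}$) puts them in the form of Lemma~\ref{lemma:take1} with hidden parameters $\theta(v),\theta(v^2)$, regularizer $\lambda=1/H^2$, and rescaled noise of conditional variance at most $1$ and at most $4$ respectively --- \emph{provided} $\sigma_i^2\ge\mathbb{V}(P_{s_i,a_i},v)$ for every $i$. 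The factor $4$ is exactly what produces the $4\kappa$ in the second estimate, and it is the device that lets a \emph{single} proxy $\sigma_i^2$ serve both regressions, sidestepping the recursive $2^m$-th moment estimation of \cite{zhou2022computationally}.

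The crux is to verify the variance hypothesis $\sigma_i^2\ge\mathbb{V}(P_{s_i,a_i},v)$ on the success event; this is circular, since $\sigma_i^2$ (line~\ref{line:sigma}) is built from $\theta_{i-1},\tilde\theta_{i-1}$, whose accuracy is the conclusion of Lemma~\ref{lemma:take1}, which itself needs the hypothesis at steps $\le i$. I would break the circularity with a stopping-time / induction argument: let $\tau$ be the first index with $\sigma_\tau^2<\mathbb{V}(P_{s_\tau,a_\tau},v)$ (or with the $v^2$-analogue violated beyond the factor $4$), apply Lemma~\ref{lemma:take1} to the sequences truncated at $\tau$ --- whose weights are valid by definition of $\tau$ --- and on the resulting event (probability $1-\delta'$) derive a contradiction with $\tau\le n$. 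Concretely, $\|\theta_{\tau-1}-\theta(v)\|_{\Lambda_{\tau-1}}\le\kappa$ and $\|\tilde\theta_{\tau-1}-\theta(v^2)\|_{\Lambda_{\tau-1}}\le4\kappa$, so with $\delta_\tau:=\|\phi_\tau\|_{\Lambda_{\tau-1}^{-1}}$, Cauchy--Schwarz and $|\phi_\tau^\top\theta(v)|\le1$ give
\[
\sigma_\tau^2\ \ge\ \mathbb{V}(P_{s_\tau,a_\tau},v)+(16\alpha-6\kappa)\,\delta_\tau-\kappa^2\delta_\tau^{2}+4\epsilon ,
\]
so the additive bonus $16\alpha\delta_\tau$ (recall $\alpha\ge\kappa$) absorbs the estimation error and forces $\sigma_\tau^2\ge\mathbb{V}(P_{s_\tau,a_\tau},v)$. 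I expect the genuinely delicate bookkeeping to sit here: controlling the quadratic term $\kappa^2\delta_\tau^2$ against the linear bonus --- i.e.\ handling directions in which $\Lambda_{\tau-1}$ is still poorly conditioned --- which is where the choices $\lambda=1/H^2$, $\alpha=\Theta\!\big(d\sqrt{\log^2(KH/\delta)}\big)$, $\epsilon=(KH)^{-4}$ and the initialization $\sigma_1^2=4$ must be used carefully, paralleling the analysis of \cite{zhou2022computationally}.

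Once the variance hypothesis is in force the rest is routine. Applying Lemma~\ref{lemma:take1} to each regression with confidence parameter of order $\delta/(KH\,|\mathcal{W}_{\epsilon}|)$ and taking $i=n$ gives $\|\theta(v)-\theta\|_\Lambda\le\kappa_n+\sqrt\lambda\,\|\theta(v)\|_2$ and $\|\theta(v^2)-\tilde\theta\|_\Lambda\le\kappa_n'+\sqrt\lambda\,\|\theta(v^2)\|_2$. One then checks that each piece of $\kappa_n$ (resp.\ $\kappa_n'$) is $\widetilde{O}(d)$: the leading term carries variance proxy $1$ (resp.\ $4$) and only a logarithmic dependence on $L=\max_i\|\phi_i/\sigma_i\|_2$; the $\max$-term $\max_{i'\le n}|\zeta_{i'}/\sigma_{i'}|\min\{1,\|\phi_{i'}/\sigma_{i'}\|_{\Lambda_{i'-1}^{-1}}\}$ is $O(1/\alpha)$ because the additive bonus $16\alpha\|\phi_{i'}\|_{\Lambda_{i'-1}^{-1}}$ inside $\sigma_{i'}^2$ yields $2\|\phi_{i'}\|_{\Lambda_{i'-1}^{-1}}/\sigma_{i'}^2=O(1/\alpha)$ on the steps where this term is active (again part of the bookkeeping above); the $c$-term is $O(\iota\epsilon)=o(1)$; and $\sqrt\lambda\,\|\theta(v)\|_2\le\sqrt\lambda\cdot\sqrt d\le1$ by Assumption~\ref{assuml} and $\lambda=1/H^2$. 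Since $\log\!\big(32(\log(R/c)+1)\,i^2/\delta'\big)\lesssim\log(KH/\delta)+\log|\mathcal{W}_{\epsilon}|=O\!\big(d\log(KH/\delta)\big)$ multiplies only a single factor of $d$ inside the square root, everything collapses into the stated $\kappa=13\sqrt{6d^2\log^2(KH/\delta)}+72\log(KH/\delta)$, with the factor-$4$ inflation for the $v^2$-regression. A union bound over the two regressions (and the stopping-time modification), with constants adjusted, yields failure probability $10KH\delta/|\mathcal{W}_{\epsilon}|$, completing the proof.
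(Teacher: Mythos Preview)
Your approach matches the paper's: relabel the samples, feed the two rescaled regressions $(\phi_i/\sigma_i,\,\zeta_i/\sigma_i)$ and $(\phi_i/(2\sigma_i),\,\tilde\zeta_i/(2\sigma_i))$ into Lemma~\ref{lemma:take1}, invoke Lemma~\ref{lemma:add1} to justify the common weight (hence the factor $4$), and close the circularity by induction on $i$; the paper phrases this as a direct induction rather than a stopping time, but the content is identical and the union-bound bookkeeping is the same.

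One refinement you should adopt from the paper: it carries the \emph{stronger} induction hypothesis $\sigma_{i'}^2\ge \mathbb{V}(P_{s_{i'},a_{i'}},v)+2\alpha\|\phi_{i'}\|_{\Lambda_{i'-1}^{-1}}$, not merely $\sigma_{i'}^2\ge \mathbb{V}(P_{s_{i'},a_{i'}},v)$. It is precisely that $2\alpha\delta_{i'}$ margin which gives
\[
\frac{|\epsilon_{i'}|\,\|\phi_{i'}\|_{\Lambda_{i'-1}^{-1}}}{\sigma_{i'}^2}\ \le\ \frac{2\|\phi_{i'}\|_{\Lambda_{i'-1}^{-1}}}{\sigma_{i'}^2}\ \le\ \frac{1}{\alpha},
\]
killing the $\max$-term in $\kappa_i$. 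You try to recover this a posteriori from the additive bonus $16\alpha\delta_{i'}$ inside $\sigma_{i'}^2$, but the other summands $\phi_{i'}^\top\tilde\theta_{i'-1}-(\phi_{i'}^\top\theta_{i'-1})^2$ need not be nonnegative, so $\sigma_{i'}^2\ge 16\alpha\delta_{i'}$ is not automatic; building the margin into the stopping-time condition from the start, as the paper does, is cleaner. Regarding the quadratic term $\kappa^2\delta_\tau^2$ you flag as delicate: the paper's induction step simply asserts $\sigma_i^2\ge\mathrm{var}_i(v)+10\alpha\delta_i$ from \eqref{eq:261}--\eqref{eq:262} without isolating that term, so you are not missing any device the paper provides.
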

\begin{proof}[Proof of Lemma~\ref{lemma:tool1}] 
For convenience, we regard the sample $\{s_{h'}^{k'},a_{h'}^{k'},s_{h'+1}^{k'}\}$ as the $H(k'-1)+h$-th sample and rewrite $x_{h'}^{k'}$ as $x_{H(k'-1)+h}$ where $x$ can be any proper notations. 

Recall that $\mathbb{V}(P_{s,a},v)$ denote the variance of $ v(s')$ where $s'$ is the reward function and next state by taking $(s,a)$. Let $\mathrm{var}_i(v)$ be shorthand of $\mathbb{V}(P_{s,a},v)$. 
Let $\bar{\mathrm{var}}_i(v)$ denote the variance of $v^2(s')$ by taking state-action $(s_i,a_i)$.
By Lemma~\ref{lemma:add1}, we have that $\bar{\mathrm{var}}_i(v)\leq 4\mathrm{var}_i(v)$.

Let $\{\sigma_{i},\Lambda_i, \tilde{b}_i, b_i\}_{i\geq 1}$ be the variables computed in Algorithm~\ref{alg:vlambdae}. 
 We then have the following claim.

 \begin{claim} \label{claim:1}If it holds that
\begin{align}
\sigma^2_{i}\geq \mathrm{var}_{i}(v) +2\alpha\|\phi_i\|_{\Lambda^{-1}_{i-1}}\label{eq:tar1}
\end{align}
for each $i\leq (k-1)H$, then \eqref{eq:xxx21} holds with probability $1-5\delta/|\mathcal{W}_{\epsilon}|$.
\end{claim}
\begin{proof} [Proof of Claim~\ref{claim:1}]
 Fix $1\leq i \leq (k-1)H$. Recall the definition of $\Lambda_{i-1}, b_{i-1}, \theta_{i-1},\tilde{b}_{i-1}$ and $\tilde{\theta}_{i-1}$ in Algorithm~\ref{alg:vlambdae}. 
Because $\bar{\mathrm{var}}_i(v)\leq 4\mathrm{var}_i(v)$, we have that
\begin{align}
4\sigma_i^2 \geq\bar{\mathrm{var}}_i(v)+ 8\alpha\|\phi_i\|_{\Lambda^{-1}_{i-1}} \label{eq:kp1} 
\end{align}
for each $i\leq (k-1)H$.  Let $\epsilon_i=  v(s_{i+1})- \mathbb{E}[v(s_{i+1})|\mathcal{F}_i]$. 
Using Lemma~\ref{lemma:take1} with $\{\psi_i, \zeta_i \}_{i\geq 1}$ as
$\left\{\frac{\phi_i}{\sigma_i}, \frac{ \epsilon_i }{\sigma_i} \right\}_{i=1}^{(k-1)H}$
 and parameters as $\sigma^2 = 1,  R=H^2, L=H^2, c=1/(KH)^3, \lambda = 1/H^2$, with probability $1-5\delta/|\mathcal{W}_{\epsilon}|$
\begin{align}
 & \| \theta_{i-1} - \theta(v)\|_{\Lambda_{i-1}} \nonumber
 \\ & \leq  12\sqrt{6d^2\log^2(KH/\delta)} +72d\log(KH/\delta)\max_{1\leq i'\leq i} |\epsilon_{i'}/\sigma_{i'}|\cdot \min\{1, \|\phi_{i'}/\sigma_{i'}\|_{\Lambda^{-1}_{i'-1}}\} + \frac{1}{H}\nonumber
 \\ & \leq   12\sqrt{6d^2\log^2(KH/\delta)} +72d\log(KH/\delta)\max_{1\leq i'\leq i} \frac{|\epsilon_{i'}| \|\phi_{i'}\|_{\Lambda_{i'-1}^{-1}}}{ \sigma^2_{i'}} + \frac{1}{H}\nonumber
 \\ & \leq 12\sqrt{6d^2\log^2(KH/\delta)} +72d\frac{\log(KH/\delta)}{\alpha} + \frac{1}{H} \label{eq:xxx10}
 \nonumber\\ & \leq  \kappa.\nonumber
\end{align}
Here the second last inequality is by \eqref{eq:tar1}.

Let $\bar{\epsilon}_i =  v^2(s_{i+1})- \mathbb{E}[v^2(s_{i+1})|\mathcal{F}_i] $.
By setting  $\{\psi_i, \zeta_i \}_{i\geq 1}$ as  $\left\{\frac{\phi_i}{2\sigma_i}, \frac{ \bar{\epsilon}_i}{2\sigma_i} \right\}_{i=1}^{(k-1)H}$, by \eqref{eq:kp1},  with probabbility $1-5\delta/|\mathcal{\epsilon}|$,
\begin{align}
 & \|\tilde{\theta}_{i-1}-\theta(v^2)\|_{\Lambda_{i-1}/4} \nonumber
 \\ & \leq  \left( 12\sqrt{12d^2\log^2(KH/\delta)} +72d\log(KH/\delta)\max_{1\leq i'\leq i} |\bar{\epsilon}_{i'}/(2\sigma_{i'})|\cdot \min\{1, \|\phi_{i'}/(2\sigma_{i'})\|_{4\Lambda^{-1}_{i'-1}}\} + \frac{1}{H}\right) \nonumber
 \\ & \leq   \left( 12\sqrt{12d^2\log^2(KH/\delta)} +72d\log(KH/\delta)\max_{1\leq i'\leq i} \frac{ 2\|\phi_{i'}\|_{\Lambda_{i'-1}^{-1}} }{4\sigma_{i'}^2} + \frac{1}{H}\right) \nonumber
 \\ & \leq \left( 12\sqrt{12d^2\log^2(KH/\delta)} +72d\log(KH/\delta)/(2\alpha) + \frac{1}{H}\right) \nonumber
 \\ & \leq 2\kappa,
 \end{align}
which implies that $\|\tilde{\theta}_{i-1}-\theta(v^2)\|_{\Lambda_{i-1}}\leq 4\kappa$. 

% {The proof is finished}

 % Noting that $\sigma^2_i \geq \min\{1, 2\|\phi_i\|_{\Lambda_{i-1}} \}\leq  \kappa$, 
% the conclusion holds easily . 
\end{proof}

So it suffices to prove that \eqref{eq:tar1} holds for any $1\leq i \leq (k-1)H$.  
To prove \eqref{eq:tar1}, we use induction on $i=1,2,\ldots, n = (k-1)H$. By the update rule in Algorithm~\ref{alg:vlambdae}, we have that
\begin{align}
\sigma^2_{i}= \phi_i^{\top}\tilde{b}_i- (\phi_i^{\top}b_i)^2 + 16\alpha \sqrt{\phi_i^{\top} (\Lambda_{i-1})^{-1}\phi_i} +4\epsilon.\label{eq:redefs}
\end{align}
By assuming $\sigma^2_{i'}\geq \mathrm{var}_{i'}(v) +2\alpha\|\phi_{i'}\|_{\Lambda^{-1}_{i'-1}}$ holds for $1\leq i'\leq i-1$, using Claim~\ref{claim:1} with samples as $\{s_{i'},a_{i'},s_{i'+1}, v(s_{i'+1})\}_{i'=1}^{i-1}$,  with probability $1-5\delta/|\mathcal{W}_{\epsilon}|$ it holds that
\begin{align}
|\phi_i^{\top}\theta_{i-1}-\phi_i^{\top}\theta(v) |\leq \|\phi_i\|_{\Lambda_{i-1}^{-1}} \cdot \|\theta_{i-1}-\theta(v)\|_{\Lambda_{i-1}} \leq \kappa \sqrt{\phi_i^{\top}\Lambda_{i-1}^{-1}\phi_i} \leq \alpha \sqrt{\phi_i^{\top}\Lambda_{i-1}^{-1}\phi_i} .\label{eq:261}
\end{align}

Note that $\theta(v^2) = \mu_{P}^{\top}v^2$. Using Claim~\ref{claim:1} again, with probability $1-5\delta/|\mathcal{W}_{\epsilon}|$,
\begin{align}
|\phi_i^{\top}\tilde{\theta}_{i-1} - \phi_i^{\top} \theta(v^2)| \leq 4\alpha \sqrt{ \phi_i^{\top} (\Lambda_{i-1})^{-1}\phi_i}. \label{eq:262}
\end{align}

By noting that $\mathrm{var}_i(v) = \phi_i^{\top}\tilde{\theta}(v)-(\phi_i^{\top}\theta(v))^2$, we have that
\begin{align}
\sigma_i^2 \geq  \mathrm{var}_i(v) + 10\alpha \sqrt{\phi_i^{\top}(\Lambda_{i-1})^{-1}\phi_i}.\nonumber
\end{align}
Also recalling that $\bar{\mathrm{var}}_i(v)\leq 4\mathrm{var}_i(v)$, 
 we have that
\begin{align}
4\sigma_i^2 \geq \bar{\mathrm{var}}_i(v) + 8\alpha\|\phi_i\|_{\Lambda_{i-1}^{-1}} .
\end{align}

The proof is completed.
\end{proof}

\begin{lemma}\label{lemma:variance_bound} 
 With probability $1-4K^2H^2\delta$, for any $i\in [i_{\mathrm{max}}]$ and $v\in \mathcal{V}_i$, it holds that
 \begin{align}
\sum_{k=1}^K  \sum_{h\in \mathcal{H}_i}\mathbb{V}(P_{s_h^k,a_h^k},v)\leq  K(36\iota+18d+10\log(KH))\nonumber
 \end{align}

\end{lemma}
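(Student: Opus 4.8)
}
The plan is to fix a single pair $(i,v)$ with $i\in[i_{\mathrm{max}}]$ and $v\in\mathcal V_i$, establish the bound for that pair on a high-probability event, and then take a union bound over the $i_{\mathrm{max}}=\log_2 H+1$ choices of $i$ and the $|\mathcal V_i|\le|\mathcal W_{\epsilon}|=O((4\sqrt d/\epsilon)^d)$ choices of $v$. Since $\epsilon=1/(KH)^4$, the logarithm of the number of events is $O(d\log(dKH))$, which only inflates the concentration terms by the $\iota$ and $\log(KH)$ factors already present in the claimed bound; choosing each per-episode concentration to hold with probability $1-O(K^2H^2\delta/i_{\mathrm{max}})$ then yields the stated failure probability $4K^2H^2\delta$. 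Throughout I use that every $v\in\mathcal V_i\subseteq\mathcal W_{\epsilon}$ lies in $[0,1]^{\mathcal S}$ by construction, so $v^2\le v$ pointwise and $\mathbb V(P_{s,a},v)\le\tfrac14$ for every $(s,a)$.

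The core is a telescoping bound on the per-episode quantity $\sum_{h\in\mathcal H_i}\mathbb V(P_{s_h^k,a_h^k},v)$. Write $h_1=\min\mathcal H_i$, $h_2=\max\mathcal H_i$, and let $W_h^k$ be the function obtained by propagating $v$ backward through the true kernel under the executed policy $\pi^k$ over the block: $W_{h_2+1}^k:=v$ and $W_h^k(s):=P_{s,\pi_h^k(s)}W_{h+1}^k=\langle\phi(s,\pi_h^k(s)),\mu^{\top}W_{h+1}^k\rangle$ for $h_1\le h\le h_2$, so $W_h^k\in[0,1]^{\mathcal S}$. Using $|\mathbb V(P_{s,a},f)-\mathbb V(P_{s,a},g)|\le 4\|f-g\|_\infty$ for $f,g\in[0,1]^{\mathcal S}$,
\[
\sum_{h\in\mathcal H_i}\mathbb V(P_{s_h^k,a_h^k},v)\ \le\ \sum_{h\in\mathcal H_i}\mathbb V(P_{s_h^k,a_h^k},W_{h+1}^k)\ +\ 4\sum_{h\in\mathcal H_i}\big\|v-W_{h+1}^k\big\|_\infty .
\]
Because $a_h^k=\pi_h^k(s_h^k)$ we have $W_h^k(s_h^k)=P_{s_h^k,a_h^k}W_{h+1}^k$ \emph{exactly}, so $\mathbb V(P_{s_h^k,a_h^k},W_{h+1}^k)=P_{s_h^k,a_h^k}(W_{h+1}^k)^2-(W_h^k(s_h^k))^2$, and this telescopes along the block to $v(s_{h_2+1}^k)^2-(W_{h_1}^k(s_{h_1}^k))^2\le 1$ up to a bounded martingale term; invoking Lemma~\ref{lemma:add1} (to relate the martingale's conditional variance back to $\sum\mathbb V(P,W^k_{h+1})$) and the self-bounding Bernstein inequality (Lemma~\ref{lemma:freeman_zhang}) gives $\sum_{k=1}^K\sum_{h\in\mathcal H_i}\mathbb V(P_{s_h^k,a_h^k},W_{h+1}^k)\le O(K+\iota)$, exactly as in the proof of Lemma~\ref{lemma:bdt23}.

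It remains to control $\sum_{h\in\mathcal H_i}\|v-W_{h+1}^k\|_\infty$, and this is where the linear-MDP structure and the geometric segmentation enter, in the spirit of Section~\ref{sec:tec} (Techniques~2 and 3). Telescoping, $\|v-W_{h+1}^k\|_\infty\le\sum_{j=h+1}^{h_2}\ell_j$ with $\ell_j:=\|W_j^k-W_{j+1}^k\|_\infty$; each increment $W_j^k-W_{j+1}^k$ is, for fixed actions, linear in the state features, so it lies in the $d$-dimensional image of $\mu^{\top}$, and the duality Lemma~\ref{lemma:tool5} (together with the contraction $\|P_{s,a}u\|_\infty\le\|u\|_\infty$) bounds the cumulative variation by $\sum_j\ell_j\le 2d\max_{s,a}\phi(s,a)^{\top}\mu^{\top}(W_1^k-v)\le 2d\|W_1^k-v\|_\infty\le 2d$, precisely as in the estimate $\sum_h\|V_h^*-V_{h+1}^*\|_\infty\le 2d$; moreover the $\ell_j$ are (essentially) non-decreasing in $j$, so the variation is concentrated in the tail of the block, and since $v\in\mathcal V_i$ is itself a near value function roughly $H/2^i$ steps from the horizon, its one-step change obeys the ``$l^*_{h'}\le 2d/(H-h'+1)\le 2^{i+1}d/H$'' estimate transported to $\mathcal H_i$ via the confidence sets $\mathcal U^{k},\Theta^{k}$. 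Combining these with $|\mathcal H_i|=H/2^i$ gives $\sum_{h\in\mathcal H_i}\|v-W_{h+1}^k\|_\infty=\widetilde O(d)$ per episode, hence $\sum_{k=1}^K\sum_{h\in\mathcal H_i}\mathbb V(P_{s_h^k,a_h^k},v)\le O(Kd)+O(K+\iota)$; tracking the explicit constants and absorbing the net/union-bound overhead yields the claimed $K(36\iota+18d+10\log(KH))$.

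The step I expect to be the main obstacle is the last one: getting $\sum_{h\in\mathcal H_i}\|v-W_{h+1}^k\|_\infty$ down to $\widetilde O(d)$ — rather than the naive $\widetilde O(d\,|\mathcal H_i|)$ — requires (i) showing that inside a tail block $\mathcal H_i$ of length $H/2^i$ the cumulative $\ell_\infty$-variation of the relevant (auxiliary and genuine) value functions is $O(d)$ with per-step variation $O(2^i d/H)$, which rests on the contraction of the Bellman-type operator together with the duality Lemma~\ref{lemma:tool5}; and (ii) relating the fixed $v$ — a value function coming from a \emph{different} episode $k'$ and a \emph{different} model $(\mu^{k'},\theta^{k'})$ — to genuine value functions of the true MDP, so that (i) can be applied. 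Point (ii) is what forces the use of the confidence sets and the (approximate) monotonicity of the model value functions, and is the reason the doubling construction $[H]=\cup_i\mathcal H_i$ is arranged as it is; the non-stationarity of $\pi^k$ (which makes the increments $W_j^k-W_{j+1}^k$ mix actions across steps) is the technical wrinkle that has to be absorbed here.
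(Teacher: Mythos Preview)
Your decomposition via the backward-propagated functions $W_{h+1}^k$ is different from the paper's route, and the step you flag as the main obstacle is indeed a genuine gap. Two things break. First, the per-step bound $\ell_{h_2}=\|v-P_{\cdot,\pi_{h_2}^k(\cdot)}v\|_\infty\le O(d\,2^i/H)$ is not established: the direction $P_{s,a}v-v(s)\le\|\bar v-v\|_\infty$ is fine, but the \emph{other} direction $v(s)-P_{s,a}v$ is not controlled by Lemma~\ref{lemma:tool3} at all, because $v\approx V_{h'+1}^{k'}$ is a value function for a \emph{different} model and policy than $(\mu,\pi^k_{h_2})$, so taking a sub-optimal (for $v$) action can make $v(s)-P_{s,a}v$ order one. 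Second, even granting $\ell_j\le O(d\,2^i/H)$ or $\sum_j\ell_j\le O(d)$, the quantity you need is a \emph{double} sum $\sum_{h\in\mathcal H_i}\|v-W_{h+1}^k\|_\infty\le\sum_{h}\sum_{j>h}\ell_j$, which is at best $|\mathcal H_i|\sum_j\ell_j=O(d\,|\mathcal H_i|)=O(dH/2^i)$; for $i=1$ this is $O(dH)$ and destroys the horizon-free bound. (Your line ``$\sum_j\ell_j\le 2d\|W_1^k-v\|_\infty$'' is also circular, since $\|W_1^k-v\|_\infty\le\sum_j\ell_j$.)

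The paper avoids the multi-step propagation entirely. It keeps $v$ fixed and works with the one-step upper envelope $\bar v(s)=\max\{\max_aP_{s,a}v,\,v(s)\}$: writing $\mathbb V(P_{s_h^k,a_h^k},v)=P_{s_h^k,a_h^k}v^2-(P_{s_h^k,a_h^k}v)^2$, one martingale (Lemma~\ref{lemma:freeman_zhang}) plus the index shift $v^2(s_{h+1}^k)\to v^2(s_h^k)$ reduces matters to $\sum_h\max\{v(s_h^k)-P_{s_h^k,a_h^k}v,0\}\le\sum_h(\bar v(s_h^k)-P_{s_h^k,a_h^k}v)$. This last sum splits as $\sum_h(\bar v-v)(s_h^k)+\sum_h(v(s_h^k)-P_{s_h^k,a_h^k}v)$; the first piece is $\le|\mathcal H_i|\,\|\bar v-v\|_\infty$ (a \emph{single} sum, no $|\mathcal H_i|^2$), and the second piece is another telescope-plus-martingale that self-bounds. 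Finally $\|\bar v-v\|_\infty$ is controlled by $\|V_{h'}^{k'}-V_{h'+1}^{k'}\|_\infty\le 2d/(H-h'+1)$ (Lemma~\ref{lemma:tool3}) using only the \emph{upper} direction (which optimism supplies), and the product $|\mathcal H_i|\cdot\|\bar v-v\|_\infty=O(d)$ is what makes the bound horizon-free. The moral: do not replace $v$ by a multi-step backup; keep $v$ fixed and let the ``wrong-sign'' leakage $v(s_h^k)-P_{s_h^k,a_h^k}v$ be absorbed by a second martingale rather than by an $\ell_\infty$ bound.
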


\begin{proof} With a slight abuse of notation, here we use $pv$ as shorthand of $p^{\top}v$ for $p\in \Delta^{S}$ and $v\in \mathbb{R}^S$.  
Define $\delta' = \delta/|\mathcal{W}_{\epsilon}|$ and let $\iota' = \log(2/\delta')$.  Fix $v\in  {\mathcal{W}_{\epsilon}}$ and let   {$\bar{v}(s)=\max\{\max_{a}P_{s,a}v,v(s) \}$}.  Fix $1\leq h_1\leq h_2\leq H+1$. With probability $1-\delta'$, it holds that
 \begin{align}
\sum_{h=h_1}^{h_2} \mathbb{V}(P_{s_h^k,a_h^k}, v)&  = \sum_{h=h_1}^{h_2}\left( P_{s_h^k,a_h^k} v^2 - (P_{s_h^k,a_h^k}v)^2 \right)  \nonumber
\\ & =\sum_{h=h_1}^{h_2}\left( P_{s_h^k,a_h^k}v^2 - v^2(s_{h+1}^k)  \right) + \sum_{h=h_1}^{h_2} \left(  v^2(s^k_{h+1})  -  (P_{s_h^k,a_h^k} v)^2   \right)\nonumber
\\ & \underset{(a)}{\leq } 2 \sqrt{   \sum_{h=h_1}^{h_2} \mathbb{V}(P_{s_h^k,a_h^k},v)     \iota'} + \sum_{h=h_1}^{h_2}\left( v^2(s_h^k) - (P_{s_h^k,a_h^k}v)^2 \right) + 4\iota'+2 \nonumber
\\ & \underset{(b)}{\leq} 4\sqrt{\sum_{h=h_1}^{h_2} \mathbb{V}(P_{s_h^k,a_h^k},v)     \iota'} + 2\sum_{h=h_1}^{h_2}\max\{v(s_h^k) - P_{s_h^k,a_h^k}v, 0  \} +4 \iota'+2 \nonumber
\\ & \underset{(c)}{\leq} 4\sqrt{\sum_{h=h_1}^{h_2} \mathbb{V}(P_{s_h^k,a_h^k},v)     \iota'} + 2\sum_{h=h_1}^{h_2} \left( \bar{v}(s_h^k) - P_{s_h^k,a_h^k}v\right) + 4\iota'+2\nonumber
\\ &  \underset{(d)}{\leq} 4\sqrt{\sum_{h=h_1}^{h_2} \mathbb{V}(P_{s_h^k,a_h^k},v)     \iota'} + 2\sum_{h=h_1}^{h_2} \left( v(s_h^k) - P_{s_h^k,a_h^k}v\right) + 4\iota' +  {2(h_2-h_1+1)\|\bar{v}-v\|_{\infty}}+2\nonumber
\\ & \underset{(e)}{\leq} 4\sqrt{\sum_{h=h_1}^{h_2} \mathbb{V}(P_{s_h^k,a_h^k},v)     \iota'} + 4\sqrt{\sum_{h=h_1}^{h_2} \mathbb{V}(P_{s_h^k,a_h^k}, v) } + 12\iota' +2(h_2 - h_1+1)\|\bar{v}-v\|_{\infty}+2\nonumber
\\ &  = 8\sqrt{ \sum_{h=h_1}^{h_2}\mathbb{V}(P_{s_h^k,a_h^k},v)     \iota' } +12\iota'+2(h_2 - h_1+1)\|\bar{v}-v\|_{\infty}+2,\nonumber
 \end{align}
which further implies that
\begin{align}
\sum_{h=1}^H \mathbb{V}(P_{s_h^k,a_h^k},  {v}) \leq 36\iota'+6(h_2-h_1+1)\|\bar{v}-v\|_{\infty}+6.
\end{align}
Here $(a)$ and $(e)$ hold by Lemma~\ref{lemma:tool2}, $(b)$ holds by Lemma~\ref{lemma:add1},   $(c)$ holds by the fact that $\bar{v}(s)\geq P_{s,a}v$ for any proper $(s,a)$,  {and $(d)$ holds because $(\bar{v}(s_h^k)-v(s_h^k))\leq \|\bar{v}-v\|_{\infty}$ for all proper $(h,k)$}.

Now we bound  {$\|\bar{v}-v\|_{\infty}$}. By definition of $\mathcal{V}_i$, if $v\in \mathcal{V}_i$, there then exists some $k\in [K],h\in \mathcal{H}_i$, such that $\|v-V_{h+1}^k\|_{\infty}\leq \epsilon$. It then follows that $\bar{v}(s)\leq \max_{a}P_{s,a} V_{h+1}^k  +\epsilon\leq V_h(s)+\epsilon$. Therefore, by Lemma~\ref{lemma:tool3}, we have that $\|\bar{v}-v\|_{\infty}\leq \epsilon +\|V_h^k-V_{h+1}^k\|_{\infty}\leq \epsilon + \frac{2^i}{H}$.  By choosing $[h_1,h_2]=\mathcal{H}_i$, we have that with probability $1-\delta'$
 {
\begin{align}
\mathbb{I}[v\in \mathcal{V}_i]\sum_{h=h_1}^{h_2} \mathbb{V}(P_{s_h^k,a_h^k}, v)\leq  8\sqrt{ \sum_{h=h_1}^{h_2}\mathbb{V}(P_{s_h^k,a_h^k},v)     \iota' } +12\iota'+ 2H\epsilon +4,\nonumber
\end{align}
which implies}
\begin{align}
 {\mathbb{I}[v\in \mathcal{V}_i]}\sum_{h=1}^H \mathbb{V}(P_{s_h^k,a_h^k},  {v}) \leq 36\iota'+12+6H\epsilon
\end{align}
 {with probability $1-\delta'$.} With a union bound over $\mathcal{W}_{\epsilon}$ and $i\in [i_{\mathrm{max}}]$, we have that: with probability $1-4K^2H^2\delta$,
 \begin{align}
\sum_{k=1}^K  \sum_{h\in \mathcal{H}_i}\mathbb{V}(P_{s_h^k,a_h^k},v)\leq  K(36\iota+18d+10\log(KH))\nonumber
 \end{align}
for any $i$ and $v\in \mathcal{V}_i$.
The proof is completed.
\end{proof}

\begin{lemma}\label{lemma:tool3}
For any $k\in [K], h\in [H]$, it holds that $ {\|V_h^k-V_{h+1}^k\|_{\infty}}\leq \frac{2d}{H-h+1}$.
\end{lemma}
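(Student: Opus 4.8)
The statement concerns the value functions $V_h^k$ of the \emph{chosen} model $(\mu^k,\theta^k)$ under its optimal policy $\pi^k$, so $V_h^k$ obeys the Bellman optimality recursion $V_h^k=\Gamma^k V_{h+1}^k$ with the convention $V_{H+1}^k\equiv 0$, where $\Gamma^k v(s):=\max_a\{(\phi(s,a))^\top\theta^k+(\mu^k\phi(s,a))^\top v\}$. The plan is to control $l_h:=\|V_h^k-V_{h+1}^k\|_\infty$ via (i) a one-step ``transport'' bound on each $l_h$, (ii) a telescoping sum handled with the geometric-center lemma (Lemma~\ref{lemma:dual}), and (iii) monotonicity of $h\mapsto l_h$. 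Throughout I would use two elementary facts about the chosen model: since $\mu^k\in\mathcal{U}$, every $\mu^k\phi$ with $\phi\in\Phi$ is a genuine probability vector (features give $P^k(\cdot\mymid s,a)\ge0$ and convex combinations preserve this), so $\Gamma^k$ is non-expansive in $\|\cdot\|_\infty$; and since its rewards are non-negative with total at most $1$, $V_h^k=(\Gamma^k)^{H-h+1}(0)$ is non-increasing in $h$ with $\|V_1^k\|_\infty\le1$, whence $l_h=\max_s\big(V_h^k(s)-V_{h+1}^k(s)\big)$.

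For the one-step bound, fix $h\le H-1$, let $s$ attain $l_h$, and let $a=\pi_h^k(s)$. Then $V_h^k(s)=(\phi(s,a))^\top\theta^k+(\mu^k\phi(s,a))^\top V_{h+1}^k$, while optimality at step $h+1$ gives $V_{h+1}^k(s)\ge(\phi(s,a))^\top\theta^k+(\mu^k\phi(s,a))^\top V_{h+2}^k$; subtracting,
\[
l_h\;\le\;(\mu^k\phi(s,a))^\top\big(V_{h+1}^k-V_{h+2}^k\big)\;\le\;\max_{\phi\in\Phi}\phi^\top(\mu^k)^\top\big(V_{h+1}^k-V_{h+2}^k\big)\;=:\;g_h .
\]
Since $\mu^k\phi\ge0$ for every $\phi\in\Phi$ and $V_{j}^k-V_{j+1}^k\ge0$, each $\psi_h:=(\mu^k)^\top(V_{h+1}^k-V_{h+2}^k)$ lies in the dual cone $\Phi^{*}$. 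I would then apply Lemma~\ref{lemma:dual} to obtain a \emph{single} $\bar{\phi}\in\Phi$ with $\bar{\phi}^\top\psi\ge\frac{1}{2d}\max_{\phi\in\Phi}\phi^\top\psi$ for all $\psi\in\Phi^{*}$, and telescope $\sum_{h=1}^{H-1}(V_{h+1}^k-V_{h+2}^k)=V_2^k$ to get
\[
\sum_{h=1}^{H-1}l_h\;\le\;\sum_{h=1}^{H-1}g_h\;\le\;2d\sum_{h=1}^{H-1}\bar{\phi}^\top\psi_h\;=\;2d\,(\mu^k\bar{\phi})^\top V_2^k\;\le\;2d\,\|V_2^k\|_\infty\;\le\;2d,
\]
where the penultimate step crucially uses that $\mu^k\bar{\phi}$ is a probability vector, so the estimate is $\|V_2^k\|_\infty$ rather than the far cruder $\|(\mu^k)^\top V_2^k\|_2\le\sqrt d$. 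The remaining term $l_H=\|V_H^k\|_\infty=\max_{s,a}(\phi(s,a))^\top\theta^k\le 1$ is bounded trivially.

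Finally, non-expansiveness of $\Gamma^k$ yields monotonicity of the gaps: for $h\le H-1$,
\[
l_h=\|\Gamma^k V_{h+1}^k-\Gamma^k V_{h+2}^k\|_\infty\le\|V_{h+1}^k-V_{h+2}^k\|_\infty=l_{h+1},
\]
so $l_1\le l_2\le\cdots\le l_H$. Hence $l_h$ is the smallest of the $H-h+1$ numbers $l_h,\dots,l_H$, giving $(H-h+1)\,l_h\le\sum_{h'=h}^{H}l_{h'}\le\sum_{h'=1}^{H}l_{h'}\le 2d$ (absorbing the $O(1)$ last-layer term into the constant), i.e.\ $l_h\le\frac{2d}{H-h+1}$, which is the claim; the cruder count over $l_h,\dots,l_{H-1}$ already gives $l_h\le\frac{2d}{H-h}$, so the stated form is essentially this refined by one index.

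The main obstacle is Step~2. Because variance-aware estimation forces sample sharing across all $H$ layers, the inhomogeneous gaps $V_{h+1}^k-V_{h+2}^k$ cannot be bounded layer-by-layer by $1/H$; one must instead exploit that \emph{all} of them live in the common dual cone $\Phi^{*}$, so that one geometric-center direction $\bar{\phi}$ simultaneously tests every one of them. This is exactly where the $\mathrm{poly}(d)$ dependence (here the factor $2d$) enters, via Lemma~\ref{lemma:dual}, and it is what lets us replace the naive $\|(\mu^k)^\top v\|_2\le\sqrt d$ estimate — which would break the telescoping — by $\|v\|_\infty$. A secondary point to nail down is that the chosen model $(\mu^k,\theta^k)$ genuinely has non-negative rewards with total reward at most $1$ (so $V_h^k$ is monotone in $h$ and $\|V_2^k\|_\infty\le1$), which has to be read off from the definitions of $\mathcal{U}$ and of the reward confidence set $\Theta^k$.
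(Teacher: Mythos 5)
Your proposal follows essentially the same route as the paper's proof: the one-step transport bound $l_h \le \max_{\phi\in\Phi}\phi^{\top}(\mu^k)^{\top}(V_{h+1}^k-V_{h+2}^k)$, a telescoping sum controlled by the single geometric-center direction $\bar\phi$ from Lemma~\ref{lemma:dual} to get $\sum_h l_h \le 2d$, and monotonicity of $l_h$ from non-expansiveness of the Bellman operator. If anything you are more careful than the paper about the boundary term $l_H$ and about verifying that each $(\mu^k)^{\top}(V_{h+1}^k-V_{h+2}^k)$ lies in the dual cone $\Phi^*$, which the paper leaves implicit.
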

\begin{proof} Fix $k$. Let $l_h =\|V_h^k -V_{h+1}^k\|_{\infty}$. 
Let $\Gamma$ denote the Bellman operator under transition kernel $\mu^k$. Since $\|\Gamma (v_1-v_2)\|_{\infty}\leq  \|v_1-v_2\|_{\infty}$ for any $v_1,v_2\in \mathbb{R}^S$, $l_h$ in non-decreasing in $h$. So it suffices to bound $\sum_{h=1}^H l_h$.

By Lemma~\ref{lemma:dual}, for any $s$,
\begin{align}
 & V_h(s)-V_{h+1}(s)\nonumber
 \\&\leq \max_{\phi\in \Phi}  \phi^{\top}(\mu^k)^{\top}(V_{h+1}-V_{h+2}) \nonumber
\\ & \leq 2d\bar{\phi}^{\top}(\mu^k)^{\top}(V_{h+1}-V_{h+2}).
\end{align}
It then follows that $\sum_{h=1}^H l_h \leq 2d\bar{\phi}^{\top} (\mu^k)^{\top}\sum_{h=1}^H (V_{h+1}-V_{h+2})\leq 2d$.
The proof is completed.

\end{proof}

\section{Discussion about VOFUL}\label{app:voful}

We first introduce the VOFUL estimator in \cite{zhang2021variance}. The Algorithm is presented in Algorithm~\ref{alg:voful}. Then we have the lemma to bound the error due to uncertainty of reward parameter.

%With a slight abuse of notations, we use $\phi_{(k-1)H+h}$ and $r_{(k-1)H+h}$ to denote $\phi_h^k$ and $r_h^k$   respectively.

 %We will use this lemma to bound the regret due to the confidence region $\{\Theta^{k}\}_{k=1}^K$ (see \eqref{eq:boundr} in Section~\ref{sec:reg}).
%\simon{we need to give some discussions on this lemma. How will we apply it? Or we can also put it in the appendix, since this is not our main contributions anyway.}

\begin{lemma}\label{assum1} Let $\{\phi_i\}_{i=1}^n$ be a group of feature vectors in $\mathbb{R}^d$ such that $\|\phi_i\|_2\leq 1$, and $r_i = \phi_i^{\top}\theta^*+\epsilon_i\in [-1,1]$ for some $\theta^*\in \mathbb{R}^d$ with $\|\theta^*\|_2\leq 1$. Let $\bar{\mathcal{F}}_i$ be the $\sigma$-field of $\{\phi_{i'},r_{i'}\}_{i'\leq i}$. Assume that $\mathbb{E}[\epsilon_{i}|\bar{\mathcal{F}}_{i-1}]=0$ and $\mathbb{E}[\epsilon_i^2 |\bar{\mathcal{F}}_{i-1}]=\sigma^2_i$ for any $i\geq 1$. 
Let $\{\Theta_i\}_{i=1}^n$ to be the confidence region 
for the true parameter $\theta^*$  in Line~\ref{line:confball} Algorithm~\ref{alg:voful}  with input as $\{ \phi_i   \}_{i=1}^n$ and $\{r_i\}_{i=1}^n$. 
It then holds that with probability $1-10 n\delta$:  
(\romannumeral1) $\theta_r\in \Theta_i$ for any $i\in [n]$; (\romannumeral2) For any sequence $0= i_1 <i_2<\dots < i_z = n$, it holds that
$\sum_{l=1}^{z-1}\min\left\{\sum_{i=i_l+1}^{i_{l+1}}(\max_{\theta\in \Theta_{i_l+1}}\phi_i^{\top}(\theta - \theta^*) ,1 \right\}  = \tilde{O}(d^{5.5}\sqrt{\sum_{i=1}^n \sigma_i^2}+d^{6.5}). $
\end{lemma}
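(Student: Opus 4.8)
The plan is to derive both claims from the VOFUL machinery of \cite{zhang2021variance} (refined in \cite{kim2022improved}), reproduced as Algorithm~\ref{alg:voful} in Appendix~\ref{app:voful}, and then to add the one ingredient that is specific to Lemma~\ref{assum1}: the passage from the ordinary cumulative-width bound to the \emph{segmented} bound phrased in terms of the stale confidence sets $\Theta_{i_l+1}$.

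\textbf{Part (i) (validity).} Recall that $\Theta_i$ is an intersection of $O(d\log n)$ constraints, one for each coordinate direction $e$ and each dyadic ``level'' $j$, of the form $\big|\sum_{i'\in\Psi_j}w_{i'}\,(r_{i'}-\langle\phi_{i'},\theta\rangle)\,\langle\phi_{i'},e\rangle\big|\le\mathrm{bonus}_j$, where the bucket $\Psi_j$ and weights $w_{i'}$ are $\bar{\mathcal F}_{i'-1}$-measurable. Plugging in $\theta=\theta^*$ turns $r_{i'}-\langle\phi_{i'},\theta^*\rangle$ into $\epsilon_{i'}$, so each constraint becomes a two-sided tail bound on the martingale $i\mapsto\sum_{i'\le i,\ i'\in\Psi_j}w_{i'}\epsilon_{i'}\langle\phi_{i'},e\rangle$, whose predictable quadratic variation is $\sum w_{i'}^2\sigma_{i'}^2\langle\phi_{i'},e\rangle^2$ and whose increments are bounded by the truncation built into the buckets. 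I would apply the Freedman-type bound of Lemma~\ref{lemma:freeman_zhang} to each such martingale and take a union bound over the $O(d\log n)$ constraints and over $i\in[n]$; this is exactly the validity argument of \cite{zhang2021variance} and yields $\theta^*\in\Theta_i$ for all $i\in[n]$ with probability at least $1-O(n\delta)$.

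\textbf{Part (ii) (segmented width bound).} I would first recall the un-segmented VOFUL width bound: writing $\Lambda_i:=\lambda\mathbf I+\sum_{i'<i}\phi_{i'}\phi_{i'}^{\top}/\hat\sigma_{i'}^2$ with $\hat\sigma_{i'}^2$ the algorithm's internal variance proxies, the level-counting argument of \cite{zhang2021variance} shows each width $\max_{\theta\in\Theta_i}\langle\phi_i,\theta-\theta^*\rangle$ is, up to a $\mathrm{poly}(d)$ factor, of the ellipsoidal form $\|\phi_i\|_{\Lambda_i^{-1}}$, while a self-bounding argument gives $\sum_i\hat\sigma_i^2=\tilde O(\sum_i\sigma_i^2+\mathrm{poly}(d))$ since $\hat\sigma_{i'}^2-\sigma_{i'}^2$ is $O(\mathrm{width})$; Cauchy--Schwarz together with the elliptical potential lemma then reproduce $\tilde O(d^{5.5}\sqrt{\sum_i\sigma_i^2}+d^{6.5})$ in the case $i_l=l-1$, $z=n+1$. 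For the general statement I would replace $\Lambda_i$ by the stale matrix $\Lambda_{i_l+1}$ on the block $i\in(i_l,i_{l+1}]$ and run the segmented bookkeeping exactly as in Lemma~\ref{lemma:epl}: on blocks with $\Lambda_{i_{l+1}}\preccurlyeq 2\Lambda_{i_l+1}$ staleness costs only a constant factor, the number of blocks on which this fails is $O(d\log(n/\lambda))$ by the determinant argument, and on those ``bad'' blocks the contribution is at most $1$ by the $\min\{\cdot,1\}$ truncation. Summing over the $z-1$ blocks gives the claim, with the $\mathrm{poly}(d)$ powers tracking through to the stated $d^{5.5}$ and $d^{6.5}$.

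\textbf{Main obstacle.} The delicate point is the interaction between the segmentation and the fact that VOFUL's confidence set is not literally an ellipsoid: its width is only comparable to $\|\phi_i\|_{\Lambda_i^{-1}}$ after the bucketing/level-counting step, so that step must be re-executed in such a way that the stale-matrix substitution $\Lambda_i\mapsto\Lambda_{i_l+1}$ stays consistent with the weights $\hat\sigma_{i'}$ used inside the buckets, while the $\mathrm{poly}(d)$ powers are prevented from inflating past $d^{5.5}$ and the variance self-bounding $\sum_i\hat\sigma_i^2=\tilde O(\sum_i\sigma_i^2+\mathrm{poly}(d))$ still closes when widths are measured against the stale sets. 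A more routine task is to track the normalization ($\|\phi_i\|_2,\ |r_i|,\ \|\theta^*\|_2\le 1$) through VOFUL so the right-hand side is governed exactly by $\sigma_i^2=\mathbb E[\epsilon_i^2\mymid\bar{\mathcal F}_{i-1}]$.
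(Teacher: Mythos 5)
Your part~(i) is fine in outline (the paper simply cites Lemma~18 of \cite{zhang2021variance}), and your part~(ii) has the right skeleton: bound the number of ``bad'' segments, lose a controlled factor on the good ones, and reduce to the unsegmented cumulative-width bound (Lemma~\ref{lemma:rebound}). But the mechanism you propose for the segmentation step has a genuine gap. VOFUL's confidence region is cut out by \emph{clipped} linear constraints indexed by a net of directions $u\in\mathcal{B}'$ and by clip levels $j$; it is not an ellipsoid, and there is no matrix $\Lambda_i=\lambda\mathbf{I}+\sum_{i'<i}\phi_{i'}\phi_{i'}^{\top}/\hat\sigma_{i'}^2$ whose determinant you can double to count bad blocks, nor does the width have the form $\beta\|\phi_i\|_{\Lambda_i^{-1}}$ that would make ``$\Lambda_{i_{l+1}}\preccurlyeq 2\Lambda_{i_l+1}\Rightarrow$ constant-factor staleness cost'' meaningful. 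You flag this as the ``main obstacle'' but do not resolve it, and resolving it is the entire content of the segmented bound. The paper instead works directly with the functionals $\sum_{i}\mathrm{clip}_j^2(\phi_i^{\top}u)+\ell_j^2$: it declares a segment $l$ bad if one of these grows by a factor $4(d+2)^2$ across the segment for some $(j,u)$, and invokes the convex-function pigeonhole lemma (Lemma~\ref{lemma:ax1}, i.e.\ Lemma~14 of \cite{zhang2021variance}) — not a determinant argument — to get $|\mathcal{L}|=O(d\log^3 n)$ bad segments.

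A second, quantitative symptom of the same gap: on good segments the paper pays a factor $(d+2)$ (not a constant) to replace the stale width $\max_{\theta\in\Theta_{i_l+1}}\phi_i^{\top}(\theta-\theta^*)$ by the fresh one, precisely because the growth threshold that Lemma~\ref{lemma:ax1} tolerates is $4(d+2)^2$ rather than $2$. This $(d+2)$ is exactly what lifts Lemma~\ref{lemma:rebound}'s $\tilde O(d^{4.5}\sqrt{\sum_i\sigma_i^2}+d^{5})$ to the stated $\tilde O(d^{5.5}\sqrt{\sum_i\sigma_i^2}+d^{6.5})$. Your claim that ``staleness costs only a constant factor'' on good blocks would, if it held, give $d^{4.5}$ — a sign that the doubling criterion you have in mind is not the one the clipped confidence sets actually support. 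To repair the proof you should drop the ellipsoidal surrogate entirely, define the bad set $\mathcal{L}$ via the clipped sums as above, apply Lemma~\ref{lemma:ax1} to count $|\mathcal{L}|$, bound each bad segment's contribution by $1$ via the $\min\{\cdot,1\}$ truncation, and on good segments pay the $(d+2)$ factor before invoking Lemma~\ref{lemma:rebound}.
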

\begin{proof}
Now we proof Lemma~\ref{assum1}. Firstly, (\romannumeral1) holds by the lemma below.
\begin{lemma}[Lemma 18 in \cite{zhang2021variance}]
With probability $1-3\log(n)\delta, \theta^* \in \Theta_i$ for any $i\in [n]$.
\end{lemma}

To verify (\romannumeral2), we define 
\begin{align}
\mathcal{L}:=\left\{l\in [z]|\exists j\in [L_2], u\in \mathcal{B'}, \sum_{i=1}^{i_{l+1}}\mathrm{clip}^2_j(\phi_i^{\top}u)+l_j^2\geq 4(d+2)^2 \sum_{i=1}^{i_{l}}\mathrm{clip}^2_j(\phi_i^{\top}u)+l_j^2  \right\}.\nonumber
\end{align}

With Lemma~\ref{lemma:ax1} below, we have that $|\mathcal{L}|\leq O(d\log^3(n))$.

\begin{lemma}\label{lemma:ax1}[Lemma 14 in \cite{zhang2021variance}] Let $f$ be a convex function.
Let $\phi_{1},\phi_{2},\ldots,\phi_{t} \in \mathcal{B}$ be a sequence of vectors. If there exists a sequence $0 = \tau_0<\tau_1<\tau_2<\ldots<\tau_z = t$ such that for each $1\leq  \zeta \leq z$, there exists $\mu_{\zeta}\in \mathcal{B}$ such that
	\begin{align}
	\sum_{i=1}^{\tau_{\zeta}}f(\phi_{i}\mu_{\zeta})+\ell^2 >4(d+2)^2\times \left( \sum_{i=1}^{\tau_{\zeta-1}}f(\phi_{i}\mu_{\zeta}) +\ell^2 \right),\label{eq:mul1}
	\end{align}
	then $z\leq O(d\log^2(dt/\ell))$.
\end{lemma}

Then we have that
\begin{align}
 & \sum_{l=1}^{z-1}\min\left\{\sum_{i=i_l+1}^{i_{l+1}}(\max_{\theta\in \Theta_{i_l+1}}\phi_i^{\top}(\theta - \theta^*) ,1 \right\} \nonumber
 \\ & \leq   \sum_{l\notin\mathcal{L}}\sum_{i=i_{l}+1}^{i_{l+1}}  \max_{\theta\in \Theta_{i_{l}+1}} (\phi_i)^{\top} (\theta-\theta^*) + O(d\log^3(n))\nonumber
 \\ & \leq  (d+2) \sum_{l\notin\mathcal{L}}\sum_{i=i_{l}+1}^{i_{l+1}}  \max_{\theta\in \Theta_{i_{l+1}+1}} (\phi_i)^{\top} (\theta-\theta^*) + O(d\log^3(n))
 \label{eq:exp4}
\\ & \leq (d+2) \sum_{i=1}^{n}  \max_{\theta\in \Theta_{i}} (\phi_i)^{\top} (\theta-\theta^*) + O(d\log^3(n))
 \\ & \leq \tilde{O}\left( d^{5.5}\sqrt{\sum_{i=1}^n\sigma_i^2 } +d^6 \right).\nonumber
\end{align}
Here \eqref{eq:exp4} is by the definition of $\mathcal{L}$, and the last inequality holds because Lemma 19 in \cite{zhang2021variance}, which states that
\begin{lemma}\label{lemma:rebound}[Lemma 19 in \cite{zhang2021variance}] Recall the definition of $\{\Theta_i\}_{i=1}^n$ in Algorithm~\ref{alg:voful}. 
With probability $1-5\log(n)\delta$, it holds that 
\begin{align}
\sum_{i=1}^n  \max_{\theta\in \Theta_i}\phi_i^{\top} (\theta - \theta^*) = \tilde{O}\left(d^{4.5}\sqrt{\sum_{i=1}^{n}\sigma_i^2}+d^5\right).\nonumber
\end{align}
\end{lemma}

The proof is completed.

\end{proof}
\setlength{\textfloatsep}{0.1cm}
\setlength{\floatsep}{0.1cm}
\begin{algorithm}[t]
	\caption{VOFUL: \textbf{V}ariance-Aware \textbf{O}ptimism in the \textbf{F}ace of \textbf{U}ncertainty for \textbf{L}inear Bandits}
	\begin{algorithmic}[1] \label{alg:voful}
 \STATE{\textbf{Input :} $\{\phi_k\}_{k=1}^n$, $\{r_i\}_{k=1}^n$}
		\STATE { \hspace{0ex}\textbf{Initialize:}  $L_2 = \lceil \log_2 n \rceil$,  $\ell_{j} = 2^{2-j} \forall 1\leq j \leq L_2+1, \iota = 16d \ln \frac{dn}{\delta},\Lambda_2 = \{1,2,\ldots,L_2 + 1\}$, $\Theta_1 =\mathcal{B}(2), $ Let $\mathcal{B}'$ be an $n^{-3}$-net of $\mathcal{B}$ with size not larger than $(\frac{4}{n})^{3d}$ \label{line:bandit_init}}
		\STATE{\textbf{Construct Confidence Set:}  
			\STATE For each $\theta \in \mathcal{B}$, define $\epsilon_k(\theta) = r_k - (\phi_k)^{\top} \theta, \eta_k(\theta) = (\epsilon_k(\theta))^2$. \label{line:bandit_var_est}}
		\STATE{Define confidence set $\Theta_{k+1} = \left(\bigcap_{j \in \Lambda_2}\Theta^{j}_{k+1}\right)\cap \Theta_{k},$ where
%		\vspace{-1em}
			\begin{align}
			\hspace{-2em}\Theta^{j}_{k+1} = \bigg\{ \theta \in \mathcal{B} ~ &:  \abs{\sum_{v = 1}^{k} \mathrm{clip}_j((\phi_{v})^{\top} \mu) \epsilon_{v}(\theta)}  \leq \sqrt{ \sum_{v = 1}^{k} \mathrm{clip}_j^2((\phi_{v})^{\top} u)\eta_{v}(\theta) \iota} + \ell_{j} \iota, \forall \mu \in \mathcal{B}'\bigg\}
  \end{align} 
			and $\mathrm{clip}_j(\cdot) = \mathrm{clip}(\cdot, \ell_j) $, $\mathrm{clip}(u,l) = \min\{|u|,l\}\frac{u}{|u|}$. \label{line:confball}
}
	\end{algorithmic}
\end{algorithm}

\end{document}